\newcommand{\papertitle}{{Faster Convergence with Multiway Preferences}}
\newtheorem{thm}{Theorem}%[section]
\newtheorem{lem}[thm]{Lemma}
\newtheorem{rem}{Remark}
\newcommand{\bign}[1]{\big(#1\big)}
\newcommand{\biggn}[1]{\bigg(#1\bigg)}
\newcommand{\Bign}[1]{\Big(#1\Big)}
\newcommand{\R}{{\mathbb R}}
\newcommand{\N}{{\mathbb N}}
\renewcommand{\P}{{\mathbf P}}
\newcommand{\E}{{\mathbf E}}
\newcommand{\I}{{\mathbf I}}
\newcommand{\1}{{\mathbf 1}}
\newcommand{\0}{{\mathbf 0}}
\newcommand{\cB}{{\mathcal B}}
\newcommand{\cS}{{\mathcal S}}
\newcommand{\cI}{{\mathcal I}}
\newcommand{\cG}{{\mathcal G}}
\newcommand{\cN}{{\mathcal N}}
\newcommand{\cD}{{\mathcal D}}
\newcommand{\cR}{{\mathcal R}}
\newcommand{\cH}{{\mathcal H}}
\renewcommand{\S}{{\mathcal S}}
\newcommand{\tw}{\tilde \w}
\renewcommand{\b}{{\mathbf b}}
\renewcommand{\o}{{\mathbf o}}
\newcommand{\m}{\mathbf m}
\newcommand{\g}{{\mathbf g}}
\newcommand{\n}{{\mathbf n}}
\renewcommand{\u}{{\mathbf u}}
\renewcommand{\v}{{\mathbf v}}
\newcommand{\w}{{\mathbf w}}
\newcommand{\x}{{\mathbf x}}
\newcommand{\y}{{\mathbf y}}
\newcommand{\z}{{\mathbf z}}
\renewcommand{\ref}{\cref}
\newcommand{\probsgn}{\emph{Sign-Feedback-Optimization}}
\newcommand{\probll}{\emph{Batched Sign-Feedback-Optimization}}
\newcommand{\probarg}{\emph{Battling-Feedback-Optimization}}
\newcommand{\nabf}{{\nabla f}}
\newcommand{\sign}{\text{sign}}
\newcommand{\dotp}{\boldsymbol\cdot}
\newcommand{\tr}{{^\top}}
\newcommand{\red}[1]{\textcolor{red}{\bf {#1}}}
\title{\papertitle}%\\in the Plackett-Luce model} 
\author{
Aadirupa Saha%
\thanks{Apple; {\tt aadirupa@apple.com}.}
\and 
Vitaly Feldman
\thanks{Apple; {\tt vitalyf@apple.com}.}
\and
Tomer Koren%
\thanks{Tel Aviv University and Google Tel Aviv; {\tt tkoren@tauex.tau.ac.il}.} 
\and Yishay Mansour%
\thanks{Tel Aviv University and Google Tel Aviv; {\tt mansour.yishay@gmail.com}.}
}
\date{}
\begin{document}

\maketitle

%\vspace{-10pt}

\begin{abstract}
We address the problem of convex optimization with preference feedback, where the goal is to minimize a convex function given a weaker form of comparison queries. Each query consists of two points and the dueling feedback returns a (noisy) single-bit binary comparison of the function values of the two queried points. Here we consider the sign-function-based comparison feedback model and analyze the convergence rates with batched and multiway (argmin of a set queried points) comparisons. Our main goal is to understand the improved convergence rates owing to parallelization in sign-feedback-based optimization problems. Our work is the first to study the problem of convex optimization with multiway preferences and analyze the optimal convergence rates.
%Towards this we first propose an improved convergence analysis for smooth convex functions with sign feedback which gives an improved convergence guarantee compared to the state-of-the-art result. Our convergence rate is anytime and does not require knowledge of the suboptimality gap ahead of time, unlike the prior works. 
Our first contribution lies in designing efficient algorithms with a convergence rate of $\smash{\widetilde O}(\frac{d}{\min\{m,d\} \epsilon})$ for $m$-batched preference feedback where the learner can query $m$-pairs in parallel. We next study a $m$-multiway comparison (`battling') feedback, where the learner can get to see the argmin feedback of $m$-subset of queried points and show a convergence rate of $\smash{\widetilde O}(\frac{d}{  \min\{\log m,d\}\epsilon })$. We show further improved convergence rates with an additional assumption of strong convexity. Finally, we also study the convergence lower bounds for batched preferences and multiway feedback optimization showing the optimality of our convergence rates w.r.t. $m$.
\end{abstract}

%%%%%%%%%%%%%%%%%%%%%%%%%%%%%%%

\vspace{-13pt}
\section{Introduction}
\label{sec:intro}
\vspace{-5pt}
\iffalse%%%%%%%%%%%%%%%%%%%%%
- Learning from human feedback is on rise and preference is a strong form of that

- Subsets make more sense, as was studied in some literatures

- But results are negative

- But does that make sense?

- We evaluated $\log m$ improvement 

- Towards this we first prove parallel queries
\fi%%%%%%%%%%%%%%%%%%%%

Studying the problem of convex optimization presents a unique opportunity to delve deep into a practical field of vast applications and make a lasting impact in both academia and industry.
Most commonly, convex optimization is studied in a first-order gradient oracle model, where the optimization algorithm may query gradients of the objective function; or a more limited model of zero-order oracle access, where the optimization algorithm may only query function values. Such optimization frameworks are well-studied in the literature~\citep[see, e.g.,][]{nesterovbook,hazanbook,bubeckbook}.

One major limitation of the above optimization frameworks lies in the feedback model: In many practical applications, obtaining complete gradient information or even access to a function value (zeroth-order) oracle could be difficult. E.g. in recommender systems, online shopping, search engine data, the only data available to the learning algorithm specifies only the preference feedback of their users across multiple choices. Similar problems may arise in other real-world domains including crowd-sourcing surveys, drug testing, tournament ranking, social surveys, etc.  

\textbf{Learning with Preference Feedback.} The above line of problems gave rise to a new field of research where the system aims to optimize its performance based on only choice data or relative preferences across multiple items. The problem has been widely studied in the bandit community as Dueling Bandit problems which is an online sequential game where at each round the learner (algorithm) repeatedly selects a pair of items to be
compared to each other in a ``duel,'' and consequently observes a binary stochastic preference feedback of the winning item in this duel
\citep{Ailon+14,DTS,sui2018advancements}. The goal of the learner is to identify the `best item' with the least possible number of pairwise queries. 

\textbf{Need of Optimization Algorithms with Preference Feedback. } The classical problem of dueling bandits, although received wide attention in the learning theory community, most of the studies were limited to finite decision space, which allowed the learner to examine each item one-by-one before identifying the best one \cite{Busa21survey}. This again becomes unrealistic for large-scale real-world problems where decision spaces could be extremely large. Precisely, the literature lacks optimization methods based on preference feedback, where the decision set could be potentially infinite. 

\textbf{Related Works and Limitations. } Two earlier works that address the problem of convex optimization with preference feedback are \cite{Yue+09,Jamieson12}. However, the first work yields suboptimal convergence bounds and \cite{Jamieson12} only deals with strongly convex and smooth functions.   
Another recent work by \cite{SKM21} addresses the problem of convex optimization with pairwise preference setting which is closest to our framework: The setting assumes an underlying (convex) loss function $f:\R^d \mapsto R$ and at each round the learner can see the relative ordering of the function values at the two queried points: Precisely, upon querying a duel $(\x_t,\y_t)$ at round $t$, the learner can observe $\sign(f(\x_t) -f(\y_t))$, and the objective of the learner is to find a `near-minimizer' of $f$ as fast as possible. 
Despite their setup having an interesting angle of optimization with preference feedback, their feedback model is limited to only pairwise/ dueling queries. 

\vspace{-1pt}
\textbf{Motivation of Our Work: Optimization with Multiway Preferences. } While pairwise comparison feedback is perhaps the simplest to model and analyze, in most real-world systems, users get to make a choice from a set of options, be that in online marketplaces, Youtube recommendations, Google maps, restaurant selection and many more. This raises a natural question about the relative power of multiway comparison feedback in terms of the query complexity. Further, in many settings, it is not feasible to update the model's predictions after every comparison feedback provided by the user, for example, due to communication delays. Instead, the system can ask a number of comparison queries in parallel and then update its state (and generate the next set of queries). In such settings, it is natural to ask how many such rounds of $m$ queries would be necessary to identify the (approximate) minimizer?

%While our final goal is to identify the minimum point in the least number of query complexity (as fast as possible), why don't we consider a multiway preference model? In fact, this is a more reasonable problem formulation from the applications point of view. Almost in every real-world systems, users get to make a choice from a set of options, be that in online marketplaces, Youtube recommendations, Google maps, restaurant selection and many more. 
%
%\vspace{-5pt}
%\begin{center}
%Why not consider a multiway preference feedback model which is more realistic, user-friendly? Are they budget-friendly too?     
%\end{center}
%
\vspace{-1pt}

\textbf{Some Negative Results on Multiway Preferences in Bandits Literature: }
The setting of multiway preferences was studied as a generalization of the dueling bandit framework, however for finite decision spaces and for a very type of Multinomial Logit (or Plackett Luce) based preference model \cite{SGwin18,SGrank18,Ren+18}. However, their specific feedback model was not able to exploit the power of multiway queries, precisely, they show multiway feedback may not yield faster convergence results, even for finite decision space settings (when $\cD$ is finite). 

But in this work, we answer the above questions in the affirmative and studied two specific types of multiway preference models which can indeed yield faster convergence rates with larger strength of multiway queries $(m)$. \emph{It is important to note that our results do not contradict the negative results with Multinomial Logit (MNL) models \citep{SGrank18,ChenSoda+17,Ren+18,SGpac20} as we use a different `argmin' based preference model as opposed to the MNL model. One of our main strengths lies in identifying such a subsetwise preference feedback model which could exploit the strength of multiway preferences.} The noisy-winner feedback in the MNL model increases the variability which nullifies the strengths of querying larger subsets \cite{SGwin18,ChenSoda+18}, but our proposed algorithms show how to exploit the latter with our multiway preference models (\cref{sec:batch,sec:argmin}). To the best of our knowledge, our work is the first to study the problem of convex optimization (on infinite decision space) with multiway preferences and analyze the optimal convergence rates. %The reason behind our positive results lies in the choice of our feedback model, which

%On the other hand, from the point of view of optimization literature, the main reason of lack of techniques in the comparison-based optimization framework is that almost all traditional optimization algorithms require the knowledge (magnitude and direction) of function gradients at the queried points (or at least a noisy estimate of that). However, the gradient-magnitude information is \emph{impossible to obtain} from a binary $0-1$ preference feedback, and thus, the problem is arguably harder than the conventional bandit feedback-based optimization framework.  

%- improved bounds is not owning to var reduction but a grad estimate

%- battling setup no improvement but \citep{Shamir15} \citep{Agarwal+10}.
%- no-noise gives $O(1/log)$ improvement..
\vspace{-2pt}
\paragraph{Our contributions.}  The specific contributions of our are listed below:
%\vspace{-10pt}
\begin{enumerate}[leftmargin=0pt, itemindent=10pt, labelwidth=5pt, labelsep=5pt,itemsep=2pt,topsep=0pt]
	\item Our first contribution is to propose two multiway preference feedback models for optimizing fixed convex functions $f:\cD \mapsto \R$: (1) \probll: In this setting, the learner can get to query a subset of $m \leq d$ distinct pair of points and receives the $\sign$ (or comparison) feedback of each queried pairs. This can be seen as a batched preference feedback model where the learner can simultaneously query $m$ pairs of duels (2) \probarg: Unlike the previous model, in this case, the learner gets to query a subset of $m$ points in the decision space $\cD$ and only gets to see the minimizer of the queried set of $m$-points. We called this as \emph{Battling (or Multiwise Winner)} feedback model, inspired from \cite{SG18} (see \cref{sec:problem} for details).

	\item We first consider the batched feedback model. Assuming $f$ is $\beta$-smooth we apply an `aggregated normalized gradient descent' based routine that is shown to yield $ O\Big(\frac{d\beta D}{\epsilon \min\{m,d\}} \Big)$ convergence rate for finding an $\epsilon$-optimal point in $\cD$ (\cref{alg:batch}, \cref{thm:batch}). Following this we also propose an `epochwise warm start with smooth optimization blackbox' idea to yield a faster convergence rate of $O\Big( \frac{d \beta}{\min\{m,d\}\alpha}\log_2\big(\frac{\alpha}{\epsilon}\big) \Big)$ with an additional assumption of strong convexity on $f$ (\cref{alg:batch_strng}, \cref{thm:batch_strng}). We also suggest how to deal with noisy preference feedback for these settings in \cref{rem:batch_noise} (\cref{sec:batch}).
	
	\item In \cref{sec:argmin}, we propose optimization algorithms for the \probarg \, (with Multiwise-winner) problem. In this case, we first design a novel convergence routine (\cref{alg:argmin}) that yields $ O\Big(\frac{d\beta D}{\min\{\log m,d\}\epsilon } \Big)$ convergence rate for the class of smooth convex functions (\cref{thm:argmin}). The key novelty in the algorithm lies in querying structured $m$ subsets at each round, exploiting which we show how to extract $\log m$ distinct pairwise sign feedback and again use an aggregated normalized gradient descent method to yield the desired convergence bound. Following this we also show a faster $ O\Big( \frac{d \beta}{\min\{\log m,d\}\alpha }\log_2\big(\frac{\alpha}{\epsilon}\big)\Big)$ convergence rate with strong convexity in \cref{thm:argmin_strng}. We also remark on how to deal with noisy feedback models in this case as well.
 
	\item Finally we show matching convergence lower bounds for both the multiway preference feedback models (resp. in \cref{thm:lb_batch} \cref{thm:lb_argmin}), which shows our dependencies on the multiway parameter $m$ are indeed optimal, i.e. our algorithms are able to exploit the strength of multiway queries $(m)$ optimally. 
        \item We provide empirical evaluations to corroborate our theoretical findings in \cref{sec:expts}.
	
	\item Finally another minor contribution lies in dealing with bounded decision space throughout, unlike \cite{SKM21} which assumes the decision space to be unbounded (\cref{sec:single}).
\end{enumerate}

%%%%%%%%%%%%%%%%%%%%%%%%%%%

\section{Preliminaries and Problem Statement}
\label{sec:problem}
\vspace{-6pt}

\textbf{Notation.}  Let $[n] = \{1, \ldots n\}$, for any $n \in \N$. 
Given a set $S$ and two items $x,y \in S$, we denote by $x \succ y$ the event $x$ is preferred over $y$. 
For any $r > 0$, let $\cB_d(r)$ and $\cS_d(r)$ denote the ball and the surface of the sphere of radius $r$ in $d$ dimensions respectively.
%Lower case bold letters denote vectors, upper case bold letters denote matrices.
%For any set $\cX \subseteq \R^d$, volume of $\cX$ is defined as $\textbf{Vol}(\cX): = \int_{\cX} d\x$.
%We use $\bU(\cX)$ to denote the uniform distribution over $\cX$. 
%$\P_{\cX,\|\cdot\|}(\x)$ denotes the nearest point projection of a point $\x \in \R^d$ on to set $\cX \subseteq \R^d$ with respect to norm $\|\cdot\|$, i.e. $\P_\cX(\x):= \arg \min_{\z \in \cX}\|\x - \z\|$.
$\I_d$ denotes the $d \times d$ identity matrix. 
For any vector $\x \in \R^d$, $\|\x\|_2$ denotes the $\ell_2$ norm of vector $\x$. 
%For any matrix $\bA \in \R^{m\times n}$, $\|\bA\|_2$ denotes the frobenius norm of matrix $\bA$.
$\1(\varphi)$ is generically used to denote an indicator variable that takes the value $1$ if the predicate $\varphi$ is true and $0$ otherwise. 
$\sign(x) = +1$ if $x \ge 0$ or $-1$ otherwise, $\forall x \in \R$. $\text{Unif}(S)$ denotes a uniform distribution over any set $S$. 
 We write $\tilde{O}$ for the big O notation up to logarithmic factors.

%\red{remove regret, introduce sigmpoid assumptions} We made two type of sigmoids --- (i) macro - diff (ii) macro - non-diff

\subsection{Problem Setup.} 
We address the convex optimization problem with binary-valued $\sign$ preference feedback: Assume $f: \cD \mapsto \R$ be any convex map defined on a convex set $\cD \subseteq \R^d$. At every iteration $t$, the goal of the learner is to pick a pair of points $(\x_t,\y_t)$, upon which it gets to see a binary $0-1$ bit noisy comparison feedback $o_t$ s.t.:
\vspace{-5pt}
\[
Pr[o_t = \sign(f(\x_t) - f(\y_t))] = 1 - \nu,
\] 
\vspace{-1pt}
where $\nu \in (0,1/2]$ is the (unknown) noise-parameter, $\nu = 0$ corresponds to pure $\sign$ feeedback (without any noise).

We consider the following two generalizations of $\sign$-feedback considered in \cite{SKM21}.

\vspace{2pt}
\textbf{Batched-Sign Feedback: } In this setting, at any round $t$, the learner can query $m$-parallel (batched) pair of points $\{(\x_t^i,\y_t^i)\}_{i = 1}^m$ and gets to see the $\sign$ feedback for each pair of points, i.e. the learner receive $m$-bits of $\sign$ feedback $\{o_t^i\}_{i = 1}^m$ such that 
$
Pr[o_t^i = \sign(f(\x_t^1) - f(\y_t^i))] = 1 - \nu, ~ i \in [m]. 
$

\vspace{2pt}
\textbf{Battling (Multiwise-Winner) Feedback: } In this setting, at any round $t$, the learner can query a set $S_t$ of $m$ points $S_t = (\x_t^1,\x_t^2,\ldots,\x_t^m)$ and gets to see the $\arg\min$ feedback of the $m$-points -- i.e., the learner receive only $1$-bit of $\arg\min$ feedback $\{o_t \in [m]\}$ such that: 
$
Pr[o_t = \arg\min(f(\x_t^1), f(\x_t^2), \ldots, f(\x_t^m))] = 1 - \nu. 
$

\vspace{3pt}
\textbf{Objective.} We consider the objective of minimizing the function sub-optimality gap: So if $\x_{T+1}$ is the point suggested by the algorithm after $T$ rounds, then the goal is to 
\[
\min_{\x \in \cD} \bign{\E[f(\x_{T+1})] - f(\x^*)},
\]
with least number of queries $(T)$ possible.

\iffalse%%%%%%%%%%%%%
(2) One may also consider the high-probability version of the problem with the standard $(\epsilon,\delta)$-PAC objective, where given any fixed choice of $\epsilon,\delta \in (0,1)$, the objective of the learner is to find a decision point $\x \in \R^d$ with minimum possible pairwise-query complexity $\{(\x_t,\y_t)_{t = 1}^T\}$, such that the final point $\x$ must satisfy:
\vspace{-5pt}
\[
Pr( f(\x) - f(\x^*) < \epsilon) \ge 1-\delta. 
\vspace{-5pt}
\]
%where $f(x) := \sum_{t = 1}^T \frac{\E_{f_t \sim \cP}[f_t(x)]}{T}, ~ \forall \x \in \R^d$ denotes the average function value, and $\x^*:= \arg\min_{\z \in \R^d} f(z)$ denotes the minimizer of $\bar f$.
\fi %%%%%%%%%%%%%%%%%

%\input{./Arxiv/alg_single.tex}
%\input{./Arxiv/analysis_single.tex}

%\input{./Arxiv/alg_single_val.tex}

%\newpage

\section{Batched Dueling Convex Optimization with Sign Feedback}
\label{sec:batch}

%Our algorithm tries to generalize the \red{todo}
We first analyze the \probll\, problem, where at each iteration the learner can query $m$-parallel (batched) pair of points $\{(\x_t^i,\y_t^i)\}_{i = 1}^m$ and gets to see the $\sign$ feedback for each pair of points, $\{o_t^i\}_{i = 1}^m$ such that 
$
Pr[o_t^i = \sign(f(\x_t^1) - f(\y_t^i))] = 1 - \nu, ~ i \in [m], ~(m \leq d). 
$
We present two algorithms for this setup, respectively for smooth and strongly convex optimization settings and show an $O(1/m)$ improved convergence rate in both settings compared to the single pair $\sign$-feedback setting addressed in \citep{SK21} (see \cref{thm:batch} and \cref{thm:batch_strng}). 
We analyze the above algorithms for the noiseless setting (i.e. $\nu = 0$), but \cref{rem:batch_noise} discusses how they can be easily extended to the noisy $\sign$-feedback setup for any arbitrary $\nu \in [0,0.5)$. 
Following this we also prove a convergence lower bound for the batched feedback setting which shows our $1/m$ rate of improvement with the batch size $m$ is indeed optimal (see \cref{thm:lb_batch}).

\subsection{Proposed Algorithm: Batched-NGD} 
\label{sec:alg_batch}
The main idea in this setup is to estimate gradient directions (normalized gradient estimates) in $m$ different directions and take an aggregated descent step. Formally, at each round $t$, we can query $m$ iid random unit directions, say $\u_t^1,\ldots,\u_t^m \overset{\text{iid}}{\sim} \text{Unif}(\cS_d(1))$, and find the normalized gradient estimates $g_t^i = o_t^i\u_t^i$ along each direction, where $o_t^i = \sign\bign{f(\x_t^i) - f(\y_t^i)}$ is the sign feedback of the $i$-th pair of queried duel $(x_t^i,y_t^i)$. Subsequently, we update the running prediction as $\w_{t+1} \leftarrow \w_t - \eta \g_t$, where $\g_t = \frac{1}{m}\sum_{i = 1}^m\g_t^i$ denotes the aggregated normalized gradient estimate. The algorithm also maintains a running minimum $\m_t$ which essentially keeps track of $\min\{\w_1,\ldots,\w_t\}$\footnote{Interested readers can check the analysis of Projected-Normalized Gradient Descent algorithms for single $\sign$-Feedback in \cref{sec:single}. This is unlike the version studied in the literature which considered unconstrained optimization (i.e. $\cD = \R^d$) \citep{SKM21}, although the analysis is quite similar, except we have to account for the projection step additionally. This also leads to simpler tuning of the perturbation parameter $\gamma$ in our case.}. The complete algorithm is given in \cref{alg:batch}.
%\vspace{-10pt}

\vspace{-10pt}
\begin{center}
	\begin{algorithm}[h]
		\caption{\textbf{Batched-NGD (B-NGD)}} 
		\label{alg:batch}
		\begin{algorithmic}[1] 
			\STATE {\bfseries Input:} Initial point: $\w_1 \in \cD$, Learning rate $\eta$, Perturbation parameter $ \gamma$, Query budget $T$ (depends on error tolerance $\epsilon$), Batch-size $m$ 
% 			\\(Recall the desired error tolerance is $\epsilon > 0$) %
			%\STATE {\bf Initialize} Current minimum $\tilde \w_1 \in \R^d$
                \STATE {\bf Initialize} Current minimum $\m_1 = \w_1$
			\FOR{$t = 1,2,3,\ldots, T$}
			\STATE Sample $\u_t^1,\u_t^2,\ldots\u_t^m \sim \text{Unif}(\cS_d(1))$ %\text{unif}\{\e_1,\ldots,\e_d\}
			\STATE Set  $\x_{t}^{i} := \w_t +  \gamma \u_t^i$,~ 
			            $\y_{t}^{i} := \w_t -  \gamma \u_t^i$
			\STATE Play the duel $(\x_{t}^{i},\y_t^{i})$, and observe $o_t^i \in {\pm 1}$ such that $o_t^i = \sign\big( f(\x_{t}^{i}) - f(\y_{t}^{i}) \big)$. %\tk{we can be more general - we don't need it to be Bernoulli, we just need the expectation to be correct}
			%\STATE Set $o_t' = 2o_t -1$.
			\STATE Update $\tw_{t+1} \leftarrow \w_t - \eta \g_t$, 
			where $\g_t = \frac{1}{m}\sum_{i=1}^m\g_t^i$, $\g_t^i = o_t^i \u_t^i$
			\STATE Project $\w_{t+1} = \arg\min_{\w \in \cD}\norm{\w - \tw_{t+1}}$
			\STATE Query the pair $(\m_{t}, \w_{t+1})$ and receive $\sign\bign{ f(\m_{t}) - f(\w_{t+1}) }$. 
			\STATE Update $\m_{t+1} \leftarrow 
			\begin{cases}
				\m_{t} ~\text{ if } \sign\bign{ f(\m_{t}) - f(\w_{t+1}) }<0\\
				\w_{t+1} ~\text{ otherwise }  
			\end{cases}$
			\ENDFOR   
			\STATE Return $\m_{T+1}$
		\end{algorithmic}
	\end{algorithm}
\end{center}
\vspace{-10pt}

\begin{restatable}[Convergence Analysis of \cref{alg:batch} for $\beta$-Smooth Functions]{thm}{thmbatch}
\label{thm:batch}
Consider $f$ to be $\beta$ smooth. Suppose \cref{alg:batch} is run with $\eta = \frac{m\sqrt{\epsilon}}{20\sqrt{ d \beta}}$,  $\gamma = \frac{ \epsilon^{3/2}}{960 \beta d\sqrt d D^2\ \sqrt{\log 480}}\sqrt{\frac{2}{\beta}}$ and $T_\epsilon =  O\Big(\frac{d\beta D}{m \epsilon} \Big)$, where $\|\w_{1} - \x^*\|^2 \leq D$ (is an assumed  known upper bound). Then \cref{alg:batch} returns $\E[f(\tw_{T+1})] - f(\x^*) \le \epsilon$ with sample complexity $2T_\epsilon$, for any $m \leq d$. 
\end{restatable}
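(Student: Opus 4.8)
The plan is to analyze \textit{Batched-NGD} as a perturbed normalized gradient descent where the batching only affects the \emph{variance} of the gradient estimate, not its bias. The core object is the single-direction estimate $\g_t^i = o_t^i\u_t^i$ with $o_t^i = \sign(f(\w_t+\gamma\u_t^i)-f(\w_t-\gamma\u_t^i))$. First I would show that, conditioned on $\w_t$, the expectation $\E[\g_t^i \mid \w_t]$ points roughly along $\nabla f(\w_t)/\|\nabla f(\w_t)\|$; concretely, using smoothness to control the error between $\sign(f(\w_t+\gamma\u)-f(\w_t-\gamma\u))$ and $\sign(\langle\nabla f(\w_t),\u\rangle)$, one gets $\langle \E[\g_t^i\mid\w_t], \frac{\nabla f(\w_t)}{\|\nabla f(\w_t)\|}\rangle \ge c_d - (\text{bias term in }\gamma,\beta,d)$, where $c_d = \Theta(1/\sqrt d)$ is the standard expected inner product of a random unit vector with a fixed unit vector (this is precisely the single-pair analysis in \cref{sec:single}, which I may cite). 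The perturbation parameter $\gamma$ is chosen exactly so this bias term is dominated by $c_d$ up to the target accuracy.

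Next I would set up the descent inequality. Writing $\bar\g_t = \frac1m\sum_i \g_t^i$ and using $\|\bar\g_t\|\le 1$ (convex combination of unit vectors), the projection step is non-expansive toward $\x^*$, so
\[
\|\w_{t+1}-\x^*\|^2 \le \|\w_t-\x^*\|^2 - 2\eta\langle \bar\g_t, \w_t-\x^*\rangle + \eta^2.
\]
Taking expectations, the cross term splits into a ``signal'' part $2\eta\,\E\langle \E[\bar\g_t\mid\w_t],\w_t-\x^*\rangle$ and I would lower bound $\langle \E[\g_t^i\mid\w_t],\w_t-\x^*\rangle$ by roughly $\frac{c_d}{\|\nabla f(\w_t)\|}\langle\nabla f(\w_t),\w_t-\x^*\rangle - (\text{bias})\cdot D \ge c_d\frac{f(\w_t)-f(\x^*)}{\|\nabla f(\w_t)\|} - (\text{bias})\cdot\sqrt D$ by convexity. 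Since $f$ is $\beta$-smooth on a bounded domain, $\|\nabla f(\w_t)\|$ is bounded by some $G = O(\beta D\text{-ish quantity})$, so the per-step progress is at least $\propto \eta c_d (f(\w_t)-f(\x^*))/G - \eta\cdot(\text{bias})\sqrt D - \eta^2$. Here the batch size $m$ has not yet entered — it enters because averaging $m$ i.i.d.\ estimates does \emph{not} change the conditional mean but lets us take a larger step $\eta = \Theta(m\sqrt\epsilon/\sqrt{d\beta})$ while keeping $\eta^2$ (the ``variance/quadratic'' penalty) in check relative to the accumulated signal; summing the telescoped inequality over $t=1,\dots,T$ and dividing, the $\eta^2 T$ term and the $D$ term balance to give an average suboptimality $\frac1T\sum_t(f(\w_t)-f(\x^*)) = O(\frac{d\beta D}{m T} + \epsilon\text{-order terms})$, which is $\le \epsilon$ once $T = O(d\beta D/(m\epsilon))$.

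Finally, the algorithm returns the running minimum $\m_{T+1}$, not an average iterate, so I would argue that the comparison queries $(\m_t,\w_{t+1})$ — which are noiseless here — ensure $f(\m_{T+1}) = \min_{t\le T+1} f(\w_t) \le \frac1T\sum_t f(\w_t)$, converting the average-regret bound into a bound on $\E[f(\m_{T+1})]-f(\x^*)$; the extra $T_\epsilon$ queries for this bookkeeping account for the ``sample complexity $2T_\epsilon$'' in the statement. I expect the main obstacle to be the \textbf{bias control of the sign estimator}: showing that, with the stated $\gamma$, the event $\sign(f(\w_t+\gamma\u)-f(\w_t-\gamma\u)) \neq \sign(\langle\nabla f(\w_t),\u\rangle)$ has small enough probability (in $\u$) that its contribution to $\langle\E[\g_t^i\mid\w_t],\w_t-\x^*\rangle$ is genuinely lower-order — this requires an anti-concentration estimate for $\langle\nabla f(\w_t),\u\rangle$ near $0$ combined with the $O(\beta\gamma^2)$ Taylor remainder, and it is what pins down the somewhat baroque choice of $\gamma$. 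The batching itself and the telescoping are routine once the single-direction bias/magnitude estimates are in hand.
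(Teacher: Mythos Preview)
Your overall architecture (bias control of the sign estimator via smoothness plus anti-concentration, then a telescoping distance-to-optimum inequality) matches the paper. But there is a genuine gap in how the batch size $m$ enters.

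You write the per-step inequality as
\[
\|\w_{t+1}-\x^*\|^2 \le \|\w_t-\x^*\|^2 - 2\eta\langle \bar\g_t,\w_t-\x^*\rangle + \eta^2,
\]
justifying the $\eta^2$ by the deterministic bound $\|\bar\g_t\|\le 1$. You then claim the $m$-speedup comes from choosing $\eta = \Theta(m\sqrt{\epsilon}/\sqrt{d\beta})$ ``while keeping $\eta^2$ in check.'' But with your inequality this does not balance: the signal term scales like $\eta/\sqrt d \propto m$, whereas the penalty $\eta^2 \propto m^2$, so for $m$ beyond a constant the right-hand side \emph{increases} and no improvement over $m=1$ is obtained. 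The missing ingredient is that the paper does \emph{not} bound $\|\bar\g_t\|^2$ by $1$; it expands $\|\bar\g_t\|^2 = \frac{1}{m^2}\big(\sum_i\|\g_t^i\|^2 + \sum_{i\neq j}{\g_t^i}^\top\g_t^j\big)$ and uses independence of the $\u_t^i$'s together with the normalized-gradient lemma to show $\E[{\g_t^i}^\top\g_t^j\mid\cH_t]\le \tfrac{1}{d}+4\lambda_t$. This yields $\E[\|\bar\g_t\|^2\mid\cH_t] \le \tfrac{1}{m}+\tfrac{m-1}{md}+O(\lambda_t) = O(1/m)$ for $m\le d$, so the quadratic penalty is effectively $\eta^2/m$ rather than $\eta^2$. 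That factor of $1/m$ is exactly the claimed speedup, and it is the step that makes $\eta\propto m$ admissible.

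A secondary difference: for the cross term you lower-bound $\langle\E[\g_t^i],\w_t-\x^*\rangle$ via $\tfrac{c_d}{\|\nabla f(\w_t)\|}\langle\nabla f(\w_t),\w_t-\x^*\rangle$ and then \emph{upper}-bound $\|\nabla f(\w_t)\|$. The paper instead proves directly (\cref{lem:cvxgrad}, Claim 2) that whenever $f(\w_t)-f(\x^*)>\epsilon$ one has $\n_t^\top(\w_t-\x^*)\ge\sqrt{2\epsilon/\beta}$, using $\beta$-smoothness around $\x^*$; this avoids needing a global gradient bound and gives the clean $\sqrt{\epsilon/\beta}$ term that drives the stated choice of $\eta$. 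Your route can likely be made to work but will not reproduce the same constants, and you would need to justify a uniform upper bound on $\|\nabla f(\w_t)\|$ along the trajectory (the paper's $D$ only bounds the initial squared distance).
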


\begin{proof}[Proof Sketch of \cref{thm:batch}]
We start by noting that by definition: 
$
\norm{\w_{t+1} - \x^*}^2 \leq \norm{\tilde \w_{t+1} - \x^*}^2  =  \norm{\w_{t} -\frac{\eta}{m}\sum_{i = 1}^m\g_t^i - \x^*}^2,
$
where the first inequality holds since projection reduces the distance to optimal $\x^*$. This further leads to
\begin{align*}
& {m^2}\norm{\w_{t+1} - \x^*}^2  
 = m(\norm{\w_{t}-\x^*}^2 + \eta^2)  -2\eta\sum_{i = 1}^m(\w_t - \x^*)^\top\g_t^i - 2\eta \sum_{i = 1}^{m-1}\sum_{j = i+1}^m \bign{\w_t-\x^*}^\top(\g_t^i + \g_t^j) \\
& \hspace{1.5in}  + 2\frac{m(m-1)}{2}\norm{\w_{t}-\x^*}^2 + 2\eta \sum_{i = 1}^{m-1}\sum_{j = i+1}^m {\g_t^i}^\top\g_t^j.
%& = m^2(\norm{\w_{t}-\x^*}^2 + \eta^2) + 2\eta m \sum_{i = 1}^m\bign{\w_t-\x^*}^\top\g_t^i.
\end{align*}
Let us denote by $n_t = \frac{\nabla f(\w_t)}{\norm{\nabla f(\w_t)}}$ the normalized gradient at point $\w_t$. 
Also let $\cH_t$ the history $\{\w_\tau,U_\tau,\o_\tau\}_{\tau = 1}^{t-1} \cup \w_{t}$ till time $t$ and  $U_t:=\{\u_t^1,\ldots,\u_t^m\}$. Then \emph{one important observation is that} the estimated gradients are nearly independent (their inner products are small):
More precisely, for any $i \neq j$, since $\u_t^i$ and $\u_t^j$ are independent, from \cref{thm:normgrad} we get: 
\begin{align*}
    \E_{U_t}&[{\g_t^i}^\top\g_t^j \mid \cH_t] = \E_{\u_i}[{\g_t^i}^\top \E_{\u_j}[\g_t^j \mid \u_t^i ]\mid \cH_t]
    \leq \frac{1}{\sqrt d}\biggn{\frac{n_t^\top n_t}{\sqrt d}} + 4\lambda_t = \frac{1}{d} + 4\lambda_t,
\end{align*}
where recall from \cref{thm:normgrad} and \cref{lem:biasgrad}, $\lambda_t \leq \frac{\beta \gamma \sqrt{d}}{\norm{\nabla f(\x)}}\bigg( 1 + 2\sqrt{\log \frac{\norm{\nabla f(\x)}}{\sqrt d \beta \gamma}} \bigg)$. Combining this with the main equation, and further applying \cref{thm:normgrad}, with a bit of algebra one can get:
\begin{align*}
 {m^2}\E_{U_t}&[\|\w_{t+1} - \x^*\|^2 \mid \cH_t] = m(\norm{\w_{t}-\x^*}^2 + \eta^2) 
 - 2\eta \sum_{i = 1}^{m-1}\sum_{j = i+1}^m \bign{\w_t-\x^*}^\top\E_{U_t}[(\g_t^i + \g_t^j) \mid \cH_t] 
\\
& + 2\frac{m(m-1)}{2}\norm{\w_{t}-\x^*}^2 - 2\eta\sum_{i = 1}^m(\w_t - \x^*)^\top\g_t^i
 + 2\eta^2 \sum_{i = 1}^{m-1}\sum_{j = i+1}^m \E_{U_t}[{\g_t^i}^\top\g_t^j \mid \cH_t]
\\
& = m^2\norm{\w_{t}-\x^*}^2 + \eta^2(m + \frac{m(m-1)}{d}) - 2\eta m^2[(\w_t-\x^*)^\top\n_t 
\\
& ~~~~~~~~~~~ + 4m^2\sqrt{d}\eta\norm{\w_t - \x^*} \lambda_t + 4m^2\sqrt{d}\eta\norm{\w_t - \x^*}\lambda_t.
\end{align*}
Further from Claim-2 of \cref{lem:cvxgrad}, and from the fact that $m<d$, we can derive:
\begin{align*}
&m^2\E_{\u_t}[\|\w_{t+1} - \x^*\|^2 \mid \cH_t] 
 \le m^2\|\w_{t} - \x^*\|^2 +  m^2(- 2\eta\frac{c\sqrt{2\epsilon}}{\sqrt {d\beta}} + 8\eta\lambda_t \sqrt{d}\|\w_{t} - \x^*\|)  + 2m\eta^2,
\end{align*}
and choosing $\gamma \le \frac{\|\nabf(\w_{t})\|}{960 \beta d\sqrt{d} \|\w_{t} - \x^*\| \sqrt{\log 480}}\sqrt{\frac{2 \epsilon}{\beta}}$, we get:
\begin{align*}
\E_{\cH_t}[\E_{\u_t}[\|\w_{t+1} - \x^*\|^2] \mid \cH_t] &\le \|\w_{t} - \x^*\|^2  - \frac{\eta\sqrt{2\epsilon}}{10\sqrt{d \beta}} + \frac{\eta\sqrt{2\epsilon}}{20\sqrt{d \beta}} + \frac{2\eta^2}{m}.
\end{align*}		
One possible choice of $\gamma$ is $\gamma = \frac{ \epsilon^{3/2}}{960 \beta d\sqrt{d} D^2\ \sqrt{\log 480}}\sqrt{\frac{2}{\beta}}$ (since $\norm{\nabla f(\x)} \geq \frac{\epsilon}{D}$ for any $\x$ s.t. $f(\x)-f(\x^*) > \epsilon$ by \cref{lem:gradf_lb}). Then following from above, we further get: 
\begin{align*}
& \E_{\cH_t}[\E_{\u_t}[\|\w_{t+1} - \x^*\|^2 \mid \cH_t]] \le \|\w_{t} - \x^*\|^2 - \frac{ (\sqrt 2 -1)m\epsilon }{400 d\beta} ~~\Big(\text{setting } \eta = \frac{m\sqrt{\epsilon}}{20\sqrt{ d \beta}}\Big)\\
		\overset{}{\implies} & \E_{\cH_T}[\|\w_{T+1} - \x^*\|^2] \le \|\w_{1} - \x^*\|^2 - \frac{  (\sqrt 2 -1)m\epsilon T}{400 d\beta}, \big(\text{summing } t = 1, \ldots T \big).
\end{align*}

Above implies, if indeed $f(\w_{\tau}) - f(\x^*) > \epsilon$ continues to hold for all $\tau = 1,2, \ldots T$, then $\E[\|\w_{T+1} - \x^*\|^2] \le 0$, for $T \ge  \frac{400 md\beta}{(\sqrt 2 -1)\epsilon}(\|\w_{1} - \x^*\|^2)$, which basically implies $\w_{T+1} = \x^*$ (i.e. $f(\w_{T+1}) = f(\x^*)$). Otherwise there must have been a time $t \in [T]$ such that $f(\w_{t}) - f(\x^*) < \epsilon$. 
The complete proof is given in \cref{app:batch}.
\end{proof}
\subsection{Improved Convergence Rates with Strong Convexity} 
\label{sec:alg_batch_strng}
We now show how to obtain a better convergence rate with an additional assumption of $\alpha$-strong convexity on $f: \cD \mapsto \R$ by simply reusing any optimal optimization algorithm for $\beta$-smooth convex functions (and hence we can use our B-NGD \cref{alg:batch}, proposed earlier).
Our proposed method Improved Batched-NGD (Alg. \cref{alg:batch_strng}) adapts a phase-wise iterative optimization approach, where inside each phase we use B-NGD as a blackbox to locate a $\epsilon_k$-optimal point in that phase, say $\w_{k+1}$, with exponentially decaying $\epsilon_k = O(\frac{1}{2^{k-1}})$. We then warm start the B-NGD algorithm in the next phase from $\w_{k+1}$ and repeat -- the idea is adapted from the similar warm starting idea proposed by \cite{SKM21}. The method yields improved $O(\log \frac{1}{\epsilon})$ convergence due to the nice property of strong convexity where nearness in function values implies nearness from the optimal $\x^*$ in $\ell_2$-norm, unlike the case for only $\beta$-smooth functions (see \cref{lem:prop_alpha}). \cref{alg:batch_strng} gives the complete detail.

\vspace{-5pt}
\begin{center}
\begin{algorithm}[h]
\caption{\textbf{Improved Batched-NGD} with Strong Convexity (ImpB-NGD)}
\label{alg:batch_strng}
\begin{algorithmic}[1] 
\STATE {\bfseries Input:} Error tolerance $\epsilon > 0$, Batch size $m$%
\STATE {\bf Initialize} Initial point: $\w_1 \in \R^d$ such that $D := \|\w_1 - \x^*\|^2$ (assume known). 
\\ Phase counts $k_\epsilon:= \lceil \log_2\big(\frac{\alpha}{\epsilon}\big)  \rceil$, $t \leftarrow \frac{800 d \beta}{(\sqrt 2 -1)\alpha}$   %$k_\epsilon = \log\big(\frac{\beta\|\x_1 - \x^*\|^2}{2 \epsilon}\big)$
\\ $\eta_1 \leftarrow \frac{m\sqrt{\epsilon_1}}{20\sqrt{ d \beta}}, \epsilon_1 = \frac{400 d\beta D}{(\sqrt 2 -1)t_1} = 1$, $t_1 = t\|\w_1-\x^*\|^2$
\\ $ \gamma_1 \leftarrow \frac{ \epsilon_1^{3/2}}{960 \beta d\sqrt d D^2\ \sqrt{\log 480}}\sqrt{\frac{2}{\beta}}$, $\m_1 = \w_1$
\STATE Update $\w_{2} \leftarrow$ \textbf{B-NGD}$\big(\w_1,\eta_1, \gamma_1,t_1\big)$
\FOR{$k = 2,3,\ldots, k_\epsilon$}
\STATE $\eta_k \leftarrow \frac{m\sqrt{\epsilon_k}}{20\sqrt{ d \beta}}, \epsilon_k = \frac{400 d\beta}{(\sqrt 2 -1)t_k}$, $t_k = 2t$
\\$ \gamma_k \leftarrow \frac{ \epsilon_k^{3/2}}{960 \beta d\sqrt d D^2\ \sqrt{\log 480}}\sqrt{\frac{2}{\beta}}$.
\STATE Update $\w_{k+1} \leftarrow$ \textbf{B-NGD}$\big(\w_{k},\eta_k, \gamma_k,t_k,m\big)$
\ENDFOR   
\STATE Return $\m_{\epsilon} = \w_{k_\epsilon+1}$
\end{algorithmic}
\end{algorithm}
\end{center}
\vspace{-5pt}

\begin{restatable}[Convergence Analysis of \cref{alg:batch_strng} for $\alpha$-strongly convex and $\beta$-Smooth Functions]{thm}{thmbatchstrng}
	\label{thm:batch_strng}
	Consider $f$ to be $\alpha$-strongly convex and $\beta$-smooth.   
	Then \cref{alg:batch_strng} returns $\E[f(\m_\epsilon)] - f(\x^*) \le \epsilon$ with sample complexity $ O\Big( \frac{d \beta}{m\alpha}(\log_2\big(\frac{\alpha}{\epsilon}\big) + \|\x_1 - \x^*\|^2) \Big)$, for any $m\leq d$. 
\end{restatable}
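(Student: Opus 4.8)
The plan is a standard warm-restart (phase-doubling) reduction: treat \textbf{B-NGD} (\cref{alg:batch}) as a black box and invoke \cref{thm:batch} once per phase, chaining the guarantees. The single structural ingredient that strong convexity buys us is recorded in \cref{lem:prop_alpha}: for every $\x\in\cD$ one has $f(\x)-f(\x^*)\ge\frac{\alpha}{2}\|\x-\x^*\|^2$, so an $\epsilon'$-suboptimal point (in value) automatically satisfies $\|\x-\x^*\|^2\le 2\epsilon'/\alpha$. This is what lets the output of one phase serve as a \emph{provably closer} warm start for the next — something the smoothness-only bound of \cref{thm:batch} does not provide on its own, which is precisely why \cref{alg:batch} alone is stuck at the $O(1/\epsilon)$ rate.

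Concretely, I would prove by induction on the phase index $k$ that the point $\w_{k+1}$ returned by the $k$-th call to \textbf{B-NGD} satisfies $\E[f(\w_{k+1})]-f(\x^*)\le\epsilon_k$, where $\epsilon_1=\Theta(\alpha)$ and $\epsilon_k=\epsilon_1\,2^{-(k-1)}$ halves each phase. The base case $k=1$ is \cref{thm:batch} with the known bound $D=\|\w_1-\x^*\|^2$ and budget $t_1=\Theta\!\big(\tfrac{d\beta D}{m\epsilon_1}\big)=\Theta\!\big(\tfrac{d\beta}{m\alpha}\|\w_1-\x^*\|^2\big)$. For the step, the inductive hypothesis plus the strong-convexity fact give $\E\|\w_k-\x^*\|^2\le 2\epsilon_{k-1}/\alpha=:D_k$, and running \textbf{B-NGD} in phase $k$ with radius bound $D_k$, target $\epsilon_k=\epsilon_{k-1}/2$, and budget $t_k=\Theta\!\big(\tfrac{d\beta D_k}{m\epsilon_k}\big)$ yields $\E[f(\w_{k+1})]-f(\x^*)\le\epsilon_k$ by \cref{thm:batch}. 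The crucial observation is $D_k/\epsilon_k = 4/\alpha$, a constant, so $t_k=\Theta\!\big(\tfrac{d\beta}{m\alpha}\big)$ is the same for every $k\ge2$ — which the algorithm fixes as the common value $t_k=2t$. Choosing $k_\epsilon=\lceil\log_2(\alpha/\epsilon)\rceil$ phases makes $\epsilon_{k_\epsilon}\le\epsilon$, so $\E[f(\m_\epsilon)]-f(\x^*)=\E[f(\w_{k_\epsilon+1})]-f(\x^*)\le\epsilon$; and summing the budgets — one ``expensive'' first phase of $\Theta\!\big(\tfrac{d\beta}{m\alpha}\|\w_1-\x^*\|^2\big)$ rounds and $k_\epsilon-1$ further phases of $\Theta\!\big(\tfrac{d\beta}{m\alpha}\big)$ rounds each, at a constant ($\le 2\times$) query cost per round — gives the claimed $O\!\big(\tfrac{d\beta}{m\alpha}(\log_2(\alpha/\epsilon)+\|\x_1-\x^*\|^2)\big)$ sample complexity.

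The one genuinely delicate point is that \cref{thm:batch} is stated with a \emph{deterministic, known} bound $D$ on the initial distance, whereas the warm-start point $\w_k$ of phase $k\ge2$ is random. I would handle this by the tower rule rather than a fresh worst-case bound per phase: re-reading the recursion in the proof of \cref{thm:batch} shows that, conditioned on \emph{any} realization of the starting point $\w_k$, running \textbf{B-NGD} for $t_k$ rounds gives $\E[f(\w_{k+1})-f(\x^*)\mid\w_k]\le \tfrac{c\,d\beta\,\|\w_k-\x^*\|^2}{m\,t_k}$ (either the $t_k$ steps drive the expected squared distance to zero, or the running minimum has already caught a point meeting this target — modulo a lower-order term absorbed using that $\cD$ is bounded and hence $f-f(\x^*)$ is uniformly bounded). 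Taking the outer expectation and using $\E\|\w_k-\x^*\|^2\le D_k$ then closes the induction by linearity, with no union bound and no $\delta$-dependence; a high-probability variant with a union bound over the $k_\epsilon$ phases is also possible but messier. The remaining work — extracting this ``anytime'' conditional form from \cref{thm:batch} and tracking constants so that $\epsilon_k$ halves exactly while $t_k$ stays fixed — is routine.
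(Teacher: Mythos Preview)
Your proposal is correct and follows essentially the same warm-restart reduction as the paper's proof: invoke \cref{thm:batch} as a black box in each phase, use strong convexity (\cref{lem:prop_alpha}) to convert the function-value guarantee into a squared-distance bound for the next warm start, observe that the per-phase budget is $\Theta(d\beta/(m\alpha))$ after the first phase, and sum over $k_\epsilon=\lceil\log_2(\alpha/\epsilon)\rceil$ phases. The only cosmetic difference is that the paper carries the induction on the (conditional) expected squared distance $\E[\|\w_{k+1}-\x^*\|^2\mid\cH_{k-1}]\le \tfrac12\|\w_k-\x^*\|^2$ rather than on $\E[f(\w_{k+1})]-f(\x^*)$, but the two are interchangeable via \cref{lem:prop_alpha}. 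You are also right to flag the random-warm-start subtlety and to resolve it via the tower rule applied to the conditional form of the recursion from the proof of \cref{thm:batch}; the paper does exactly this (writing the bound conditioned on $\cH_{k-1}$), just less explicitly.
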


Due to space constraints, the proof is moved to \cref{app:batch_strng}.

\begin{rem}\textbf{\emph{[Noisy Feedback $\nu \in (0,1/2))$].}}
\label{rem:batch_noise}
Note \cref{alg:batch} and \cref{alg:batch_strng} (and consequently \cref{thm:batch} and \cref{thm:batch_strng}) work only for the noiseless feedback setting, when $\nu = 0$. However, it is easy to extend the above two algorithms for the noisy $\sign$-feedback setting (for any $\nu \in (0,0.5)$) by the resampling trick proposed in \cite{SKM21}: Precisely, the idea is to query any pair of point $(x_t,y_t)$ for $O(\frac{1}{\nu^2})$ times to recover the true $\sign$ feedback $\sign(f(\x_t) - f(\y_t))$ with high confidence, and rest of the algorithm remains as is. Clearly, this would lead to the convergence bounds of $O\Big(\frac{d\beta D}{m (0.5-\nu)^2 \epsilon} \Big)$ and $ O\Big( \frac{d \beta}{m\alpha (0.5-\nu)^2}(\log_2\big(\frac{\alpha}{\epsilon}\big) \Big)$ respectively for settings of \cref{thm:batch} and \cref{thm:batch_strng}, where the additional $O(1/(0.5-\nu)^2)$-multiplicative factor is accounted for resampling of every pairwise query in \cref{alg:batch} and \cref{alg:batch_strng}.
\end{rem}

\subsection{Lower Bound for \probll} 
\label{sec:lb_batch}

In this section, we show the convergence lower bounds for the  \probll\, problem. \cref{thm:lb_batch} shows indeed our $1/m$ rate of improvement with $m$-batch size is optimal.
%\vspace{-3pt}

\begin{restatable}[Convergence Lower Bound: $m$-\probll \, Problem]{thm}{lbbatch}
\label{thm:lb_batch}
Assume the noiseless setting $\nu = 0$ and $f: \cD \mapsto R$ be any smooth and strongly convex function. Then the $\epsilon$-convergence bound for any algorithm for the $m$-\probll\, problem is at least $\Omega(\frac{d}{m} \log \frac{1}{\epsilon})$.
\end{restatable}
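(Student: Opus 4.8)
The plan is a standard packing / information-theoretic argument, generalizing the $m=1$ lower bound to account for the $m$ bits revealed per round. I would fix $\cD=\cB_d(1)$ and, for $\v$ ranging over a finite set $V\subseteq\cB_d(1/2)$ to be chosen, take the quadratics $f_\v(\x):=\tfrac12\|\x-\v\|_2^2$. Each $f_\v$ is $1$-strongly convex and $1$-smooth with unique minimizer $\x^*=\v$, and $f_\v(\x)-f_\v(\v)=\tfrac12\|\x-\v\|_2^2$, so $\x$ is $\epsilon$-suboptimal for $f_\v$ exactly when $\|\x-\v\|_2\le\sqrt{2\epsilon}$. I would let $V$ be a maximal $3\sqrt{2\epsilon}$-separated subset of $\cB_d(1/2)$ (for $\epsilon$ below an absolute constant, so this is nontrivial); maximality together with a volume comparison gives $|V|\ge(1/(6\sqrt{2\epsilon}))^d$, hence $\log_2|V|=\Omega(d\log(1/\epsilon))$, while the separation guarantees that the balls $\{\x:\|\x-\v\|_2\le\sqrt{2\epsilon}\}$, $\v\in V$, are pairwise disjoint and contained in $\cD$; in particular any fixed point is $\epsilon$-suboptimal for at most one $f_\v$.

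The structural observation that makes the family hard is that a $\sign$-feedback bit on $\{f_\v\}$ is exactly a halfspace-membership query about the unknown $\v$: for any pair $(\x,\y)\in\cD^2$ we have $f_\v(\x)-f_\v(\y)=\tfrac12(\|\x\|_2^2-\|\y\|_2^2)-\langle\v,\x-\y\rangle$, so the observed bit $\sign(f_\v(\x)-f_\v(\y))$ only reveals on which side of an affine hyperplane $\v$ lies, no matter how cleverly $(\x,\y)$ is chosen. Consequently, in each of the $T$ rounds the learner collects at most $m$ comparison bits (at most $O(m)$ if one also allows the auxiliary comparison used in \cref{alg:batch}), i.e.\ at most $O(mT)$ bits of information about $\v$ over the whole run.

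To finish, I would apply Yao's principle and lower-bound the number of rounds used by the best deterministic algorithm against $\v\sim\text{Unif}(V)$. For a deterministic algorithm, every query and the returned point $\x_{T+1}$ are a function of the $O(mT)$ observed bits, so $\x_{T+1}$ takes at most $2^{O(mT)}$ distinct values; by the disjointness above, each such value is $\epsilon$-suboptimal for at most one $\v\in V$, whence the probability (over $\v\sim\text{Unif}(V)$) that $f_\v(\x_{T+1})-f_\v(\v)\le\epsilon$ is at most $2^{O(mT)}/|V|$. If the algorithm guaranteed $\E[f_\v(\x_{T+1})]-f_\v(\v)\le\epsilon/2$ for every $\v$, then it would in particular guarantee this on average over $V$, and Markov's inequality would force the above probability to be at least $1/2$; hence $2^{O(mT)}\ge|V|/2$, i.e.\ $O(mT)\ge\log_2|V|-1=\Omega(d\log(1/\epsilon))$, i.e.\ $T=\Omega(\tfrac dm\log\tfrac1\epsilon)$. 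Taking the worst $\v\in V$ yields a single hard pair $(\cD,f)$, and rescaling $\epsilon$ by a constant absorbs the $\epsilon/2$.

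I do not anticipate a real obstacle; the two points demanding care are (i) making the adaptivity step airtight — namely that, for a deterministic algorithm, the returned point is a measurable function of the bit-transcript alone, whose range therefore has size $2^{O(mT)}$ irrespective of how adaptively queries are formed — and (ii) correctly passing from the in-expectation convergence guarantee to a constant-probability PAC statement via Markov and reducing randomized to deterministic algorithms via Yao. One should also observe that restricting queries to $\cD$ and insisting on bona fide strongly-convex-and-smooth objectives (the $f_\v$ above, with $\alpha=\beta=1$; general constants merely rescale $\epsilon$) does not weaken the count, since a halfspace query issued from inside $\cD$ still carries only one bit about $\v$.
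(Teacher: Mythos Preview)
Your proposal is correct and follows essentially the same route as the paper: quadratic instances $f_\v(\x)=\tfrac12\|\x-\v\|^2$ with $\v$ ranging over an $\epsilon$-scale packing of the unit ball, the observation that each $\sign$-feedback is a single halfspace bit about $\v$, and the resulting $\Omega(d\log(1/\epsilon))$ bit-count divided by the $m$ bits per round. Your write-up is in fact more careful than the paper's sketch in that you explicitly handle randomized algorithms via Yao and convert the in-expectation guarantee to a constant-probability one via Markov, whereas the paper leaves these steps implicit.
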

\vspace{-6pt}
\begin{proof}[Proof of \cref{thm:lb_batch}]
We first require to show $\Omega(d \log \frac{1}{\epsilon})$ convergence lower bound for the \probsgn\, problem for smooth and strongly convex functions, as we prove below \footnote{It is important to note that Theorem 1 of \citep{Jamieson12} claims to yield an $\Omega(d \log \frac{1}{\epsilon})$ lower bound for the same problem (for their setting $\kappa=1$). However, they still need to assume their noise parameter $\mu$, which is equivalent to $\mu = 1/2 - \nu$ in our case, satisfies $\mu \leq 1/2$, which is equivalent to assuming $\gamma > 0$ in our case. So their lower bound is information-theoretic owning to the noisy sign feedback, but not an optimization-based lower bound. Precisely, their lower bound does not apply in the noiseless setting $\mu = 1/2$ (or $\nu = 0$).}:

\begin{restatable}[Convergence Lower Bound for \probsgn \, Problem]{lem}{lbsingle}
\label{thm:lb_single}
Let $f: \cD \mapsto R$ be any smooth and strongly convex function. Then the $\epsilon$-convergence bound for any algorithm for the \probsgn\, problem is at least $\Omega(d \log \frac{1}{\epsilon})$.
\end{restatable}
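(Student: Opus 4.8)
\textbf{Proof plan for \cref{thm:lb_single}.}
The plan is to build a family of hard instances on which every algorithm must extract $\Omega(d\log\tfrac1\epsilon)$ bits, and then run a counting argument over transcripts. Fix any admissible pair of constants with (strong convexity) $\le$ (smoothness), take the domain $\cD = \cB_d(2)$, and for a center $\v\in\cB_d(1)$ define the quadratic $f_\v(\x) = \tfrac{\alpha}{2}\norm{\x-\v}^2$. Each $f_\v$ is $\alpha$-strongly convex and $\alpha$-smooth (hence smooth and strongly convex in the required sense), has unique minimizer $\v\in\cD$, and satisfies $f_\v(\x)-f_\v(\v) = \tfrac{\alpha}{2}\norm{\x-\v}^2$, so $\x$ is $\epsilon$-optimal for $f_\v$ iff $\norm{\x-\v}\le\sqrt{2\epsilon/\alpha}$. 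Let $\delta = c\sqrt{\epsilon/\alpha}$ for a suitable absolute constant $c$, and let $V\subseteq\cB_d(1)$ be a maximal $\delta$-separated subset. A standard volumetric packing bound (the $\delta$-balls centred at $V$ cover $\cB_d(1)$ by maximality) gives $|V|\ge\delta^{-d}$, hence $\log_2|V|\ge d\log_2(1/\delta) = \Omega\bign{d\log\tfrac1\epsilon}$ once $\epsilon$ is below a constant; and by the $\delta$-separation the $\epsilon$-optimal balls $\{\x:\norm{\x-\v}\le\sqrt{2\epsilon/\alpha}\}$, $\v\in V$, are pairwise disjoint.

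The second ingredient is that a sign query reveals only one bit, and in fact only a halfspace-membership bit about the unknown center: for any queried pair $(\x,\y)$,
\[
\sign\bign{f_\v(\x)-f_\v(\y)} \;=\; \sign\Bign{\tfrac{\alpha}{2}\bign{\norm{\x}^2-\norm{\y}^2} - \alpha\,\v\tr(\x-\y)},
\]
i.e. it only tells the learner on which side of a fixed affine hyperplane $\v$ lies. Consequently, run against $f_\v$ with $\v\in V$, a \emph{deterministic} algorithm that makes $T$ adaptive sign queries produces a transcript $b(\v)\in\{\pm1\}^T$, and both the transcript and the returned point $\x_{T+1}=\x_{T+1}(b(\v))$ take at most $2^T$ distinct values as $\v$ ranges over $V$ (the output is a function of the observed bits only). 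Since any fixed point lies in at most one of the disjoint $\epsilon$-optimal balls, at most $2^T$ of the centers can be served with an $\epsilon$-optimal output; so a worst-case guarantee $f_\v(\x_{T+1})-f_\v(\v)\le\epsilon$ for all $\v\in V$ forces $2^T\ge|V|$, i.e. $T\ge\log_2|V| = \Omega(d\log\tfrac1\epsilon)$.

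Finally, I would upgrade to randomized algorithms and to the in-expectation objective by Yao's principle. Place the uniform prior on $V$; a randomized algorithm is a distribution over deterministic ones, each making $\le T$ queries. For the in-expectation guarantee $\E[f_\v(\x_{T+1})]-f_\v(\v)\le\epsilon$ (which holds for every $\v\in V$, as each $f_\v$ is in the class), Markov's inequality gives that for each $\v$ the output is $2\epsilon$-optimal with probability at least $\tfrac12$; averaging over $V$ and over the internal randomness, some deterministic algorithm making $\le T$ queries outputs a $2\epsilon$-optimal point for at least a constant fraction of $V$. Re-running the transcript-counting argument above with $\delta$ enlarged by a constant factor (so the $2\epsilon$-balls are still pairwise disjoint) then forces $2^T = \Omega(|V|)$, hence $T = \Omega(d\log\tfrac1\epsilon)$.

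The information-theoretic core — $T$ binary answers distinguish at most $2^T$ worlds, whereas the packing set has $2^{\Omega(d\log(1/\epsilon))}$ mutually exclusive near-optimal regions — is entirely standard, so I do not expect a genuine obstacle. The only care needed is bookkeeping: choosing the separation constant so that the near-optimal balls are disjoint while keeping $\log_2|V|=\Omega(d\log\tfrac1\epsilon)$, checking that the quadratics lie in the prescribed smooth/strongly-convex class over a bounded $\cD$, and phrasing the transcript-counting step (the one place the single-bit $\sign$-feedback model is used) cleanly for adaptive and randomized algorithms.
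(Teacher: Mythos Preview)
Your proposal is correct and follows essentially the same approach as the paper: both build a packing of centers in the unit ball, use the quadratic $f_\v(\x)=\tfrac{\alpha}{2}\norm{\x-\v}^2$ so that each sign query reveals only a halfspace-membership bit about $\v$, and conclude via the information-theoretic count that $T$ queries distinguish at most $2^T$ of the $(1/\epsilon)^{\Omega(d)}$ candidates. Your version is in fact more carefully fleshed out---the paper states the halving/adversary argument informally and does not explicitly treat randomized algorithms or the in-expectation objective, whereas you handle both via Yao's principle and Markov's inequality.
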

To proof above, assume $\cD = \cB_d(1)$ is the unit ball in dimension $d$ and let $N(\cD, \epsilon, \norm{}_2)$ be the $\epsilon$ packing of $\cD$ w.r.t. $\ell_2$-norm \citep{wu2016packing}. Let $f(x) = \norm{\x - \x^*}_2^2$, and the adversary can select $\x^*$ arbitrarily as any $\x^* \in N(\cD, \epsilon, \norm{}_2)$. Then note any single pair of sign feedback can allow the learner to remove at most half of the point in decision space $\cD$, so after $t$ number of pairwise sign feedback, the adversary still has the choice to select $\x^*$ from $\frac{1}{2^t}\abs{N(\cD, \epsilon, \norm{}_2)} \leq \frac{1}{2^t}\bign{\frac{3}{\epsilon}}^d$ many numbers of points. This yields the desired $\epsilon$-convergence sample complexity lower bound of $\Omega(d \log \frac{1}{\epsilon})$, as the learner would need to make at least $t \geq d \log \frac{1}{\epsilon}$ many pairwise $\sign$ queries before the adversary would be left with atmost $\frac{1}{2^t}\Bign{\frac{3}{\epsilon}}^d \leq 1$ choice for $\x^*$.  

Having equipped with the lower bound of \cref{thm:lb_single}, this immediately implies the desired lower bound of \cref{thm:lb_batch} as one can hope to get an improved convergence bound of at most $1/m$-multiplicative factor, even when $m$ rounds are merged into a single round.  
\end{proof}

\begin{rem}
It is worth noting that the lower bound above assumes $f$ to be both strongly convex and smooth which yields a convergence lower bound of  $\Omega(d\log \frac{1}{\epsilon})$ in the first place. However, it remains an open problem if we can obtain $\Omega(\frac{d}{\epsilon})$ lower bound for the class of just smooth functions to match the upper bound of \cref{thm:batch} for only smooth convex functions (without strong convexity). 
\end{rem}
%\input{analysis_batch.tex}

%\newpage

\section{Battling (Multiwise-Winner) Convex Optimization with Sign Feedback}
\label{sec:argmin}

In this section, we investigate the \probarg\, problem. Recall from \cref{sec:problem}, in this case at each iteration $t$, the learner can query a set $S_t$ of $m$ points $S_t = (\x_t^1,\x_t^2,\ldots,\x_t^m)$ and gets to see the $\arg\min$ feedback of the $m$-points: $\{o_t \in [m]\}$ such that: 
\[
Pr[o_t = \arg\min(f(\x_t^1), f(\x_t^2), \ldots, f(\x_t^m))] = 1 - \nu. 
\]
As before, We considered the noiseless case $(\nu = 0)$ first, and present two algorithms for smooth and strongly convex optimization settings. The interesting fact is, in this case, we could only show $O(\frac{1}{\log m})$ improved convergence rate in this feedback model compared to the $O(\frac{1}{m})$ in the \probll\, setting (see \cref{thm:argmin} and \cref{thm:argmin_strng}). In fact, a more interesting fact is we also show that $O(\frac{1}{\log m})$ improvement is the best we can hope for in this feedback model, proving a matching convergence lower bound (see \cref{thm:lb_argmin}).

\subsection{Proposed Algorithm: Battling-NGD} 
\label{sec:alg_argmin}

\textbf{Useful Notations. } We denote by $V_n = \{(\pm 1)^n\}$, for any $n \in \N_+$. 
Clearly $\abs{V_n} = 2^n$. 
Let $\cG(V_)$ be the graph with vertex set $V_n \subseteq \{\pm 1\}^n$ and there exists an (undirected) edge between two nodes $\v$ and $\tilde \v$ iff $\v$ and $\tilde \v$ only differs sign in one of the $n$ coordinates, i.e. $\exists k \in [n], ~v(k) = \tilde v(k)$ and $v(k') = \tilde v(k')$ for any $k' \neq k$. 
Clearly the number of neighboring nodes of any vertex $\v \in V_n$ in graph $\cG$ is $\abs{\cN(\v,\cG)} = n$. In other words, the degree of any node in graph $\cG$ is $n$. We show an example for $n=3$ in the right figure. 
Also, let us define $\ell_m = \lfloor\log m\rfloor$. 

\textbf{Algorithm Description. } 
There are three novelties in our algorithmic idea: 
$(i).$ Structure of the query sets, $(ii). $ \emph{One-vs-All} feedback idea and $(iii).$ Extracting $\log m$ batched sign feedback. We explain them in more detail below.

\begin{minipage}[t]{0.67\linewidth}
\textbf{$(i).$ Structured Query Sets: }
As before, the algorithm maintains a current point $\w_t$ (initialized to $\w_1 \in \cD$). At each time $t$, it queries a set $S_t$ of $m$ points around $\w_t$ such that for every point $\x \in S_t$, there exists exactly $\ell_m$ neighboring points which are \emph{symmetrically opposite to $\x$ in exactly one of the realization of $\u_t^i$s}: More precisely, at each time $t$, the algorithm first samples $\ell_m$ vectors $\u_t^i \overset{\text{iid}}{\sim} \text{Unif}(\cS_d(\frac{1}{\sqrt{\ell_m}}))$ independently, $i \in [\ell_m]$. Let 
\end{minipage}
\hspace{-5pt}
\begin{minipage}[t]{0.33\linewidth}
  \vspace{-4ex} 
   \includegraphics[width=0.99\textwidth]{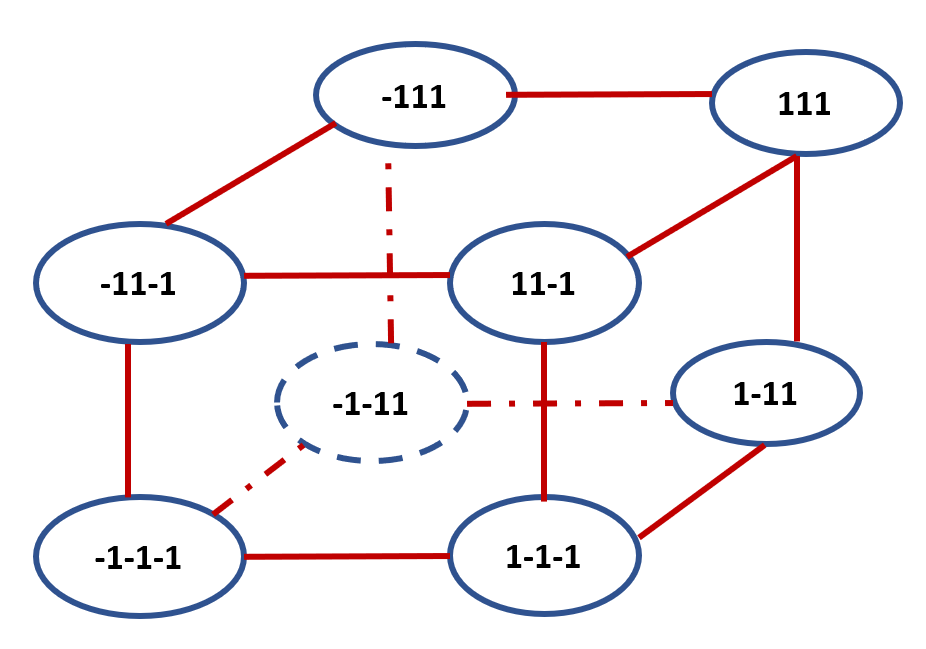}
   %\caption{}
\end{minipage}

\iffalse%%%%%%%%%%%%%%%%
\begin{figure}[!thp]
\centering
\frame{\includegraphics[width=0.47\textwidth]{}}
\hspace{8pt}
\frame{\includegraphics[width=0.47\textwidth]{placeholder.PNG}}
\caption{Lookahead and Traceback the winner}
\label{fig:lookahead1}
\end{figure}
\fi%%%%%%%%%%%%%%%%%%%%
$U_t = [\u_t^1,\ldots,\u_t^{\ell_m}] \in \R^{d \times \ell_m}$, 
  and define $S_t = \{\w_t + \gamma U_t\v \mid \v \in V_{\ell_m}\}$. Note that by construction indeed $S_t = 2^{\ell_m} \leq m$. Further, note for any point $\x = \w_t + \gamma U_t\v \in S_t$ there exists exactly $\ell_m$ symmetrically opposing points $\x'_i = \w_t + \gamma U_t\v'_i \in S_t$, for all $\v'_i \in \cN(\v,\cG)$ such that $\frac{(\x - \x'_i)}{2\gamma v_i} = \u_t^i, ~i \in [\ell_m]$. %Thus note $(\x - \x'_i)/2 \overset{\text{iid}}{\sim} \text{Unif}(\cS_d(\frac{1}{\sqrt{\ell_m}}))$. 
Given any such point $\x_\v:= \w_t + \gamma U_t\v$, let us denote by the set $\cN(\x_\v) = \{\w_t + \gamma U_t\v'_i \mid \v'_i \in \cN(\v,\cG)\}$ of all symmetrically opposing points of $\x$ in $S_t$ around $\w_t$ which differs in exactly one of the realization of $\u_t^i$s. This property will play a very crucial role in our analysis, as will see in the convergence proof of \cref{alg:argmin} (see proof of \cref{thm:argmin}).    
Upon constructing the set $S_t$, the algorithm queries the $m$-subset $S_t$ and receives the winner feedback $o_t \in [m]$. 

\vspace{6pt}
\textbf{$(ii).$ One-vs-All Feedback Idea: } Note by definition of the winner feedback model, $f(\x_t^{o_t}) < f(\x_t^i), ~\forall i \in [m], i \neq o_t$. Thus clearly, $\sign\bign{ f(\x_t^{o_t}) < f(\x_t^i)} = -1$. So one may essentially recover exactly $m-1$ pairwise $\sign$-feedback. 

\vspace{6pt}
\textbf{$(iii).$ Extracting $\log m$ Batched Sign Feedback: }
However, there are inherent dependencies among these pair of points and most of these extracted sign feedback are redundant. We precisely identify $O(\log m)$ specific winner-vs-loser pairs and use their pairwise sign feedback to obtain a normalized gradient estimate. Let us denote by $\x_t^{o_t} = \w_t + \gamma U_t\v$ for some $\v \in V_{\ell_m}$. Then we choose all the symmetrically opposing pairs $(\x_t^{o_t},\y_t^i)$ for all $\y_t^i \in \cN(\x_t^{o_t})$ (as described in $\#(i)$ above), and extract the corresponding $\ell_m$ pairwise sign feedback. The setting then can simply reduce back to the $O(\log m)$-batched sign feedback setting and one use the similar algorithmic idea of \cref{alg:batch}.

More precisely, the algorithm finds the $\ell_m$ normalized gradient estimates $g_t^i = o_t^i\v_t^i$ for all $i \in [\ell_m]$, where $o_t^i = \sign\bign{f(\x_t^{o_t}) - f(\y_t^i)} = -1$ is the sign feedback of the $i$-th (winner-loser) pair and $\v_t^i = v_i\u_t^i \sim \text{Unif}(\cS_d(\frac{1}{\sqrt{\ell_m}}))$ is the corresponding `scaled-unit' direction. Finally, we update the running prediction using the normalized gradient descent technique $\w_{t+1} \leftarrow \w_t - \eta \g_t$, using the aggregated descent direction $\g_t = \frac{1}{\ell_m}\sum_{i = 1}^{\ell_m}\g_t^i$. As before, we also maintain a running minimum $\m_t$ which keeps track of $\min\{\w_1,\ldots,\w_t\}$. The complete algorithm is given in \cref{alg:argmin}. 
\vspace{-10pt}
\begin{center}
	\begin{algorithm}[h]
		\caption{\textbf{Battling-NGD}} 
		\label{alg:argmin}
		\begin{algorithmic}[1] 
			\STATE {\bfseries Input:} Initial point: $\w_1 \in \cD$, Learning rate $\eta$, Perturbation parameter $ \gamma$, Query budget $T$ (depends on error tolerance $\epsilon$), Batch-size $m$. Define $\ell_m := \lfloor\log m\rfloor$ and $\tilde m := 2^{\ell_m} \leq m$. 
            \STATE {\bf Initialize} Current minimum $\m_1 = \w_1$
			\FOR{$t = 1,2,3,\ldots, T$}
			\STATE Sample $\u_t^1,\u_t^2,\ldots\u_t^{\ell_m} \overset{\text{iid}}{\sim} \text{Unif}(\cS_d(\frac{1}{\sqrt \ell_m}))$. Denote $U_t:= [\u_t^1,\ldots,\u_t^{\ell_m}] \in \R^{d \times \ell_m}$  %\text{unif}\{\e_1,\ldots,\e_d\}
                \STATE Define $S_t:= \{\w_t + \gamma U_t\v \mid \v \in V_{\ell_m}\}$ (see definition of $V_{\ell_m}$ in the description)
                \STATE Play the $m$-subset $S_t$ 
    \STATE Receive the winner feedback $o_t = \arg\min(f(\x_t^1), f(\x_t^2), \ldots, f(\x_t^{\tilde m}))$
   % \STATE Let  $\x^{o_t} = \w_t + \gamma U_t\v$ for some $\v \in V_{\ell_m}$			
    \STATE Update $\tw_{t+1} \leftarrow \w_t - \eta \g_t$, 
			where $\g_t = \frac{1}{\ell_m}\sum_{i=1}^{\ell_m}\g_t^i$, $\g_t^i = -v_i \u_t^i$
    \STATE Project $\w_{t+1} = \arg\min_{\w \in \cD}\norm{\w - \tw_{t+1}}$
   %%%%%%%%%%%%%%%%%%%
    \STATE Query the pair $(\m_{t}, \w_{t+1})$ and receive $\sign\bign{ f(\m_{t}) - f(\w_{t+1}) }$. 
    \STATE $\m_{t+1} \leftarrow 
			\begin{cases}
				\m_{t} ~\text{ if } \sign\bign{ f(\m_{t}) - f(\w_{t+1}) }<0\\
				\w_{t+1} ~\text{ otherwise }  
			\end{cases}$
    \ENDFOR   
   \STATE Return $\m_{T+1}$
		\end{algorithmic}
	\end{algorithm}
\end{center}
\vspace{-10pt}
\begin{restatable}[Convergence Analysis of \cref{alg:argmin} for $\beta$-Smooth Functions]{thm}{thmargmin}
\label{thm:argmin}
Consider $f$ to be $\beta$ smooth. Suppose Alg. \ref{alg:argmin} is run with $\eta = \frac{\ell_m\sqrt{\epsilon}}{20\sqrt{ d \beta}}$,  $ \gamma = \frac{ \epsilon^{3/2}}{960 \beta d\ell_m\sqrt {d \ell_m} D^2\ \sqrt{\log 480}}\sqrt{\frac{2}{\beta}}$ and $T_\epsilon =  O\Big(\frac{d\beta D}{\epsilon \ell_m} \Big)$, where $\|\w_{1} - \x^*\|^2 \leq D$, $\ell_m = \lfloor\log m\rfloor \leq d$. Then \cref{alg:argmin} returns $\E[f(\m_{T+1})] - f(\x^*) \le \epsilon$ with sample complexity $2T_\epsilon$. 
\end{restatable}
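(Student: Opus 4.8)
The plan is to reduce one round of battling feedback over the structured set $S_t$ to one round of the batched sign-feedback setting of \cref{thm:batch} with batch size $\ell_m = \lfloor\log m\rfloor$, and then rerun essentially the proof of \cref{thm:batch} with $m$ replaced by $\ell_m$ and with the perturbation directions scaled to $\cS_d(1/\sqrt{\ell_m})$. For Step~1 I would unpack the structure of $S_t$: let the winner be $\x_t^{o_t} = \w_t + \gamma U_t\v$ for some $\v \in V_{\ell_m}$, and for each neighbour $\y_t^i = \w_t + \gamma U_t\v_i' \in \cN(\x_t^{o_t})$, where $\v_i'$ differs from $\v$ only in its $i$-th sign, set $\z_t^i := \w_t + \gamma\sum_{j\ne i}v_j\u_t^j$; then $\x_t^{o_t} = \z_t^i + \gamma v_i\u_t^i$ and $\y_t^i = \z_t^i - \gamma v_i\u_t^i$, and the winner property forces $o_t^i := \sign(f(\x_t^{o_t}) - f(\y_t^i)) = -1$. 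Hence each extracted pair is exactly a symmetric two-point sign query along the direction $\v_t^i := v_i\u_t^i \sim \text{Unif}(\cS_d(1/\sqrt{\ell_m}))$ about a centre $\z_t^i$ with $\norm{\z_t^i - \w_t} = O(\gamma)$, and $\g_t^i = o_t^i\v_t^i = -v_i\u_t^i$ is precisely the normalized-gradient estimate such a query produces in \cref{alg:batch}; so a single battling query yields $\ell_m$ normalized-gradient estimates, mimicking one round of \cref{alg:batch} with batch size $\ell_m$.

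For Step~2 I would control $\E_{U_t}[\g_t^i \mid \cH_t]$ and $\E_{U_t}[{\g_t^i}^\top\g_t^j \mid \cH_t]$. A $\beta$-smooth expansion gives $f(\w_t + \gamma U_t\v) = f(\w_t) + \gamma\sum_j v_j\,\nabla f(\w_t)^\top\u_t^j + O(\beta\gamma^2)$; crucially the linear term is \emph{separable} across the $\ell_m$ coordinates, so the minimiser over $\v \in V_{\ell_m}$ satisfies $v_i = -\sign(\nabla f(\w_t)^\top\u_t^i)$ up to $O(\beta\gamma)$-order corrections, whence $\g_t^i = \sign(\nabla f(\w_t)^\top\u_t^i)\u_t^i$ up to such corrections. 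Consequently $\E_{U_t}[\g_t^i \mid \cH_t]$ equals $\frac{1}{\sqrt{\ell_m}}\cdot\frac{n_t}{\sqrt d}$ (with $n_t = \nabla f(\w_t)/\norm{\nabla f(\w_t)}$) up to an additive error of the form $\lambda_t$ as in \cref{thm:normgrad} and \cref{lem:biasgrad}, the only new contributions to $\lambda_t$ being the $O(\gamma)$ shift of the centre $\z_t^i$ and the radius rescaling, which affect constants only; and the same separability gives the cross-term bound $\E_{U_t}[{\g_t^i}^\top\g_t^j \mid \cH_t] \le \frac{1}{\ell_m}\big(\frac1d + 4\lambda_t\big)$ for $i \ne j$, the exact analogue of the bound used in the proof of \cref{thm:batch}. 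I expect this to be the main obstacle: unlike \cref{alg:batch}, where the $m$ estimates come from independently drawn and independently queried directions, here all $\ell_m$ estimates are read off from a single winner index $o_t$ and are genuinely statistically coupled; the structured query set is designed precisely so that the first-order term is separable across coordinates, which is what restores conditional near-independence up to the bias $\lambda_t$, and that bias is then pushed below the per-step progress by the stated choice of $\gamma$.

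For Step~3 the argument is the one in \cref{thm:batch} with $m \mapsto \ell_m$. Writing $\g_t = \frac{1}{\ell_m}\sum_{i=1}^{\ell_m}\g_t^i$ and using $\norm{\w_{t+1} - \x^*} \le \norm{\tw_{t+1} - \x^*}$ (projection is contractive), I would expand $\norm{\w_t - \eta\g_t - \x^*}^2$, take $\E_{U_t}[\cdot \mid \cH_t]$, substitute the Step-2 bounds, and invoke Claim~2 of \cref{lem:cvxgrad} (which gives $(\w_t - \x^*)^\top n_t \ge c\sqrt{2\epsilon}/\sqrt{d\beta}$ whenever $f(\w_t) - f(\x^*) > \epsilon$) together with the gradient lower bound $\norm{\nabla f(\x)} \ge \epsilon/D$ of \cref{lem:gradf_lb} to choose $\gamma$ so the $\lambda_t$ terms are dominated. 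With $\eta = \ell_m\sqrt\epsilon/(20\sqrt{d\beta})$ this yields $\E[\norm{\w_{t+1} - \x^*}^2 \mid \cH_t] \le \norm{\w_t - \x^*}^2 - \frac{(\sqrt2-1)\ell_m\epsilon}{400 d\beta}$ at every $t$ for which $\w_t$ is still $\epsilon$-suboptimal; telescoping over $t = 1,\ldots,T$ gives $\E\norm{\w_{T+1} - \x^*}^2 \le D - \frac{(\sqrt2-1)\ell_m\epsilon T}{400 d\beta}$, which becomes non-positive once $T = T_\epsilon = O(d\beta D/(\ell_m\epsilon))$ — impossible unless some iterate $\w_\tau$, $\tau \le T$, is already $\epsilon$-optimal. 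Finally, the running-minimum update via the extra pairwise query $(\m_t,\w_{t+1})$ ensures $f(\m_{T+1}) \le \min_{\tau \le T+1} f(\w_\tau)$ deterministically, so $\E[f(\m_{T+1})] - f(\x^*) \le \epsilon$; since each round spends one battling query and one sign query, the total sample complexity is $2T_\epsilon$.
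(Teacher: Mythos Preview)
Your proposal is correct and follows the same route as the paper: extract $\ell_m$ pairwise sign feedbacks from the argmin via the hypercube-neighbour structure of $S_t$, show each $\g_t^i$ equals $\sign(\nabla f(\w_t)^\top\u_t^i)\,\u_t^i$ up to a bias $\lambda_t$ controlled by $\gamma$ (the paper isolates this step as \cref{thm:new_normgrad}/\cref{lem:newbiasgrad}, proved via exactly the smoothness-plus-separability expansion you describe), and then rerun the contraction analysis of \cref{thm:batch} with $m$ replaced by $\ell_m$. One small correction: the assertion that $\v_t^i = v_i\u_t^i \sim \text{Unif}(\cS_d(1/\sqrt{\ell_m}))$ is false --- the winner coordinate $v_i$ is a function of $\u_t^i$ (flipping $\u_t^i$ flips $v_i$, so $v_i\u_t^i$ is invariant and hence supported on a single hemisphere) --- but you never actually use this claim; your separability argument is precisely what replaces it and is the correct mechanism. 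Also note $\norm{\z_t^i-\w_t}\le\gamma\sqrt{\ell_m}$ rather than $O(\gamma)$, which is why the bias $\lambda_t$ in \cref{lem:newbiasgrad} carries the extra $\sqrt{\ell_m}$ factor that the stated choice of $\gamma$ is designed to absorb.
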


\begin{proof}[Proof Sketch of \cref{thm:argmin}]
Due to space limitations, the complete proof is deferred to \cref{app:thm_argmin}. The key idea relies on the idea of constructing the structured query set $S_t$ which allows us to derive $O(\log m)$ winner-vs-loser sign feedback from the $m$-multiwise winner $o_t$, each of which results in an \emph{independent} (normalized) gradient estimate $\g_t^i$, $\forall i \in [\ell_m]$. Note here the independence of $\g_t^i$s are crucial, which was possible due to the special structure of the query set $S_t$. We prove the formal statement in \cref{thm:new_normgrad}, \cref{app:lem_argmin}. The final stretch of the proof relies on exploiting the aggregated normalized gradient $\g_t = \frac{1}{\ell_m}\sum_{i = 1}^{\ell_m}\g_t^i$, similar to batched feedback model that yields the final $O(\frac{1}{\ell_m})$ factor improvement in the sample complexity of the convergence analysis.  
\end{proof}

\subsection{Improved Convergence Rates with Strong Convexity} 
\label{sec:alg_argmin_strng}

It is easy to argue that, with the additional assumption of strong convexity, we can again obtain an improved convergence rate of $O(\frac{d}{\log m}\log \frac{1}{\epsilon})$ similar to batched setting (\cref{sec:alg_batch_strng}). Following the same `phase-wise progress with warm starting' idea of \cref{alg:batch_strng} with now using Battling-NGD (\cref{alg:argmin}) as the underlying black-box for an algorithm of \probarg \, the problem for smooth convex functions, we can design an algorithm, say \textbf{Improved Battling-NGD}, to achieve improved $O(\log \frac{1}{\epsilon})$ convergence rate with strong convexity. We omit the specific details for brevity which can be easily inferred by combining \cref{alg:batch_strng} and \cref{alg:argmin}. \cref{thm:argmin_strng} gives the convergence rate of the above approach. Further details are deferred to \cref{app:thm_argmin_strng}.

\begin{restatable}[Improved Convergence Rate for $\alpha$-strongly convex and $\beta$-Smooth Functions]{thm}{thmargstrng}
	\label{thm:argmin_strng}
	Consider $f$ to be $\alpha$-strongly convex and $\beta$-smooth and let $\ell_m = \lfloor\log m\rfloor \leq d$.   
	Then \emph{Improved Battling-NGD} returns an $\epsilon$-optimal point within $ O\Big( \frac{d \beta}{\alpha \ell_m}(\log_2\big(\frac{\alpha}{\epsilon}\big) + \|\x_1 - \x^*\|^2) \Big)$ many multiwise queries.
\end{restatable}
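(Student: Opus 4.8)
The plan is to closely follow the phase-wise warm-start reduction behind \cref{thm:batch_strng}, replacing the batched black box \textbf{B-NGD} by \textbf{Battling-NGD} (\cref{alg:argmin}) and the batch parameter $m$ everywhere by $\ell_m = \lfloor \log m\rfloor$. Concretely, \emph{Improved Battling-NGD} proceeds in phases $k=1,2,\ldots$; in phase $k$ it invokes \cref{alg:argmin} from the current iterate $\w_k$ with an error target $\epsilon_k$ and a query budget $t_k$, and sets $\w_{k+1}$ to the returned running-minimum point. The only two facts I need are: (i) the smooth-case guarantee of \cref{thm:argmin}, read off as $\E[f(\w_{k+1})] - f(\x^*) \le \frac{C\,d\beta\,\|\w_k-\x^*\|^2}{\ell_m\, t_k}$ for an absolute constant $C$, provided the step-size and perturbation parameters are set as in \cref{thm:argmin} using any valid a priori upper bound on $\|\w_k-\x^*\|^2$; and (ii) the standard consequence of $\alpha$-strong convexity (\cref{lem:prop_alpha}), namely $\tfrac{\alpha}{2}\|\x-\x^*\|^2 \le f(\x)-f(\x^*)$.

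The key observation is that the bound in (i) is \emph{linear} in the squared warm-start distance $\|\w_k-\x^*\|^2$, so it composes cleanly under expectation even though $\w_k$ is random. Let $\delta_k := \E[f(\w_k)-f(\x^*)]$. Conditioning on the history up to the start of phase $k$ and then taking total expectation, (i) gives $\delta_{k+1} \le \frac{C\,d\beta}{\ell_m\, t_k}\,\E\|\w_k-\x^*\|^2 \le \frac{2C\,d\beta}{\alpha\,\ell_m\, t_k}\,\delta_k$, where the second inequality is (ii). Choosing a \emph{fixed} per-phase budget $t_k = \lceil \tfrac{4C d\beta}{\alpha \ell_m}\rceil$ for all $k\ge 2$ turns this into a contraction $\delta_{k+1}\le \tfrac12\delta_k$. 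For the first phase I would instead pick target $\epsilon_1 = \Theta(\alpha)$, i.e.\ budget $t_1 = \Theta\big(\tfrac{d\beta\,\|\w_1-\x^*\|^2}{\alpha\ell_m}\big)$ via \cref{thm:argmin}, so that $\delta_2 \le \epsilon_1 = \Theta(\alpha)$. After an additional $k_\epsilon = \lceil\log_2(\alpha/\epsilon)\rceil$ contracting phases we get $\delta_{k_\epsilon+2} \le 2^{-k_\epsilon}\epsilon_1 \le \epsilon$, and since the final output is a running minimum its expected suboptimality is no larger. Summing budgets, $t_1 + \sum_{k=2}^{k_\epsilon+1} t_k = O\big(\tfrac{d\beta}{\alpha\ell_m}\|\w_1-\x^*\|^2\big) + O\big(\tfrac{d\beta}{\alpha\ell_m}\log_2(\alpha/\epsilon)\big)$, which is the claimed bound (the extra factor $2$ from the running-minimum comparisons inside \cref{alg:argmin} is absorbed into $O(\cdot)$, and each such comparison can be issued as a $2$-point multiwise query).

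The one point that needs care — and, I expect, the main, albeit minor, obstacle — is justifying the conditional form of (i): \cref{thm:argmin} is stated for a fixed deterministic start, so to reuse it inside phase $k$ I must verify that its proof in fact yields, for \emph{every} realization of $\w_k$, the pointwise bound $\E[f(\w_{k+1})-f(\x^*)\mid\w_k] \le \frac{C d\beta\|\w_k-\x^*\|^2}{\ell_m t_k}$ with the parameter settings depending only on a crude upper bound $D \ge \|\w_1-\x^*\|^2$ (this $D$ enters only through the perturbation scale $\gamma_k \propto 1/D^2$, which may harmlessly be taken too small). This is exactly what the drift recursion in the proof of \cref{thm:argmin} delivers: the per-step potential decrease $\E[\|\w_{\tau+1}-\x^*\|^2\mid\cH_\tau] \le \|\w_\tau-\x^*\|^2 - \Omega\big(\tfrac{\ell_m \epsilon_k}{d\beta}\big)$ holds as long as $f(\w_\tau)-f(\x^*) > \epsilon_k$, so telescoping over $t_k$ steps and invoking the running minimum produces the linear-in-$\|\w_k-\x^*\|^2$ guarantee regardless of how $\w_k$ arose. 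Everything else is the bookkeeping already carried out for \cref{thm:batch_strng} with $m$ replaced by $\ell_m$, together with the adjustment that the first phase contributes the additive $\|\w_1-\x^*\|^2$ term rather than a logarithmic one.
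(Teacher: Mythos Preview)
Your proposal is correct and takes essentially the same approach as the paper: the paper's own proof of \cref{thm:argmin_strng} literally states that ``the proof follows from the exactly same analysis as the proof of \cref{thm:batch_strng},'' i.e., the phase-wise warm-start reduction with \textbf{Battling-NGD} in place of \textbf{B-NGD} and $\ell_m$ in place of $m$. Your write-up is in fact more explicit than the paper's, in particular your discussion of why the smooth-case guarantee applies conditionally on a random warm start and why the bound is linear in $\|\w_k-\x^*\|^2$; this is exactly the content of the paper's proof of \cref{thm:batch_strng}, which you have correctly transplanted.
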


%The proof is moved to \cref{app:thm_argmin_strng}.
\vspace{-5pt}
\textbf{Noisy Argmin Feedback $\nu \in (0,1/2)$. } This setting can be handled using the same `\emph{resampling idea}' explained in \cref{rem:batch_noise}. This would respectively yield convergence bounds of   $O\Big(\frac{d\beta D}{(0.5-\nu)^2 \epsilon \log m } \Big)$ and $ O\Big( \frac{d \beta}{\alpha (0.5-\nu) ^2 \log m}(\log_2\big(\frac{\alpha}{\epsilon}\big) + \|\x_1 - \x^*\|^2) \Big)$ for the settings of \cref{thm:argmin} and \cref{thm:argmin_strng}.

%\vspace{-5pt}
\subsection{Lower Bound: \probarg} 
\label{sec:lb_argmin}
\vspace{-2pt}
\begin{restatable}[Convergence Lower Bound for \probarg \, Problem]{thm}{lbargmin}
\label{thm:lb_argmin}
Let $f: \cD \mapsto R$ be any smooth and strongly convex function. Then the $\epsilon$-convergence bound for any algorithm for the $m$-\probarg\, problem is at least $\Omega(\frac{d}{\log m} \log \frac{1}{\epsilon})$.
\end{restatable}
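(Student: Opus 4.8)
The plan is to re-run the adversary/candidate-elimination argument behind \cref{thm:lb_single} (and hence \cref{thm:lb_batch}), but with the branching factor of a single query raised from $2$ to $m$, since one multiwise-winner answer takes one of $m$ possible values and therefore carries at most $\log_2 m$ bits. Concretely, I would take $\cD = \cB_d(1)$, the hard family $f(\x) = \|\x - \x^*\|_2^2$ (which is simultaneously $2$-strongly convex and $2$-smooth, so it lies in the claimed class), and fix a $c\sqrt{\epsilon}$-packing $N := N(\cD, c\sqrt{\epsilon}, \|\cdot\|_2)$ of the ball for a small absolute constant $c>2$; a standard volumetric estimate gives $|N| \ge (c\sqrt\epsilon)^{-d}$, so $\log|N| = \Omega(d\log\tfrac1\epsilon)$. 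The adversary is free to pick $\x^* \in N$, and because $f(\x) - f(\x^*) = \|\x - \x^*\|_2^2$, any two distinct packing points are ``$\epsilon$-indistinguishable targets'' for a single output: if $\|\m_{T+1} - \x\|_2^2 \le \epsilon$ and $\|\m_{T+1} - \x'\|_2^2 \le \epsilon$ then $\|\x - \x'\|_2 \le 2\sqrt{\epsilon} < c\sqrt{\epsilon}$, contradicting the packing property.

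Next I would formalize the bookkeeping. The adversary maintains the set $C$ of all $\x^* \in N$ consistent with the multiwise feedback revealed so far, with $C = N$ initially. When the learner plays an $m$-subset $S_t = (\x_t^1,\dots,\x_t^m)$, the returned winner $o_t = \arg\min_i f(\x_t^i)$ takes one of at most $m$ values; partitioning $C$ into the (disjoint) classes $C^{(j)} = \{\x^* \in C : \arg\min_i f(\x_t^i) = j\}$ for $j \in [m]$, pigeonhole yields some $j^\star$ with $|C^{(j^\star)}| \ge |C|/m$, and the adversary returns $o_t = j^\star$ and updates $C \leftarrow C^{(j^\star)}$. (Ties in the $\arg\min$ can be broken by a fixed rule, or avoided by perturbing the packing infinitesimally.) Hence after $T$ rounds $|C| \ge |N|/m^{T}$. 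I would emphasize here that this is a purely combinatorial, noiseless argument: it uses no randomness in the feedback at all, in contrast with the information-theoretic lower bound of \citet{Jamieson12} noted earlier.

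Finally I would close the argument exactly as in the binary case. If $T < \tfrac{1}{\log_2 m}\log_2\tfrac{|N|}{2}$ then $|C| > 2$, so $C$ contains two distinct packing points $\x, \x'$, each a legitimate value of $\x^*$ that is consistent with every answer the adversary returned; the learner's output $\m_{T+1}$ depends only on this transcript, and by the indistinguishability observation it is $\epsilon$-suboptimal for at least one of $\x,\x'$, which the adversary then declares to be $\x^*$, forcing $f(\m_{T+1}) - f(\x^*) > \epsilon$. For a randomized learner the same conclusion follows after conditioning on its internal coins (equivalently, via Yao's principle). Therefore any algorithm requires $T \ge \Omega\!\big(\tfrac{\log|N|}{\log m}\big) = \Omega\!\big(\tfrac{d}{\log m}\log\tfrac1\epsilon\big)$ multiwise queries, which matches the $d,m,\epsilon$ dependence of the upper bound in \cref{thm:argmin_strng} and shows the $1/\log m$ speedup is unimprovable.

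There is no deep obstacle here; the points that need care are (i) the quantifier order against an \emph{adaptive, possibly randomized} learner — one must check that the greedy ``largest surviving class'' answer is always available to the adversary no matter which $S_t$ the learner adaptively selects, and that $\m_{T+1}$ is a function of the transcript so the adversary's final choice of $\x^*$ truly fools it; and (ii) getting the packing scale right: since $f$ is quadratic, $\epsilon$-optimality corresponds to Euclidean distance $\sqrt{\epsilon}$, so the packing must be at scale $\Theta(\sqrt{\epsilon})$, which still gives $\log|N| = \Theta(d\log\tfrac1\epsilon)$ and hence the stated bound. The one genuinely new idea relative to \cref{thm:lb_batch} is simply recognizing that a size-$m$ multiwise query has branching factor $m$ (equivalently $\log_2 m$ ``effective sign queries''), so the achievable speedup over the $\Omega(d\log\tfrac1\epsilon)$ single-query bound is at most a $\tfrac{1}{\log m}$ factor, not a $\tfrac1m$ factor.
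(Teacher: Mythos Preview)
Your proposal is correct and follows essentially the same approach as the paper: an adversary-maintained candidate set over a packing of $\cB_d(1)$ with the quadratic family $f(\x)=\|\x-\x^*\|_2^2$, using the observation that a single $m$-multiwise winner answer has branching factor $m$ and so can shrink the consistent set by at most a factor of $m$, giving $T = \Omega(\log|N|/\log m) = \Omega\!\big(\tfrac{d}{\log m}\log\tfrac1\epsilon\big)$. Your write-up is in fact more careful than the paper's on two points --- you use the correct $\Theta(\sqrt\epsilon)$ packing scale (since $\epsilon$-suboptimality for this quadratic $f$ corresponds to Euclidean distance $\sqrt\epsilon$, not $\epsilon$) and you make the pigeonhole/adaptive-adversary step and the treatment of randomized learners explicit --- but the underlying argument is identical.
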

\vspace{-5pt}
\begin{proof}
We appeal back to the lower bound derivation idea of \cref{thm:lb_batch} to derive the lower bound of \cref{thm:lb_argmin}.
Same as before, let us assume $\cD = \cB_d(1)$ and $N(\cD, \epsilon, \norm{}_2)$ be the $\epsilon$ packing of $\cD$ w.r.t. $\ell_2$-norm. Also assume $f(x) = \norm{\x - \x^*}_2^2$, and the adversary can select $\x^*$ arbitrarily as any $\x^* \in N(\cD, \epsilon, \norm{}_2)$. 

Then in this battling (multiwise-winner) feedback model, note that a single $m$-point subsetwise query can allow the learner to remove at most $1/m$ portions the points in decision space $\cD$. Then after $t$ number of such queries, the adversary will still have the choice to select $\x^*$ from $\frac{1}{m^t}\abs{N(\cD, \epsilon, \norm{}_2)} \leq \frac{1}{2^t}\Bign{\frac{3}{\epsilon}}^d$ many numbers of points. This immediately yields the desired $\Omega(\frac{d}{\log m} \log \frac{1}{\epsilon})$ sample complexity lower bound for the \probarg\, setting, as the learner would need to make at least $t \geq \frac{d}{\log m} \log \frac{1}{\epsilon}$ many $m$-multiwise queries before the adversary would be left with at most $\frac{1}{m^t}\Bign{\frac{3}{\epsilon}}^d \leq 1$ choice for $\x^*$.  
\end{proof}
\vspace{-6pt}
Note, \cref{thm:argmin} shows that our proposed algorithms actually yield optimal convergence rate in terms of $m$, but it is still an open problem to see if one can prove a matching $\Omega(\frac{d}{\epsilon \log m})$ convergence lower bound for the class of smooth functions (without strong convexity).

%\input{./Arxiv/analysis_min.tex}

%\input{alg_strgen.tex}

%\input{analysis_strong_a.tex}

%\newpage

%\input{./Arxiv/lb.tex}

\vspace{-5pt}
\section{Experiments}
\label{sec:expts}
\vspace{-5pt}
In this section, we provide empirical evaluation of our proposed methods to compare the convergence rates with different types of feedback models (1) Single $\sign$\ Feedback, (2) $m$-Batched $\sign$\ Feedback, and (3) $m$-Argmin Feedback. We run experiments in the following settings:

\textbf{Algorithms. } We compare three algorithms, (1) NGD, (2) Batched-NGD ($m$-NGD) and (3) Battling-NGD ($s$-NGD) for the above three different types of feedback.

%\textbf{Performance Measures.} In all cases, we report the suboptimality gap $(\epsilon)$ or the averaged convergence sample complexity $(T_\epsilon)$ of the algorithms over $100$ runs.
%

\textbf{Experiment} Tradeoff between the Query completely ($T$) vs SubOptimality Gap $(f(\w_t)-f(\x^*))$ for different types of multiway preference feedback: 

\begin{figure}[h]
\begin{center}
  \includegraphics[width=0.45\linewidth,trim=0 0 135pt 0, clip]{./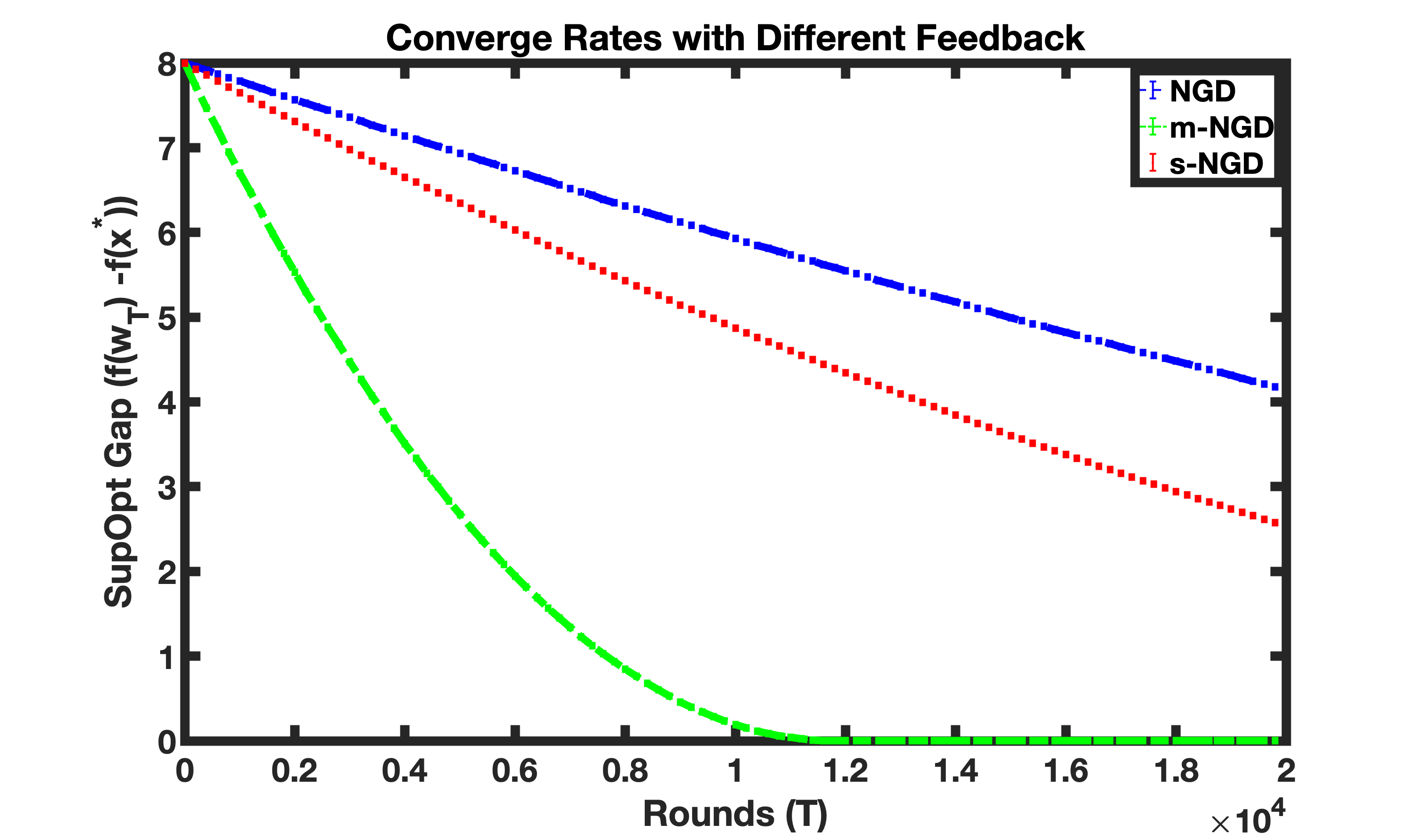}
  \includegraphics[width=0.45\linewidth,trim=0 0 135pt 0, clip]{./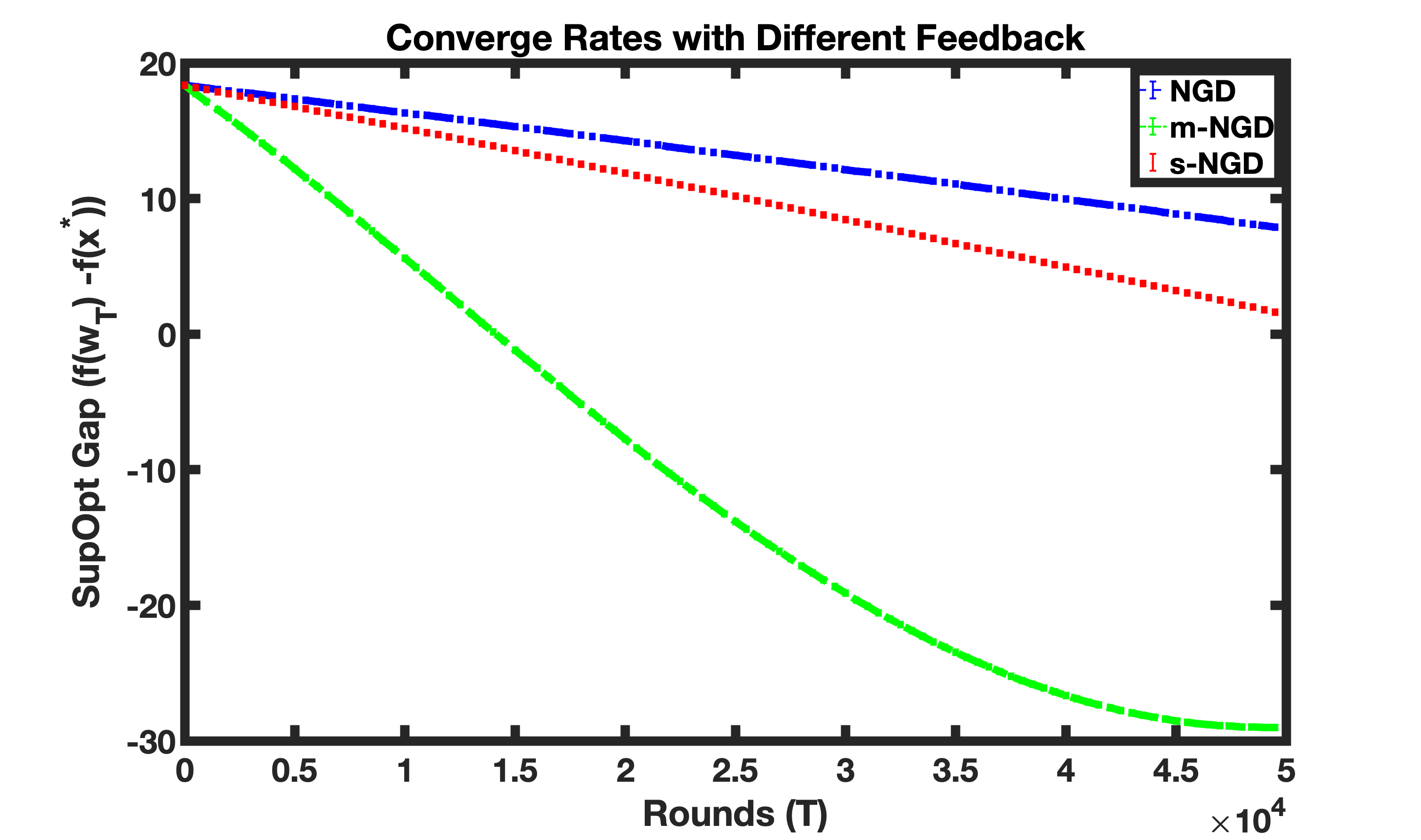}
  \vspace{-5pt}
  \caption{Query completely ($T$) vs SubOptimality Gap $(f(\w_t)-f(\x^*))$ for (1) Single $\sign$\ Feedback (NGD), (2) $m$-Batched $\sign$\ Feedback ($m$-NGD), and (3) $m$-Argmin $\sign$\ Feedback ($s$-NGD). First plot uses $f(x) = \|\x\|_2^2$. The second plot uses $f(x) = 3 + \sum_{i = 1}^d sin(x_i)$. We set $d = 32$, $m = 6$ and initialized $x_0(i) = 0.5, ~\forall i \in [d]$.}
  \label{fig:expt1}
\end{center}
\end{figure}
%\vspace{-15pt}
\textbf{Observations:} \cref{fig:expt1} corroborates our results showing that indeed we get the fastest convergence for $m$-Batched $\sign$ Feedback, followed by $m$-Multiway Feedback and single $\sign$ feedback leads to the slowest convergence in each case. Further, note $f(x^*)$ is respectively $0$ and $-29$ for the above two functions, and the algorithms indeed tend to converge to the true minimum over time. Since $m$-NGD converges the fastest, note it already converged to the true minimum in both settings. 

%Some additional experiments are moved to \cref{app:expts}.

We report some additional experiments to reflect the tradeoff between convergence rate vs size of multiway preferences, for both $m$-Batched $\sign$ and $m$-Argmin feedback. We use the function $f:\R^{32} \mapsto \R$ such that
$f(x) = \|\x\|_2^2 + \frac{1}{2}\|\x\|_1^2$. So we have $d = 32$. 

\begin{figure}[h!]
	\begin{center}
		\includegraphics[width=0.45\linewidth,trim=0 0 135pt 0, clip]{./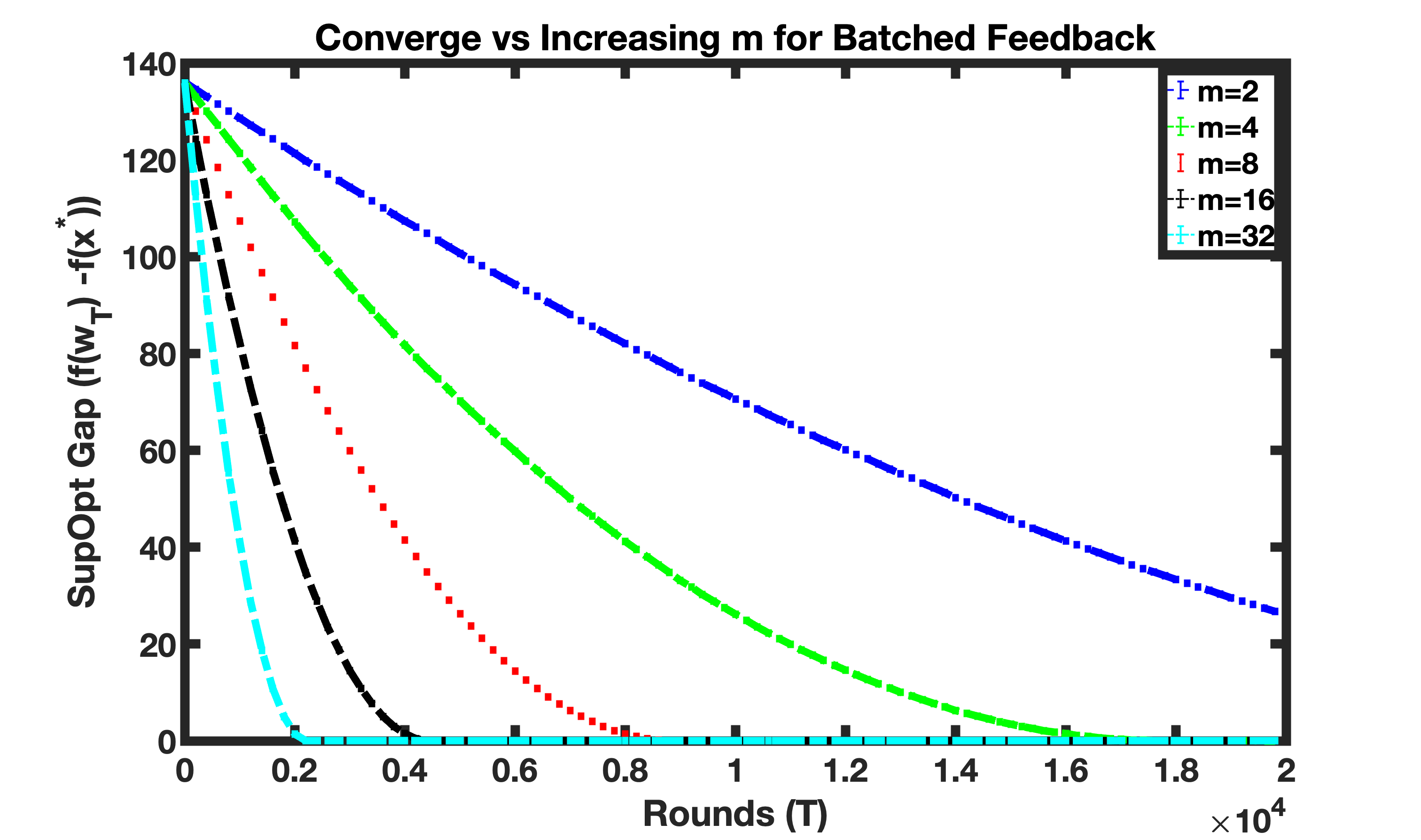}
		\vspace{10pt}
		\includegraphics[width=0.45\linewidth,trim=0 0 135pt 0, clip]{./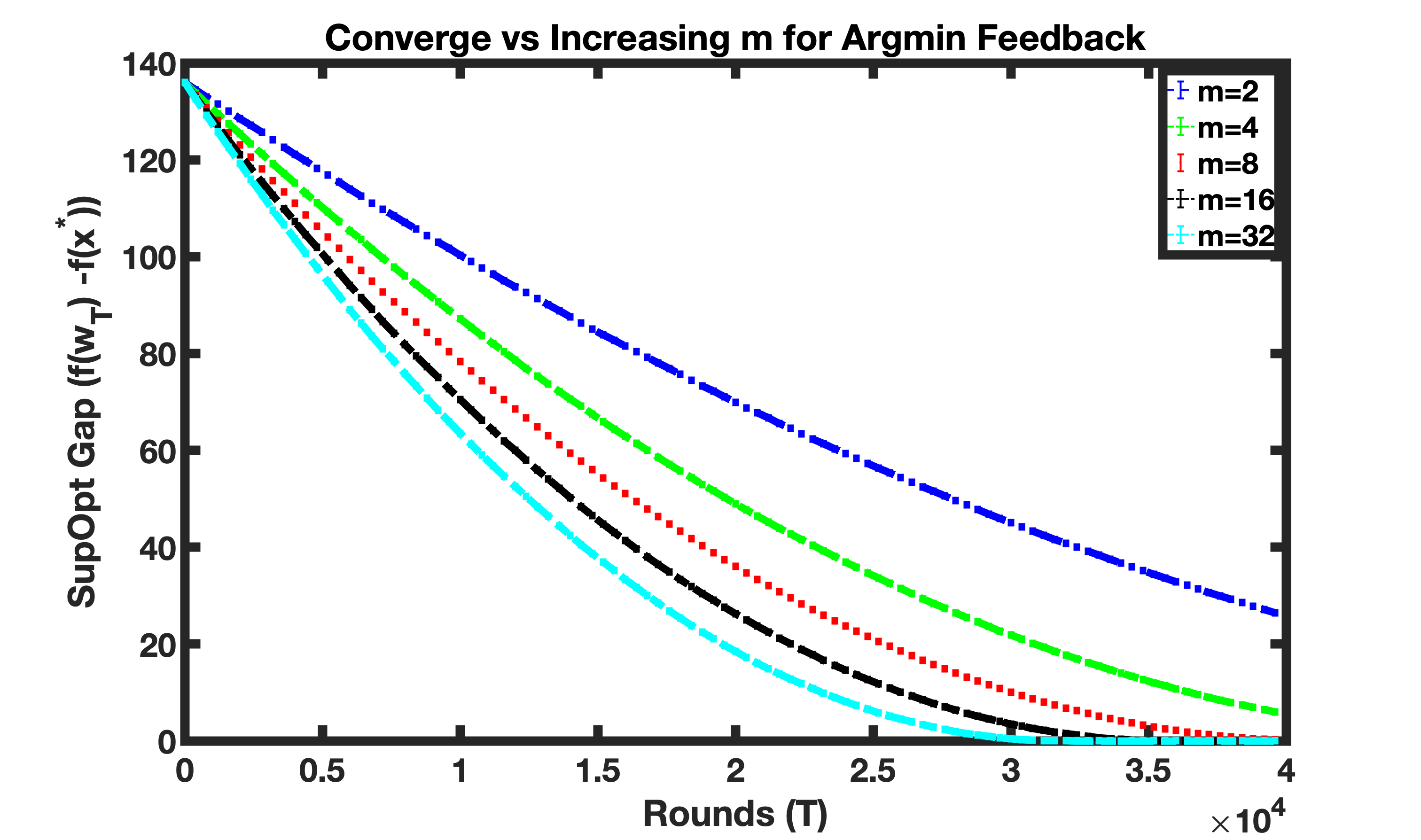}
		\vspace{-20pt}
		\caption{(left) Convergence rate vs time with increasing $m$ (right) Convergence rate vs time with increasing $m$. We use $f(x) = \|\x\|_2^2 + \frac{1}{2}\|\x\|_1^2$ for both cases, with $d = 32$, and initialized $x_0(i) = 0.5, ~\forall i \in [d]$.}
		\label{fig:expt2}
	\end{center}
\end{figure}
%\vspace{-5pt}

\textbf{Observations:} \cref{fig:expt2} again corroborates with our inferences on the the tradeoff between convergence rates vs increasing $m$ for the $m$-batched and $m$-argmin feedback. Indeed the rate of convergence decays linearly as $O(1/m)$ for $m$-Batched $\sign$ Feedback, as derived in \cref{thm:batch}, \cref{thm:batch_strng}; whereas with $m$-argmin feedback the decay is only logarithmic $O(frac{1}{\log m})$ in $m$, as derived in \cref{thm:argmin}, \cref{thm:argmin_strng}. The algorithms converge to the true minimum $f(\x^*)=0$ in both settings, although the rate of convergence is much faster in the left case (i.e with $m$-batched $\sign$ feedback).

%!TEX root = nips20_sleepingDB.tex
\vspace{-7pt}
\section{Perspective}
\label{sec:concl}
\vspace{-6pt}
We address the problem of convex optimization with multiway preference feedback, where the learner can only receive relative feedback of a subset of queried points and design gradient descent-based algorithms with fast convergence rates for smooth and strongly convex functions. In particular, we worked with batched and argmin-type $m$-multiway preferences and designed algorithms with optimal convergence dependencies on $m$. Our work is the first to study and analyze the problem of convex optimization with multiway preferences. 

A natural extension of this work could be to understand if one can work with the class of any arbitrary convex functions (beyond the smoothness assumption which is crucially used to derive the normalized gradient estimates in the current algorithms). Investigating our problem setup to a more general class of preference functions could be useful to understand what is the right rate of improvement one can hope for with parallelism \cite{Ren+18}. %\cite{SGwin18}.
Another interesting direction could be to generalize the setting to an online model, where the underlying functions can vary across time. Can we even hope to achieve sublinear convergence (or regret bounds) for such cases?

\newpage

\bibliographystyle{abbrvnat} % We choose the "plain" reference style
\bibliography{dueling-refs,opt-refs}
\newpage

\appendix

\onecolumn{
	
\section*{\centering\Large{Supplementary: \papertitle}}
% \vspace*{1cm}
	
\allowdisplaybreaks

\section{Appendix for \cref{sec:batch}}

\subsection{Proof of \cref{thm:batch}}
\label{app:batch}

\thmbatch*

\begin{proof}[Proof of \cref{thm:batch}]
We start by noting that in this case: 

\begin{align}
\label{eq:thm1prf_init}
\norm{\w_{t+1} - \x^*}^2 \leq \norm{\tilde \w_{t+1} - \x^*}^2  =  \norm{\w_{t} -\frac{\eta}{m}\sum_{i = 1}^m\g_t^i - \x^*}^2, 
\end{align}

where the first inequality holds since projection reduces the distance to optimal $\x^*$. This further leads to

\begin{align*}
& {m^2}\norm{\w_{t+1} - \x^*}^2  
= \sum_{i=1}^m\norm{\w_{t} -\eta\g_t^i - \x^*}^2 + 2\sum_{1\leq i < j \leq K} \big(\w_{t} -\eta\g_t^i - \x^*\big)^\top\big(\w_{t} -\eta\g_t^i - \x^*\big)\\
& = m(\norm{\w_{t}-\x^*}^2 + \eta^2) - 2\eta\sum_{i = 1}^m(\w_t - \x^*)^\top\g_t^i - 2\eta \sum_{i = 1}^{m-1}\sum_{j = i+1}^m \bign{\w_t-\x^*}^\top(\g_t^i + \g_t^j) + 2\frac{m(m-1)}{2}\norm{\w_{t}-\x^*}^2 \\
& \hspace{4in} + 2\eta \sum_{i = 1}^{m-1}\sum_{j = i+1}^m {\g_t^i}^\top\g_t^j.
%& = m^2(\norm{\w_{t}-\x^*}^2 + \eta^2) + 2\eta m \sum_{i = 1}^m\bign{\w_t-\x^*}^\top\g_t^i.
\end{align*}

Let us denote by $\cH_t$ the history $\{\w_\tau,U_\tau,\o_\tau\}_{\tau = 1}^{t-1} \cup \w_{t}$ till time $t$. Then conditioning on the history $\cH_t$ till time $t$, and taking expectation over $U_t:=\{\u_t^1,\ldots,\u_t^m\}$ we further get:

\begin{align*}
& {m^2}\E_{U_t}[\|\w_{t+1} - \x^*\|^2 \mid \cH_t]  \\
& = m(\norm{\w_{t}-\x^*}^2 + \eta^2) - 2\eta \sum_{i = 1}^{m-1}\sum_{j = i+1}^m \bign{\w_t-\x^*}^\top\E_{U_t}[(\g_t^i + \g_t^j) \mid \cH_t] + 2\frac{m(m-1)}{2}\norm{\w_{t}-\x^*}^2  
\\
& \hspace{3in} - 2\eta\sum_{i = 1}^m(\w_t - \x^*)^\top\g_t^i + 2\eta^2 \sum_{i = 1}^{m-1}\sum_{j = i+1}^m \E_{U_t}[{\g_t^i}^\top\g_t^j \mid \cH_t].
%& = m^2(\norm{\w_{t}-\x^*}^2 + \eta^2) + 2\eta m \sum_{i = 1}^m\bign{\w_t-\x^*}^\top\g_t^i.
\end{align*}

Let us denote by $n_t = \frac{\nabla f(\w_t)}{\norm{\nabla f(\w_t)}}$ the normalized gradient at point $\w_t$. 
Now note for any $i \neq j$, since $\u_t^i$ and $\u_t^j$ are independent, from \cref{thm:normgrad} we get: 
\begin{align*}
    \E_{U_t}&[{\g_t^i}^\top\g_t^j \mid \cH_t] = \E_{\u_i}[ \E_{\u_j}[{\g_t^i}^\top\g_t^j \mid \u_t^i ]\mid \cH_t]
    \\
    & \leq \E_{\u_i}\bigg[ {\g_t^i}^\top \frac{n_t}{\sqrt d} + 2\lambda_t \mid \cH_t \bigg] 
    \\
    & \leq \frac{1}{\sqrt d}\biggn{\frac{n_t^\top n_t}{\sqrt d}} + 4\lambda_t = \frac{1}{d} + 4\lambda_t.
\end{align*}

where recall from \cref{thm:normgrad} and \cref{lem:biasgrad}, $\lambda_t \leq \frac{\beta \gamma \sqrt{d}}{\norm{\nabla f(\x)}}\bigg( 1 + 2\sqrt{\log \frac{\norm{\nabla f(\x)}}{\sqrt d \beta \gamma}} \bigg)$. Combining this with the main equation, and further applying \cref{thm:normgrad}, we get:

\begin{align*}
& {m^2}\E_{U_t}[\|\w_{t+1} - \x^*\|^2 \mid \cH_t]  \\
& \leq m(\norm{\w_{t}-\x^*}^2 + \eta^2) - 2\eta \sum_{i = 1}^{m-1}\sum_{j = i+1}^m \bign{\w_t-\x^*}^\top\E_{U_t}[(\g_t^i + \g_t^j) \mid \cH_t] + 2\frac{m(m-1)}{2}\norm{\w_{t}-\x^*}^2  
\\
& \hspace{2in} - 2\eta\sum_{i = 1}^m(\w_t - \x^*)^\top\g_t^i + 2\eta^2 \frac{m(m-1)}{2} \biggn{\frac{1}{d} + 4\lambda_t},
\\
& \leq m(\norm{\w_{t}-\x^*}^2 + \eta^2) - 2m\eta \norm{\w_t - \x^*}\sum_{i = 1}^{m} \frac{\bign{\w_t-\x^*}^\top}{\norm {\w_t-\x^*}}\E_{U_t}[\g_t^i \mid \cH_t] + 2\frac{m(m-1)}{2}\norm{\w_{t}-\x^*}^2  
\\
& \hspace{2in} + 2\eta^2 \frac{m(m-1)}{2} \biggn{\frac{1}{d} + 4\lambda_t},
\\
& = m(\norm{\w_{t}-\x^*}^2 + \eta^2) - 2m\eta \norm{\w_t - \x^*}\sum_{i = 1}^{m} [\frac{(\w_t-\x^*)^\top}{\norm{\w_t - \x^*}}\n_t + 2\lambda_t] + 2\frac{m(m-1)}{2}\norm{\w_{t}-\x^*}^2 
\\
& \hspace{2in} + 2\eta^2 \frac{m(m-1)}{2} \biggn{\frac{1}{d} + 4\lambda_t},
\\
& = m(\norm{\w_{t}-\x^*}^2 + \eta^2) - 2m\eta \norm{\w_t - \x^*}\sum_{i = 1}^{m} [\frac{(\w_t-\x^*)^\top}{\norm{\w_t - \x^*}}\n_t + 2\lambda_t] + 2\frac{m(m-1)}{2}\norm{\w_{t}-\x^*}^2 
\\
& \hspace{2in} + 2\eta^2 \frac{m(m-1)}{2} \biggn{\frac{1}{d} + 4\lambda_t},
\\
& = m^2\norm{\w_{t}-\x^*}^2 + \eta^2(m + \frac{m(m-1)}{d}) - 2\eta m^2[(\w_t-\x^*)^\top\n_t 
\\
& \hspace{3.1in} + 4\eta^2m^2\lambda_t + 4m^2\eta\norm{\w_t - \x^*}\lambda_t,
\\
& = m^2\norm{\w_{t}-\x^*}^2 + \eta^2(m + \frac{m(m-1)}{d}) - 2\eta m^2[(\w_t-\x^*)^\top\n_t 
\\
& \hspace{3.1in} + 4m^2\sqrt{d}\eta\norm{\w_t - \x^*} \lambda_t + 4m^2\sqrt{d}\eta\norm{\w_t - \x^*}\lambda_t
\end{align*}

where the last inequality follows by a choice of $\eta$ such that $\frac{\eta}{\norm{\w_t - \x^*}\sqrt d} \leq 1$ (will see shortly below why this is true). % and recall from \cref{thm:normgrad} and \cref{lem:biasgrad}, $\lambda_t \leq \frac{\beta \gamma \sqrt{d}}{\norm{\nabla f(\x)}}\bigg( 1 + 2\sqrt{\log \frac{\norm{\nabla f(\x)}}{\sqrt d \beta \gamma}} \bigg)$. 
Further from Claim-2 of \cref{lem:cvxgrad}, and from the fact that $m<d$, we can derive:
\begin{align*}
m^2\E_{\u_t}[\|\w_{t+1} - \x^*\|^2 \mid \cH_t] 
 & \le m^2\|\w_{t} - \x^*\|^2 +  m^2(- 2\eta\frac{c\sqrt{2\epsilon}}{\sqrt {d\beta}} + 8\eta\lambda_t \sqrt{d}\|\w_{t} - \x^*\|)  + 2m\eta^2.
\end{align*}

Now, similar to the derivation followed in \cite{SKM21} (see proof of Lem 6, \cite{SKM21}), choosing $\gamma \le \frac{\|\nabf(\w_{t})\|}{960 \beta d\sqrt{d} \|\w_{t} - \x^*\| \sqrt{\log 480}}\sqrt{\frac{2 \epsilon}{\beta}}$, we can get:

\begin{align*}
\E_{\cH_t}[\E_{\u_t}[\|\w_{t+1} - \x^*\|^2] \mid \cH_t] &\le \|\w_{t} - \x^*\|^2 - \frac{\eta\sqrt{2\epsilon}}{10\sqrt{d \beta}} + \frac{\eta\sqrt{2\epsilon}}{20\sqrt{d \beta}} + \frac{2\eta^2}{m}.
\end{align*}		

One possible choice of $\gamma$ is $\gamma = \frac{ \epsilon^{3/2}}{960 \beta d\sqrt{d} D^2\ \sqrt{\log 480}}\sqrt{\frac{2}{\beta}}$ (since $\norm{\nabla f(\x)} \geq \frac{\epsilon}{D}$ for any $\x$ s.t. $f(\x)-f(\x^*) > \epsilon$ by \cref{lem:gradf_lb}). Then following from the above equation, we further get: 

\begin{align*}
& \E_{\cH_t}[\E_{\u_t}[\|\w_{t+1} - \x^*\|^2 \mid \cH_t]] \le \|\w_{t} - \x^*\|^2 -  \eta \frac{\sqrt{2\epsilon}}{20\sqrt{d \beta}} + \frac{2\eta^2}{m},\\
		& ~~~~~~~~~~~~~ = \|\w_{t} - \x^*\|^2  - \frac{ (\sqrt 2 -1)m\epsilon }{400 d\beta},~~~\Big(\text{setting } \eta = \frac{m\sqrt{\epsilon}}{20\sqrt{ d \beta}}\Big)\\
		\overset{}{\implies} & \E_{\cH_T}[\|\w_{T+1} - \x^*\|^2] \le \|\w_{1} - \x^*\|^2  - \frac{  (\sqrt 2 -1)m\epsilon T}{400 d\beta},
		\\
		& \hspace{1.8in} \big(\text{summing over} t = 1, \ldots T \text{ and laws of iterated expectation}\big).
\end{align*}

Above implies, if indeed $f(\w_{\tau}) - f(\x^*) > \epsilon$ continues to hold for all $\tau = 1,2, \ldots T$, then $\E[\|\w_{T+1} - \x^*\|^2] \le 0$, for $T \ge  \frac{400 md\beta}{(\sqrt 2 -1)\epsilon}(\|\w_{1} - \x^*\|^2)$, which basically implies $\w_{T+1} = \x^*$ (i.e. $f(\w_{T+1}) = f(\x^*)$). Otherwise there must have been a time $t \in [T]$ such that $f(\w_{t}) - f(\x^*) < \epsilon$. 
 
 The last bit of the proof lies in ensuring that indeed $\frac{\eta}{\norm{\w_t - \x^*}\sqrt d} \leq 1$ in all those rounds where $f(\w_{t+1})-f(\x^*) > \epsilon$. This is easy to note given $\beta$-smoothness as: 
 \begin{align*}
 \frac{\eta}{\norm{\w_t - \x^*}\sqrt d} \leq \frac{m \sqrt \epsilon}{20 d\sqrt{\beta}\norm{\w_t - \x^*}} \leq \frac{m \sqrt{f(\w_t)-f(\x^*)}}{20 d\sqrt{ \beta}\norm{\w_t - \x^*}} \leq \frac{m \sqrt{\beta}\norm{\w_t - \x^*}}{20 d\sqrt{ \beta}\norm{\w_t - \x^*}} \leq \frac{m}{20 d} < 1.   
 \end{align*}
 This concludes the proof with $T_\epsilon = T$, which gives an $O(m)$-factor improvement over the convergence bounds with single-$\sign$ feedback (as derived in \cref{thm:single}). 
\end{proof}

\subsection{Proof of \cref{thm:batch_strng}}
\label{app:batch_strng}

\thmbatchstrng*

\begin{proof}[Proof of \cref{thm:batch_strng}]
Let $\cH_k:= \{\w_{k'},(\w_{t'},U_{t'},o_{t'})_{t' \in t_{k'}}\}_{k' = 0}^{k} \cup \{\w_{k+1}\}$ denotes the complete history till the end of phase $k$ for all $k \in [k_{\epsilon}]$. %Thus $\E_{\cH_{k_\epsilon}}[\cdot] = \E_{\cH_{k_\epsilon}}[\cdot]$.	
By \cref{thm:batch_strng} we know that, for any fixed $T>0$, when  \cref{alg:batch} is run with
$\eta = \frac{m\sqrt{\epsilon}}{20\sqrt{ d \beta}}$,  $ \gamma =  \frac{ \epsilon^{3/2}}{960 \beta d\sqrt d D^2\ \sqrt{\log 480}}\sqrt{\frac{2}{\beta}}$ and $\epsilon = \frac{400 d\beta D}{(\sqrt 2 -1)T}$ $(D:= \|\w_{1} - \w^*\|^2)$, \cref{alg:batch} returns 
\[
\E[f(\m_{T+1})] - f(\x^*) \le \epsilon = \frac{400 d\beta \|\w_{1} - \x^*\|^2}{(\sqrt 2 -1)T}
\] 
with sample complexity (number of pairwise comparisons) $2T$. 

However, in this case since $f$ is also $\alpha$-strongly convex \cref{lem:prop_alpha} further implies
\begin{align}
\label{eq:alpha1-a}
\E & [\nicefrac{\alpha}{2}\|\m_{T+1} - \x^*\|^2] \le \E[f(\m_{T+1})] - f(\x^*) \le \frac{400 d\beta \|\w_{1} - \x^*\|^2}{(\sqrt 2 -1)T}\\
\nonumber \implies & \E[\|\m_{T+1} - \x^*\|^2] \le \frac{800 d\beta \|\w_{1} - \x^*\|^2}{(\sqrt 2 -1)\alpha T}
\end{align}

Now initially for $k = 1$, clearly applying the above result for $T = t\|\w_{1} - \x^*\|^2$, we get 
\begin{align*}
	\E[\|\w_{2} - \x^*\|^2] \le \frac{800 d\beta \|\w_{1} - \x^*\|^2}{(\sqrt 2 -1)\alpha T} = 1
\end{align*}

Thus, for any $k = 2,3,\ldots k_{\epsilon}-1$, given the initial point $\w_k$, if we run \cref{alg:argmin} 
with $T = 2t = \frac{1600 d \beta}{(\sqrt 2 -1)\alpha}$, we get from \eqref{eq:alpha1-a} 
%$\epsilon_k = ?$, $\eta_k = \frac{\sqrt{\epsilon_k}}{20\sqrt{ d \beta}}$, $ \gamma_k = \frac{(\epsilon_k/\beta)^{3/2}}{240\sqrt 2 d (D + \eta T)^2 \sqrt{\log \nicefrac{480 \sqrt{\beta d}(D + \eta T)}{\sqrt{2\epsilon_k}}}}$, where recall we assumed $D:= \frac{\|\x_{1} - \x^*\|^2}{2}$ %-- \red{we don't have a way to set $\epsilon_k$ unless $\|\|$}

\begin{align*}
\E_{\cH_k}[\|\w_{k+1} - \x^*\|^2 \mid \cH_{k-1}] \le \frac{800 d\beta \|\w_{k} - \x^*\|^2}{(\sqrt 2 -1)\alpha T} = \frac{\|\w_{k} - \x^*\|^2}{2}
\end{align*}

This implies given the history till phase $k-1$, using \cref{eq:alpha1-a} and our choice of $t_k$, 
\begin{align*}
	\E_{\cH_k}[f(
	\w_{k+1}) - f(\x^*) \mid \cH_{k-1}]  \le  \E_{\cH_k}[\frac{1}{4\alpha}\|\w_{k} - \x^*\|^2 \mid \cH_{k-1}] & \le  \frac{1}{4\alpha}(\frac{1}{2})^{k-1} \|\w_1 - \x^*\|^2 \le \frac{1}{\alpha 2^{k+1}}.
\end{align*}

Thus, to ensure at $k=k_\epsilon$, $\E[f(\w_{k_\epsilon +1}) - f(\x^*)] \le \epsilon$, this demands  
$(\nicefrac{1}{2})^{k_\epsilon+1}\alpha \le \epsilon$, or equivalently $\frac{\alpha}{2 \epsilon} \le 2^{k_\epsilon+1}$, which justifies the choice of $k_\epsilon = \log_2\big(\frac{\alpha}{\epsilon}\big) $. By \cref{thm:batch}, recall running the subroutine B-NGD$(\w_{k},\eta_k, \gamma_k,t_k,m)$ actually requires a query complexity of $2t_k = 4t$, and hence the total query complexity (over $k_\epsilon$ phases) of \cref{alg:argmin} becomes $4t k_\epsilon + t_1 = O\Big( \frac{800 d \beta}{(\sqrt 2 -1)\alpha}(\log_2\big(\frac{\alpha}{\epsilon}\big) + D) \Big)$, where recall $D:=\|\w_1 - \x^*\|^2$.	
\end{proof}	

\begin{restatable}[\citep{hazanbook,bubeckbook}]{lem}{propalpha}
	\label{lem:prop_alpha}
	%\x^*:= \arg\min_{\x \in \R}f(\x)
	If $f: \cD \mapsto \R$ is an $\alpha$-strongly convex function, with $\x^*$ being the minimizer of $f$. Then for any $\x \in \cR$,
	$
	\frac{\alpha}{2}\|\x^* - \x\|^2 \le f(\x) - f(\x^*).	
	$
\end{restatable}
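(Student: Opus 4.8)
The plan is to derive this directly from the definition of $\alpha$-strong convexity, applied at the minimizer $\x^*$. Recall that $f$ being $\alpha$-strongly convex means that $f(\x) - \tfrac{\alpha}{2}\|\x\|^2$ is convex, equivalently that for all $\x,\y \in \cD$ and $\lambda \in [0,1]$,
\[
f(\lambda \x + (1-\lambda)\x^*) \le \lambda f(\x) + (1-\lambda) f(\x^*) - \frac{\alpha}{2}\lambda(1-\lambda)\|\x - \x^*\|^2 .
\]
First I would fix an arbitrary $\x \in \cD$ and plug in $\y = \x^*$ together with this convex-combination inequality; convexity of $\cD$ guarantees $\lambda\x + (1-\lambda)\x^* \in \cD$, so all terms are defined.

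Next, since $\x^*$ is the minimizer of $f$ over $\cD$, the left-hand side satisfies $f(\lambda\x + (1-\lambda)\x^*) \ge f(\x^*)$. Substituting this lower bound and cancelling a term $(1-\lambda)f(\x^*)$ from both sides gives
\[
\lambda f(\x^*) \le \lambda f(\x) - \frac{\alpha}{2}\lambda(1-\lambda)\|\x - \x^*\|^2 .
\]
Dividing through by $\lambda > 0$ yields $f(\x^*) \le f(\x) - \tfrac{\alpha}{2}(1-\lambda)\|\x-\x^*\|^2$, and letting $\lambda \downarrow 0$ produces $f(\x^*) \le f(\x) - \tfrac{\alpha}{2}\|\x-\x^*\|^2$, which rearranges to the claimed $\tfrac{\alpha}{2}\|\x^*-\x\|^2 \le f(\x) - f(\x^*)$.

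If one prefers to assume $f$ is differentiable (as is implicitly the case throughout the paper, where gradients are used), I would instead invoke the first-order characterization of strong convexity, $f(\x) \ge f(\x^*) + \nabla f(\x^*)^\top(\x - \x^*) + \tfrac{\alpha}{2}\|\x - \x^*\|^2$, and note that the first-order optimality condition at the unconstrained-style minimizer gives $\nabla f(\x^*)^\top(\x - \x^*) \ge 0$ (or $=0$), so the middle term drops out and the inequality follows immediately. I expect no real obstacle here; the only point requiring a line of care is the passage to the limit $\lambda \downarrow 0$ (or, in the differentiable route, justifying the optimality condition when $\x^*$ lies on the boundary of $\cD$), both of which are standard. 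Since the statement is quoted as a known fact from \citep{hazanbook,bubeckbook}, a short self-contained derivation along these lines suffices.
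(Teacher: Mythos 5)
Your proposal is correct, and your primary argument takes a slightly different (and more elementary) route than the paper. The paper's proof invokes the first-order characterization of $\alpha$-strong convexity, $f(\x) - f(\y) \ge \nabla f(\y)^\top(\x-\y) + \frac{\alpha}{2}\|\x-\y\|^2$, sets $\y = \x^*$, and drops the gradient term by optimality of $\x^*$ --- which is exactly the "differentiable route" you sketch as your alternative at the end. Your main derivation instead uses the convex-combination form of strong convexity together with $f(\lambda\x + (1-\lambda)\x^*) \ge f(\x^*)$ and the limit $\lambda \downarrow 0$; this buys you independence from differentiability and sidesteps any discussion of the optimality condition at a boundary minimizer of $\cD$, at the cost of a few extra lines. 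Both arguments are standard and both are complete; the paper's is shorter because it simply assumes the gradient inequality as the definition. One tiny nit: your displayed definition writes the quantifier as "for all $\x,\y$" but then only ever uses $\y = \x^*$; this is harmless, and the algebra that follows is exactly right.
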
	

\begin{proof}
	This simply follows by the properties of $\alpha$-strongly convex function. Note by definition of $\alpha$-strong convexity, for any $\x,\y \in \R$,
\[
		f(\x) - f(\y) \ge \nabla f(\y)^\top (\x - \y) +  \frac{\alpha}{2}\|\x - \y\|^2. 
\]
The proof simply follows setting $\y = \x^*$.
\end{proof}

\iffalse %%%%%%%%%%%%%%
\subsection{Proof of \cref{thm:lb_batch}}
\label{app:lb_batch}

\lbbatch*

\begin{proof}[Proof of \cref{thm:lb_batch}]
  \red{todo}  
\end{proof}
\fi %%%%%%%%%%%%%%

\section{Appendix for \cref{sec:argmin}}
\label{app:argmin}

\subsection{Proof of \cref{thm:argmin}}
\label{app:thm_argmin}

\thmargmin*

\begin{proof}[Proof of \cref{thm:argmin}]
We start by noting that at each round $t$, the the algorithm receives the winner feedback $o_t = \arg\min(f(\x_t^1), f(\x_t^2), \ldots, f(\x_t^{\tilde m}))$, upon playing the subset $S_t = \{\x_t^1,\x_t^2, \ldots, \x_t^{\tilde m}\}$, we recall $\tilde m = 2^{\ell_m}$. 

Moreover by definition, $f(\x_t^{o_t}) < f(\x_t^i), ~\forall i \neq o^t, i \in [\ell_m]$, and hence 
$\sign\bign{f(\x_t^{o_t})-f(\x_t^i)} = -1$ for all $i$. 
%Without loss of generality let us assume $\v = (1,1,\ldots,1) \in \{0,1\}^{\ell_m}$, i.e. $\x_t^{o_t} = \w_t + \gamma U_t$ (note this is w.l.o.g. since otherwise we can always define $U_t$ as $ U_t\v$ and work with $\u_i = U_t(i)$, the $i$-th column of $U_t$, each of which are still iid draws from $\text{Unif}(\cS_d(1))$).
%
Now let us denote by $\y_t^i = \w_t + \gamma U_t \v'_i$, where $v'_i = - v_i$ and $v'_j = v_j, \forall j \in [\ell_m]$.
Note $\v'_i \in \cN(\v,\cG)$, i.e. $\v'_i$ is a neighboring node of $\v$ in the graph $\cG(V_{\ell_m})$, and also $\y_t^i \in S_t$ by construction. Hence
\[
\sign\bign{ f(\x_t^{o_t}) - f(\y_t^i) } = -1.
\]
Combining the above and the definition of $\g_t^i$, this actually implies
\[
\sign\bign{ f(\x_t^{o_t}) - f(\y_t^i) }\u_t^i = v_i \u_t^i = g_t^i.
\]
But then by \cref{thm:new_normgrad} we have, for any $d$-dimensional unit norm vector $\b \in \S_d(1)$:
\begin{align}
\label{eq:argmin1}
    \frac{c}{\sqrt{d}} \frac{\nabla f(\w)^\top}{\norm{\nabla f(\w)}}\b  - 2\lambda_t
    \leq 
    \E_{\u_t^i}[- v_i\u_t^{i\top} \b] 
    \leq \frac{c}{\sqrt{d}} \frac{\nabla f(\w)^\top}{\norm{\nabla f(\w)}}\b  + 	2\lambda_t
,
\end{align}
$\lambda_t \leq \frac{\beta \gamma \sqrt{d \ell_m}}{\norm{\nabla f(\w_t)}}\bigg( 1 + 2\sqrt{\log \frac{\norm{\nabla f(\w_t)}}{ \beta \gamma \sqrt{d\ell_m}}} \bigg)$.	
   
Now note, by the update rule:
\begin{align*}
\norm{\w_{t+1} - \x^*}^2 \leq \norm{\tilde \w_{t+1} - \x^*}^2  =  \norm{\w_{t} -\frac{\eta}{\ell_m}\sum_{i = 1}^{\ell_m}\g_t^i - \x^*}^2, \end{align*}
since we had $\g_t:= \frac{1}{\ell_m}\sum_{i=1}^{\ell_m}\g_t^i$ and the first inequality holds since projection reduces the distance to optimal $\x^*$. %This further leads to
Note the update of $\w_{t+1}$ appears to be the same update by $\ell_m$-batched sign feedback. Then following the same derivation of \cref{thm:batch} (as it proceeds from \cref{eq:thm1prf_init} in the proof of \cref{thm:batch}), with $\eta = \frac{\ell_m\sqrt{\epsilon}}{20\sqrt{ d \beta}}$, and  $ \gamma = \frac{ \epsilon^{3/2}}{960 \beta d\ell_m\sqrt {d \ell_m} D^2\ \sqrt{\log 480}}\sqrt{\frac{2}{\beta}}$ yields the desired result. 
\end{proof}

\subsection{Key Lemmas to Prove \cref{thm:argmin}}
\label{app:lem_argmin}

\begin{thm}
\label{thm:new_normgrad}	
Let $f$ is $\beta$-smooth function. 
Assume $\u_t^1,\u_t^2,\ldots\u_t^{\ell_m} \overset{\text{iid}}{\sim} \text{Unif}(\cS_d(\frac{1}{\sqrt \ell_m}))$. Denote $U_t:= [\u_t^1,\ldots,\u_t^{\ell_m}] \in \R^{d \times \ell_m}$.  Let $\x = \w + \gamma U_t \v$ for any $\v \in V_{\ell_m}$, and $\y = \w + \gamma U_t \v'$, for any $\v' \in \cN(\v,\cG)$, i.e. $\v'$ is any neighboring node of $\v$ in the graph $\cG(V_{\ell_m})$.
In particular, let $v'_i = - v_i$ and $v'_j = v_j, \forall j \in [\ell_m]$. 
Then 
\begin{align*}
    \frac{c}{\sqrt{d}} \frac{\nabla f(\w)^\top}{\norm{\nabla f(\w)}}\b  - 	2\lambda
    \leq 
    \E_{\u_i}[\sign (f(\x)-f(\y)) \u_i^\top \b] 
    \leq \frac{c}{\sqrt{d}} \frac{\nabla f(\w)^\top}{\norm{\nabla f(\w)}}\b  + 	2\lambda
,
\end{align*}
for some universal constant $c \in [\tfrac{1}{20},1]$, $\lambda \leq \frac{\beta \gamma \sqrt{d \ell_m}}{\norm{\nabla f(\w)}}\bigg( 1 + 2\sqrt{\log \frac{\norm{\nabla f(\w)}}{ \beta \gamma \sqrt{d\ell_m}}} \bigg)$ and $\b \in S_{d}(1)$ being any unit vector of dimension $d$. 
\end{thm}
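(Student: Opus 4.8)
The plan is to reduce \cref{thm:new_normgrad} to the single-direction bias bound of \cref{thm:normgrad} (equivalently \cref{lem:biasgrad}) by conditioning on all the random directions except the $i$-th one. Fix a realization of $\{\u_t^j : j \ne i\}$ and set $\m := \w + \gamma\sum_{j\ne i}v_j\u_t^j$. Since $\v'$ differs from $\v$ only in the $i$-th coordinate, with $v'_i = -v_i$, the two query points satisfy $\x = \m + \gamma v_i\u_t^i$ and $\y = \m - \gamma v_i\u_t^i$; that is, conditionally on the other directions $(\x,\y)$ is a \emph{symmetric} perturbation of the shifted center $\m$ along $v_i\u_t^i$, so $\sign(f(\x)-f(\y)) = \sign\bigl(f(\m+\gamma v_i\u_t^i)-f(\m-\gamma v_i\u_t^i)\bigr)$. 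This is exactly the situation handled by the one-perturbation lemma, only centered at $\m$ rather than at $\w$.

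Second, I would strip off the sign $v_i$ by symmetrization: since $\u_t^i\sim\text{Unif}(\cS_d(1/\sqrt{\ell_m}))$ is invariant under negation and $v_i\in\{\pm1\}$, the substitution $\u_t^i = v_i\tilde\u$ gives
\[
\E_{\u_t^i}\bigl[\sign(f(\x)-f(\y))\,(\u_t^i)^\top\b \,\big|\, \{\u_t^j\}_{j\ne i}\bigr] \;=\; v_i\,\E_{\tilde\u}\Bigl[\sign\bigl(f(\m+\gamma\tilde\u)-f(\m-\gamma\tilde\u)\bigr)\,\tilde\u^\top\b\Bigr],
\]
with $\tilde\u\sim\text{Unif}(\cS_d(1/\sqrt{\ell_m}))$; the leftover factor $v_i$ is precisely the one that later cancels against the $-v_i$ in the algorithm's estimate $\g_t^i$, so it can be carried along (equivalently, one reads $\u_t^i$ on the left as the signed direction $v_i\u_t^i$). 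Applying \cref{thm:normgrad}/\cref{lem:biasgrad} to the right-hand side -- centered at $\m$, with the sphere of radius $1/\sqrt{\ell_m}$ -- sandwiches it within $\pm 2\lambda_\m$ of $\frac{c}{\sqrt d}\frac{\nabla f(\m)^\top}{\|\nabla f(\m)\|}\b$ for some $c\in[\tfrac{1}{20},1]$, where $\lambda_\m$ is the usual bias term with $\m$ in place of $\w$.

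The remaining -- and most delicate -- step is to transfer the estimate from the shifted center $\m$ back to $\w$. Here I would use $\beta$-smoothness together with the displacement bound $\|\m-\w\| = \gamma\bigl\|\sum_{j\ne i}v_j\u_t^j\bigr\| \le \gamma(\ell_m-1)/\sqrt{\ell_m} \le \gamma\sqrt{\ell_m}$ (and similarly $\|\x-\w\|,\|\y-\w\|\le\gamma\sqrt{\ell_m}$, so the whole query set $S_t$ sits within effective radius $\gamma\sqrt{\ell_m}$ of $\w$). Replacing $\nabla f(\m)/\|\nabla f(\m)\|$ by $\nabla f(\w)/\|\nabla f(\w)\|$ costs an additional $O\bigl(\beta\|\m-\w\|/\|\nabla f(\w)\|\bigr)$; crucially the lower bound on $\|\nabla f(\w)\|$ (which sits in the denominator of $\lambda$, and is $\gtrsim\epsilon/D$ in the regime of interest by \cref{lem:gradf_lb}) is exactly what keeps this correction under control. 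Folding it in -- i.e. plugging the enlarged scale $\gamma\sqrt{\ell_m}$ into the single-direction bias estimate in place of $\gamma$ -- yields the stated $\lambda \le \frac{\beta\gamma\sqrt{d\ell_m}}{\|\nabla f(\w)\|}\bigl(1 + 2\sqrt{\log\frac{\|\nabla f(\w)\|}{\beta\gamma\sqrt{d\ell_m}}}\bigr)$. Since every inequality above holds, with the same bound, for an arbitrary fixed realization of $\{\u_t^j\}_{j\ne i}$, it persists after taking the outer expectation, which finishes the argument.

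I expect this transfer step to be the main obstacle, and the cleanest route is probably to bypass it and mimic the proof of \cref{lem:biasgrad} directly at $\w$: write $f(\x)-f(\y) = 2\gamma v_i\langle\nabla f(\w),\u_t^i\rangle + E$ with Hessian remainder $|E|\le\tfrac{\beta}{2}(\|\x-\w\|^2+\|\y-\w\|^2)\le\beta\gamma^2\ell_m$, note that $\sign(f(\x)-f(\y))$ agrees with $\sign\bigl(v_i\langle\nabla f(\w),\u_t^i\rangle\bigr)$ unless $|\langle\nabla f(\w),\u_t^i\rangle|$ is dominated by $|E|$, and bound the probability of that ``bad'' event by spherical anti-concentration of $\langle\nabla f(\w),\u_t^i\rangle$. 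The $\beta\gamma^2\ell_m$ remainder is then exactly what promotes the $\sqrt d$ of the single-direction case to the $\sqrt{d\ell_m}$ in $\lambda$, and the independence of $\u_t^i$ from $\{\u_t^j\}_{j\ne i}$ is what makes the conditioning in the first two steps rigorous.
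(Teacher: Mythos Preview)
Your ``cleanest route'' at the end---expand $f(\x)-f(\y)$ around $\w$ via $\beta$-smoothness, obtain $|f(\x)-f(\y)-2\gamma v_i\langle\nabla f(\w),\u_t^i\rangle|\le$ (remainder), and bound the sign-disagreement probability by anti-concentration of $\langle\nabla f(\w),\u_t^i\rangle$, then combine with \cref{lem:normgrad}---is exactly what the paper does. The paper packages this as a standalone \cref{lem:newbiasgrad} (the multiway analogue of \cref{lem:biasgrad}) and deduces \cref{thm:new_normgrad} by pairing it with \cref{lem:normgrad}; there is no conditioning on $\{\u_t^j\}_{j\ne i}$ and no intermediate center $\m$.

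Your first route (condition, recenter at $\m$, apply the single-direction lemma there, then transfer $\nabla f(\m)/\|\nabla f(\m)\|$ back to $\nabla f(\w)/\|\nabla f(\w)\|$) is a genuine alternative, and you correctly diagnose the transfer step as the fragile part: controlling the change in the \emph{normalized} gradient costs an additive $O(\beta\|\m-\w\|/\|\nabla f(\w)\|)$ and requires the lower bound on $\|\nabla f(\w)\|$. The paper sidesteps this entirely by never leaving $\w$: since the smoothness remainder is bounded deterministically (independently of the other $\u_t^j$), there is nothing to condition on and nothing to transfer. So both approaches work, but the direct one you arrive at is shorter and avoids the normalized-gradient perturbation argument; the conditioning route would buy you something only if you wanted to quote \cref{lem:biasgrad} as a black box rather than rerun its proof. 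One minor note: in the direct route the paper records the remainder simply as $\beta\gamma^2$ (picking up the $\sqrt{\ell_m}$ solely from the scaling $\|\u_t^i\|=1/\sqrt{\ell_m}$ in the anti-concentration step), whereas your $\|\x-\w\|,\|\y-\w\|\le\gamma\sqrt{\ell_m}$ accounting gives $\beta\gamma^2\ell_m$; either way the mechanism is the same and only affects the exponent of $\ell_m$ inside $\lambda$.
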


The proof of \cref{thm:new_normgrad} follows combining the results of \cref{lem:newbiasgrad} and \cref{lem:normgrad} (from \citep{SKM21}), proved below.
 
\begin{restatable}[]{lem}{biasgrad}
	\label{lem:newbiasgrad}	
Let $f$ is $\beta$-smooth function. 
Assume $\u_t^1,\u_t^2,\ldots\u_t^{\ell_m} \overset{\text{iid}}{\sim} \text{Unif}(\cS_d(\frac{1}{\sqrt \ell_m}))$. Denote $U_t:= [\u_t^1,\ldots,\u_t^{\ell_m}] \in \R^{d \times \ell_m}$.  Let $\x = \w + \gamma U_t \v$ for any $\v \in V_{\ell_m}$, and $\y = \w + \gamma U_t \v'$, for any $\v' \in \cN(\v,\cG)$, i.e. $\v'$ is any neighboring node of $\v$ in the graph $\cG(V_{\ell_m})$.
In particular, let $v'_i = - v_i$ and $v'_j = v_j, \forall j \in [\ell_m]$. 
Then for any unit dimension $d$ vector $\b \in \cS_d(1)$ we have:
	\begin{align*}
		\Big|\E_{\u_i}[\sign  (f(\x)-f(\y)) \u_i^\top \b] - \E_{\u_i}[\sign(\nabla f(\w) \dotp \u_i) \u_i^\top \b] \Big|
		\leq 
		2\lambda,	
	\end{align*}
where $\lambda \leq \frac{\beta \gamma \sqrt{d \ell_m}}{\norm{\nabla f(\w)}}\bigg( 1 + 2\sqrt{\log \frac{\norm{\nabla f(\w)}}{ \beta \gamma \sqrt{d\ell_m}}} \bigg)$. 	
\end{restatable}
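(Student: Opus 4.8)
The plan is to prove \cref{lem:newbiasgrad} by controlling the probability that the ``battling'' sign disagrees with the idealized linear sign, i.e. that $\sign(f(\x)-f(\y)) \neq \sign(\nabla f(\w)\dotp(\x-\y))$, and then bounding the resulting contribution to the expectation. First I would unpack the geometry of the query pair: since $\y$ differs from $\x$ only by flipping the $i$-th coordinate of $\v$, we have $\x - \y = 2\gamma v_i \u_t^i$, so $\sign(f(\x)-f(\y))\u_t^i$ is (up to the fixed sign $v_i$, which is absorbed because $\u_t^i$ is symmetric) exactly the quantity appearing in \cref{lem:normgrad}. The key point is that $\x$ and $\y$ are \emph{not} centered at $\w$ symmetrically — they are both offset by the common term $\gamma\sum_{j\neq i}\u_t^j v_j$ — so I cannot directly invoke the single-perturbation bias lemma; instead I must write $\x = \w' + \gamma v_i \u_t^i$ and $\y = \w' - \gamma v_i\u_t^i$ where $\w' := \w + \gamma\sum_{j\neq i}\u_t^j v_j$, a random point within distance $\gamma\sqrt{(\ell_m-1)/\ell_m} \le \gamma$ of $\w$.

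The main step is then a Taylor/smoothness estimate: by $\beta$-smoothness,
\[
f(\x) - f(\y) = \nabla f(\w')\dotp(\x-\y) + E, \qquad |E| \le \tfrac{\beta}{2}\bign{\|\x-\w'\|^2 + \|\y-\w'\|^2} = \beta\gamma^2\|\u_t^i\|^2 = \tfrac{\beta\gamma^2}{\ell_m},
\]
and similarly $\|\nabla f(\w') - \nabla f(\w)\| \le \beta\|\w'-\w\| \le \beta\gamma$. So $f(\x)-f(\y) = \nabla f(\w)\dotp(\x-\y) + E'$ with $|E'| \le \beta\gamma^2/\ell_m + \beta\gamma\|\x-\y\| = \beta\gamma^2/\ell_m + 2\beta\gamma^2 v_i/\sqrt{\ell_m}$, which is of order $\beta\gamma^2/\sqrt{\ell_m}$. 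The sign of $f(\x)-f(\y)$ therefore agrees with $\sign(\nabla f(\w)\dotp(\x-\y)) = \sign(\nabla f(\w)\dotp \u_t^i)\cdot\sign(v_i)$ unless $|\nabla f(\w)\dotp\u_t^i| \le |E'|/(2\gamma|v_i|) \lesssim \beta\gamma/\sqrt{\ell_m}$. The difference of the two expectations in the lemma is bounded by $2\Pr_{\u_t^i}\bign{|\nabla f(\w)\dotp\u_t^i| \le c'\beta\gamma/\sqrt{\ell_m}}$ (the factor $2$ from $\|\u_t^i\|\le 1$ and the sign flipping in the worst case; note $|\u_t^i{}^\top\b|\le\|\u_t^i\|\le 1/\sqrt{\ell_m}\le 1$).

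Finally I would bound this probability using the anti-concentration of a projection of a uniform point on the sphere $\cS_d(1/\sqrt{\ell_m})$ onto the direction $\nabla f(\w)/\|\nabla f(\w)\|$: the density of such a one-dimensional marginal is bounded by roughly $\sqrt{d}$ (after rescaling by the radius $1/\sqrt{\ell_m}$), so $\Pr(|\nabla f(\w)\dotp\u_t^i| \le s) \lesssim \sqrt{d\ell_m}\, s/\|\nabla f(\w)\|$ for the relevant range, plus I would incorporate the $\sqrt{\log(\cdot)}$ correction exactly as in \cref{lem:biasgrad}/\cite{SKM21} to handle the regime where $s$ is not small relative to the typical scale $1/\sqrt{d\ell_m}$ — this is where the $\sqrt{\log\frac{\|\nabla f(\w)\|}{\beta\gamma\sqrt{d\ell_m}}}$ term enters. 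Substituting $s \asymp \beta\gamma/\sqrt{\ell_m}$ gives $\lambda \lesssim \frac{\beta\gamma\sqrt{d\ell_m}}{\|\nabla f(\w)\|}\bign{1 + 2\sqrt{\log\frac{\|\nabla f(\w)\|}{\beta\gamma\sqrt{d\ell_m}}}}$, as claimed. The main obstacle I anticipate is the bookkeeping around the shifted center $\w'$: unlike the clean symmetric perturbation in \cite{SKM21}, here the ``antipodal'' pair is symmetric about $\w'\neq\w$, so I must carefully verify that (i) the symmetrization argument still works when conditioning on the other $\u_t^j$'s, and (ii) replacing $\nabla f(\w')$ by $\nabla f(\w)$ only costs an extra $O(\beta\gamma)$ that is dominated by the $O(\beta\gamma\sqrt{d\ell_m})$ already present — I expect this to go through but it requires treating the conditional expectation over $\u_t^i$ given $\{\u_t^j\}_{j\neq i}$ as the basic object.
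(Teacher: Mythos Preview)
Your plan is correct and follows the same skeleton as the paper's proof: use $\beta$-smoothness to bound $|f(\x)-f(\y)-\nabla f(\w)\dotp(\x-\y)|$, conclude that the two signs can only disagree when $|\nabla f(\w)\dotp\u_t^i|$ falls below a threshold of order $\beta\gamma$, and then bound that probability via the Gaussian representation of $\u_t^i\sim\text{Unif}(\cS_d(1/\sqrt{\ell_m}))$ together with the chi-square tail (\cref{lem:chi}), exactly as in \cref{lem:biasgrad}. The one genuine difference is where you center the Taylor expansion. You expand around the midpoint $\w'=\w+\gamma\sum_{j\neq i}v_j\u_t^j$, then correct $\nabla f(\w')$ back to $\nabla f(\w)$ using gradient Lipschitzness; the paper instead expands both $f(\w+\gamma\u)$ and $f(\w+\gamma\u')$ directly around $\w$ and subtracts. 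The paper's route is more direct and sidesteps precisely the ``shifted center / conditional expectation over $\u_t^i$ given $\{\u_t^j\}_{j\neq i}$'' bookkeeping you flag as your main obstacle --- since the expansion is at $\w$ from the start, the common offset $\sum_{j\neq i}v_j\u_t^j$ never has to be handled separately, and one lands immediately on $|f(\x)-f(\y)-2\gamma\u_1\dotp\nabla f(\w)|\le\beta\gamma^2$ (up to the norm of the aggregate perturbation).

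One concrete slip to fix: your bound $\|\w'-\w\|\le\gamma\sqrt{(\ell_m-1)/\ell_m}$ is not true deterministically --- the $\u_t^j$ are independent, not orthogonal, so only $\|\w'-\w\|\le\gamma(\ell_m-1)/\sqrt{\ell_m}$ holds via the triangle inequality. With the corrected bound the gradient-correction term becomes $\|\nabla f(\w')-\nabla f(\w)\|\cdot\|\x-\y\|\lesssim\beta\gamma\sqrt{\ell_m}\cdot\gamma/\sqrt{\ell_m}=\beta\gamma^2$ (not $\beta\gamma^2/\sqrt{\ell_m}$), which raises the sign-disagreement threshold to $|\nabla f(\w)\dotp\u_t^i|\lesssim\beta\gamma$. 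Feeding this into the anti-concentration step (marginal of $\u_t^i$ along $\nabla f(\w)$ has scale $\|\nabla f(\w)\|/\sqrt{d\ell_m}$) then gives exactly the stated $\lambda\lesssim\beta\gamma\sqrt{d\ell_m}/\|\nabla f(\w)\|$, so the final claim survives; your intermediate constants were simply too optimistic by a factor of~$\sqrt{\ell_m}$.
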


\begin{proof}
Without loss of generality, assume $\v = (1,1,\ldots,1) \in \{0,1\}^{\ell_m}$ and $\v' = (-1,1,\ldots,1)$, i.e. $i=1$. %Note this is without loss of generality as we can always define $\tilde U_t = U_t \v$ and work with $\tilde U_t$.
Thus $\x = \w + \gamma(\u_1 + \u_2 + \ldots + \u_{\ell_m})$, 
and $\y = \w + \gamma(-\u_1 + \u_2 + \ldots + \u_{\ell_m})$. 
%Also, let us denote by $\z = \w + \gamma(\u_2 + \ldots + \u_{\ell_m})$. 
Also let us denote by 
$\u = (\u_1 + \u_2 + \ldots + \u_{\ell_m})$, 
$\u' = (- \u_1 + \u_2 + \ldots + \u_{\ell_m})$. 

From smoothness we have
\begin{align*}
    \gamma \u_i \dotp \nabla f(\w) - \tfrac12 \beta \gamma^2
    &\leq
    f(\w+ \gamma \u) - f(\w) 
    \leq
     \gamma \u \dotp \nabla f(\w) + \tfrac12 \beta \gamma^2;
    \\
   \gamma \u' \dotp \nabla f(\w) - \tfrac12 \beta \gamma^2
    &\leq
    f(\w+ \gamma \u') - f(\w) 
    \leq
    \gamma \u' \dotp \nabla f(\w) + \tfrac12 \beta \gamma^2
    .
\end{align*}
Subtracting the inequalities, we get
\begin{align*}
    \abs*{ f(\w+ \gamma \u) - f(\w + \gamma \u') - 2 \gamma \u_1 \dotp \nabla f(\x) }
    \leq
    \beta \gamma^2
    .
\end{align*}
Therefore, if $\beta \gamma^2 \leq  \gamma \abs{ \u_1 \dotp \nabla f(\w) }$, we will have that $\sign(f(\w+ \gamma \u) - f(\w+ \gamma \u')) = \sign(\u_1 \dotp \nabla f(\w))$. 
Let us analyse $\Pr_{\u_1}(\beta \gamma \geq \abs{ \u_1 \dotp \nabla f(\w) })$.
We know for $\v \sim \cN(\0_d,\cI_d)$, $\tilde \v:= \v/\|\v\| \sim \cS_d(1)$, i.e. $\tilde \v$ is uniformly distributed on the unit sphere, and hence $\dfrac{\v}{\norm{\v}\sqrt {\ell_m}} \sim \cS_d(1/\sqrt{\ell_m})$.  Then can write:
\begin{align*}
    \P_{\u_1}&\brk!{ |\u_1 \dotp \nabla f(\w)| \leq \beta \gamma } 
    =
    \P_{\v}\brk!{ \abs{\v \dotp \nabla f(\w)} \leq \beta \gamma \norm{\v}\sqrt{\ell_m}}
    \\
    &\leq
    \P_{\v}\brk!{ \abs{\v \dotp \nabla f(\w)} \leq 2\beta \gamma \sqrt{d{\ell_m}\log (1/ \gamma')}} + \P_\v \brk{\norm{\v} \geq 2\sqrt{d\ell_m\log(1/ \gamma')}}
    \\
    &\leq 
    \P_{\v}\brk!{ \abs{\v \dotp \nabla f(\w)} \leq 2\beta \gamma \sqrt{d\ell_m \log( 1/ \gamma')}} + \gamma'
    ,
\end{align*}
where the final inequality is since $\P_\v \brk{\norm{\v}^2 \leq 2d\ell_m\log(1/ \gamma')} \geq 1- \gamma'$ for any $ \gamma'$ (see Lemma~\ref{lem:chi}).
On the other hand, since $\v \dotp \nabla f(\w) \sim \cN(0,\|\nabla f(\w)\|^2)$, we have for any $a>0$ that
\begin{align*}
    \Pr\brk*{ |\v \dotp \nabla f(\w)| \leq a }
    \leq
    \frac{2a}{\|\nabla f(\w)\| \sqrt{2\pi}}
    \leq
    \frac{a}{\|\nabla f(\w)\|}
    .
\end{align*}
Setting, $a = 2\beta \gamma \sqrt{d\ell_m \log( 1/ \gamma')}$, and combining the inequalities, we have that $\sign(f(\w+ \gamma \u) - f(\w-  \gamma \u)) = \sign(\u \dotp \nabla f(\w))$ except with probability at most
\begin{align*}
    \inf_{ \gamma'>0} \bigg\{ \gamma' + \frac{2\beta \gamma \sqrt{d\ell_m \log( 1/ \gamma')}}{\|\nabla f(\w)\|}\bigg\} = \lambda ~\text{ (say)}
   ,
\end{align*}
and further choosing $\gamma' = \frac{\beta \gamma \sqrt{d \ell_m}}{\norm{\nabla f(\w)}}$, we get that $\lambda \leq \frac{\beta \gamma \sqrt{d \ell_m}}{\norm{\nabla f(\w)}}\bigg( 1 + 2\sqrt{\log \frac{\norm{\nabla f(\w)}}{ \beta \gamma \sqrt {d\ell_m}}} \bigg)$. 
As for the claim about the expectation, note that for any vector $\b \in \cS_d(1)$,
\begin{align*}
\Big|\E_{\u_1}&[\sign(f(\w+ \gamma \u)-f(\w + \gamma \u')) \u_1\tr \b] - \E_{\u_1}[\sign(\nabla f(\w) \dotp \u_1) \u_1\tr \b] \Big|
\leq 
2\lambda
,
\end{align*}
as with probability $1-\lambda$ the two expectations are equal, and otherwise, they  differ by at most $2$.
\end{proof}

\begin{restatable}[\cite{SKM21}]{lem}{lemngrad}
\label{lem:normgrad}	
For a given vector $\g \in \R^d$ and a random unit vector $\u$ drawn uniformly from $\cS_d(1)$, we have
\begin{align*}
    \E[\sign(\g \dotp \u) \u]
    =
    \frac{c}{\sqrt{d}} \frac{\g}{\norm{\g}}
    ,
\end{align*}
for some universal constant $c \in [\tfrac{1}{20},1]$. 
\end{restatable}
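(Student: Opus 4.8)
The plan is to reduce the claim to a one-dimensional moment computation using the rotational symmetry of the uniform measure on the sphere, and then bound that moment from above and below.

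First I would use that the law of $\u$ is invariant under orthogonal maps: writing $\g = \norm{\g}\,Q\e_1$ for a suitable orthogonal $Q$, the map $\g \mapsto \E[\sign(\g\dotp\u)\u]$ is rotation-equivariant (since $Q^\top\u$ is again uniform on $\cS_d(1)$) and scaling $\g$ by the positive factor $\norm{\g}$ does not affect $\sign(\g\dotp\u)$, so it suffices to prove $\E[\sign(u_1)\,\u] = \tfrac{c}{\sqrt d}\,\e_1$. For any coordinate $j \neq 1$, the reflection flipping the sign of $u_j$ preserves $\text{Unif}(\cS_d(1))$ and fixes $\sign(u_1)$, whence $\E[\sign(u_1)u_j] = -\E[\sign(u_1)u_j] = 0$; therefore $\E[\sign(u_1)\u] = \E[\,|u_1|\,]\,\e_1$, and the lemma reduces to showing $c := \sqrt d\,\E[\,|u_1|\,]$ lies in $[\tfrac{1}{20},1]$ for every $d$.

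For the upper bound, Cauchy--Schwarz gives $\E[\,|u_1|\,] \le \sqrt{\E[u_1^2]}$, and since $\sum_{j=1}^d u_j^2 \equiv 1$ with the coordinates exchangeable we get $\E[u_1^2] = \tfrac1d$, so $c \le 1$. For the lower bound I would represent $\u = \v/\norm{\v}$ with $\v \sim \cN(\0_d,\I_d)$ and invoke the classical fact that the direction $\v/\norm{\v}$ is independent of the radius $\norm{\v}$; then $\E[\,|v_1|\,] = \E[\,|u_1|\,\norm{\v}\,] = \E[\,|u_1|\,]\,\E[\norm{\v}]$, hence $\E[\,|u_1|\,] = \sqrt{2/\pi}\big/\E[\norm{\v}] \ge \sqrt{2/\pi}\big/\sqrt{\E[\norm{\v}^2]} = \sqrt{2/(\pi d)}$, which gives $c \ge \sqrt{2/\pi} > \tfrac{1}{20}$.

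The argument is short, and the only slightly delicate ingredient is the radius/direction independence for Gaussian vectors used in the lower bound; if one prefers to avoid it, the same two bounds follow by writing down the marginal density of $u_1$ (proportional to $(1-x^2)^{(d-3)/2}$ on $[-1,1]$) and evaluating the resulting elementary Beta-function integrals. I expect the Gaussian route to be the cleaner one, so that step is the main thing to get right; the rest is symmetry and second-moment bookkeeping.
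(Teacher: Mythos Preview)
Your proposal is correct, and its overall skeleton---reduce by rotational symmetry to computing $\E[\,|u_1|\,]$, upper-bound via $\E[u_1^2]=1/d$, lower-bound via a Gaussian representation $\u=\v/\norm{\v}$---matches the paper's proof. The one genuine difference is in the lower bound: you use the independence of the direction $\v/\norm{\v}$ and the radius $\norm{\v}$ to write $\E[\,|u_1|\,]=\E[\,|v_1|\,]/\E[\norm{\v}]$ and then bound $\E[\norm{\v}]\le\sqrt d$, which yields the sharp constant $c\ge\sqrt{2/\pi}\approx 0.798$. The paper instead argues the lower bound via a crude tail estimate: it bounds $\Pr\bigl[\,|u_1|\ge 1/(4\sqrt d)\,\bigr]$ from below by a union bound on the events $\{|v_1|\ge 1/\sqrt d\}$ and $\{\norm{\v}\le 4\}$ (with $\v\sim\cN(0,\tfrac1d\I_d)$), obtaining $\E[\,|u_1|\,]\ge 1/(20\sqrt d)$ and hence only $c\ge 1/20$. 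Your route is shorter and gives a much better constant; the paper's route avoids invoking the radius/direction independence fact but pays for that with a weaker bound. You also spell out explicitly why the expectation points along $\g$ (via the coordinate reflections), which the paper leaves implicit.
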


\begin{proof}[This proof is same as the proof in \cite{SKM21}]
Without loss of generality we can assume $\norm{\g} = 1$, since one can divide by $\norm{\g}$ in both side of Lem. \ref{lem:normgrad} without affecting the claim.
Now to bound $\E[\abs{\g \dotp \u}]$, note that since $\u$ is drawn uniformly from $\cS_d(1)$, by rotation invariance this equals $\E[\abs{u_1}]$.
For an upper bound, observe that by symmetry $\E[u_1^2] = \tfrac1d \E[\sum_{i=1}^d u_i^2] = \tfrac1d$ and thus
\begin{align*}
    \E[\abs{u_1}]
    \leq
    \sqrt{\E[u_1^2]}
    =
    \frac{1}{\sqrt{d}}
    .
\end{align*}
We turn to prove a lower bound on $\E[\abs{\g \dotp \u}]$.
If $\u$ were a Gaussian random vector with i.i.d.~entries $u_i \sim \cN(0,1/d)$, then from standard properties of the (truncated) Gaussian distribution we would have gotten that $\E[\abs{u_1}] = \sqrt{2/\pi d}$. 
For $\u$ uniformly distributed on the unit sphere, $u_i$ is distributed as $v_1/\norm{\v}$ where $\v$ is Gaussian with i.i.d.~entries $\cN(0,1/d)$.
We then can write
\begin{align*}
    \Pr\brk*{ \abs{u_1} \geq \frac{\epsilon}{\sqrt{d}} }
    & =
    \Pr\brk*{ \frac{\abs{v_1}}{\norm{\v}} \geq \frac{\epsilon}{\sqrt{d}} }
    \geq
    \Pr\brk*{ \abs{v_1} \geq \frac{1}{\sqrt{d}} \mbox{ and } \norm{\v} \leq \frac{1}{\epsilon} }
    \\
    & \geq
    1 - \Pr\brk*{ \abs{v_1} < \frac{1}{\sqrt{d}}} - \Pr\brk*{\norm{\v} > \frac{1}{\epsilon} }
    .
\end{align*}
%\red{Means the lower bound on $\gamma$ holds only with probability $(1-0.7-1/16) \approx 0.25$???}. 
Since $\sqrt{d} v_1$ is a standard Normal, we have
\begin{align*}
    \Pr\brk*{ \abs{v_1} < \frac{1}{\sqrt{d}}}
    =
    \Pr\brk*{ -1 < \sqrt{d} v_1 < 1}
    =
    2\Phi(1) - 1
    \leq
    0.7
    ,
\end{align*}
and since $\E[\norm{\v}^2] = 1$ an application of Markov's inequality gives
\begin{align*}
    \Pr\brk!{\norm{\v} > \frac{1}{\epsilon}}
    =
    \Pr\brk!{\norm{\v}^2 > \frac{1}{\epsilon^2}}
    \leq
    \epsilon^2 \E[\norm{\v}^2]
    =
    \epsilon^2
    .
\end{align*}
For $\epsilon=\tfrac14$ this implies that $\Pr\brk!{ \abs{u_1} \geq 1/4\sqrt{d} } \geq \tfrac15$, whence
$
    \E[\abs{\g \dotp \u}]
    =
    \E[\abs{u_1}]
    \geq
    1/20\sqrt{d}
    .
$
\end{proof}

%%%%%%%%%%%%%%%%%%%%%%%%%%%%%%%%%%%%%%%%%%%%%%%%%%%%%%%%%%%%

\subsection{Proof of \cref{thm:argmin_strng}}
\label{app:thm_argmin_strng}

%\red{Modify the algorithm parameters===}
\vspace{-5pt}
\begin{center}
\begin{algorithm}[h]
\caption{\textbf{Improved Battling-NGD} (with Strong Convexity)}
\label{alg:armin_strng}
\begin{algorithmic}[1] 
\STATE {\bfseries Input:} Error tolerance $\epsilon > 0$, Batch size $m$%
\STATE {\bf Initialize} Initial point: $\w_1 \in \R^d$ such that $D := \|\w_1 - \x^*\|^2$ (assume known). 
\\ Phase counts $k_\epsilon:= \lceil \log_2\big(\frac{\alpha}{\epsilon}\big)  \rceil$, $t \leftarrow \frac{800 d \beta}{(\sqrt 2 -1)\alpha}$   %$k_\epsilon = \log\big(\frac{\beta\|\x_1 - \x^*\|^2}{2 \epsilon}\big)$
\\ $\eta_1 \leftarrow \frac{m\sqrt{\epsilon_1}}{20\sqrt{ d \beta}}, \epsilon_1 = \frac{400 d\beta D}{(\sqrt 2 -1)t_1} = 1$, $t_1 = t\|\w_1-\x^*\|^2$
\\ $ \gamma_1 \leftarrow \frac{ \epsilon_1^{3/2}}{960 \beta d\ell_m\sqrt{d\ell_m} D^2\ \sqrt{\log 480}}\sqrt{\frac{2}{\beta}}$.
\STATE Update $\w_{2} \leftarrow$ \textbf{Battling-NGD}$\big(\w_1,\eta_1, \gamma_1,t_1,m\big)$
\FOR{$k = 2,3,\ldots, k_\epsilon$}
\STATE $\eta_k \leftarrow \frac{m\sqrt{\epsilon_k}}{20\sqrt{ d \beta}}, \epsilon_k = \frac{400 d\beta}{(\sqrt 2 -1)t_k}$, $t_k = 2t$
\\$ \gamma_k \leftarrow \frac{ \epsilon_k^{3/2}}{960 \beta d\ell_m\sqrt{d\ell_m} D^2\ \sqrt{\log 480}}\sqrt{\frac{2}{\beta}}$.
\STATE Update $\w_{k+1} \leftarrow$ \textbf{Battling-NGD}$\big(\w_{k},\eta_k, \gamma_k,t_k,m\big)$
\ENDFOR   
\STATE Return $\m_{\epsilon} = \w_{k_\epsilon+1}$
\end{algorithmic}
\end{algorithm}
\end{center}
\vspace{-5pt}

\thmargstrng*

\begin{proof}[Proof of \cref{thm:argmin_strng}]
The proof follows from the exactly same analysis as the proof of \cref{thm:batch_strng}.  
\end{proof}

%%%%%%%%%%%%%%%%%%%%%%%%%%%%%%%%%%%%%%%%%%%%%%%%%%%%%%%%%%%%

\iffalse%%%%%%%%%%%%%%%%%%%
\subsection{Proof of \cref{thm:lb_argmin}}
\label{app:lb_argmin}

\lbargmin*

\begin{proof}[Proof of \cref{thm:lb_argmin}]
  \red{todo}  
\end{proof}
\fi%%%%%%%%%%%%%%%%%%%%%%%%

\section{Projected Dueling Convex Optimization with Single Sign Feedback} 
\label{sec:single}

%In this section we propose an optimization algorithm for our problem (see Objective in \cref{sec:prob}) for any convex and $\beta$-smooth $f$ ($\beta>0$). Note the primary difficulty in designing an efficient algorithm for the purpose lies in the fact that we cannot hope to estimate the gradient of $f$ for any general dueling/pairwise preference model (i.e., any general $\rho$). Thus we can not apply the standard \emph{gradient descent} based techniques to address this problem \citep{boydbook,bubeckbook,hazanbook}. 

\textbf{Main Idea: Estimating Gradient Directions (Normalized Gradients): } The algorithmic idea of our proposed algorithm is almost the same as what was proposed in \cite{SKM21}. Essentially, we start with any arbitrary `current estimate' of the function minimizer $\w_1 \in \cD$, and at any round $t$, we compute normalized gradient estimate $\g_t$ such that $\g_t:=o_t\u_t$, $\u_t \sim \text{Unif}(\cS_d(1))$ being any random unit direction in $\R^d$ and $o_t = \sign\Bign{f(\x_t)-f(\y_t)}$ is the sign feedback of the queried duel $(x_t,y_t)$ at round $t$, such that 
$\x_t = \w_t + \gamma\u_t$,
and
$\y_t = \w_t - \gamma\u_t$, $\gamma$ being any tunable perturbation step size.
Subsequently, the algorithm takes a step along the estimated descent direction $\g_t$ with (tunable) step-size $\eta$ and updates the current estimate $\w_{t+1} \leftarrow \w_t - \eta \g_t$ and repeat up to any given number of $T$ steps before outputting the final estimate of the minimizer $\w_{T+1}$. 
% We get around the difficulty by noting that, though one may not be able to estimate the exact gradient of $f$, $\nabla f(\w)$, at a given point of interest $\w \in \cD$, we can hope to estimate a `\pprox\, of $\nabla f(\w)$', called \pgd\, of $f$ at $\w$, from the $1$-bit preference feedback generated according to the transfer function (or pairwise preference model) $\rho$. 

\subsection{Algorithm Design: Projected Normalized Gradient Descent (P-NGD)} 

In this section, we analyzed the \probsgn\, problem for a bounded decision space $\cD$. We describe a normalized gradient descent based algorithm for the purpose and anlyzed its convergence guarantee in \cref{thm:single}. 
It's important to note that the same problem was analyzed in \citep{SKM21}, but their analysis was limited to limited unbounded decision spaces only, which is unrealistic for practical problems and also led to more complication tuning of the learning parameters $\gamma>0$ and $\eta>0$.

\begin{center}
	\begin{algorithm}[h]
		\caption{Projected Normalized Gradient Descent (P-NGD)} 
		\label{alg:pngd}
		\begin{algorithmic}[1] 
			\STATE {\bfseries Input:} Initial point: $\w_1 \in \cD$, Initial distance: $D$ s.t. $D \geq \|\w_1 - \x^*\|^2$, Learning rate $\eta$, Perturbation parameter $ \gamma$, Query budget $T$ 
			\STATE {\bf Initialize} Current minimum $\m_1 = \w_1$
			\FOR{$t = 1,2,3,\ldots, T$}
			\STATE Sample $\u_t \sim \text{Unif}(\cS_d(1))$ %\text{unif}\{\e_1,\ldots,\e_d\}
			\STATE Set  $\x_{t} := \w_t + \gamma \u_t$,~ 
			            $\y_{t} := \w_t - \gamma \u_t$
			\STATE Play the duel $(\x_{t},\y_t)$, and observe $o_t \in {\pm 1}$ such that $o_t = \sign\big( f(\x_{t}) - f(\y_{t}) \big)$. 
			\STATE Update $\tw_{t+1} \leftarrow \w_t - \eta \g_t$, 
			where $\g_t = o_t \u_t$
			\STATE Project $\w_{t+1} = \arg\min_{\w \in \cD}\norm{\w - \tw_{t+1}}$
                \STATE Query the pair $(\m_{t}, \w_{t+1})$ and receive $\sign\bign{ f(\m_{t}) - f(\w_{t+1}) }$. 
			\STATE Update $\m_{t+1} \leftarrow 
			\begin{cases}
				\m_{t} ~\text{ if } \sign\bign{ f(\m_{t}) - f(\w_{t+1}) }<0\\
				\w_{t+1} ~\text{ otherwise }  
			\end{cases}$
			\ENDFOR   
			\STATE Return $\m_{T+1}$
		\end{algorithmic}
	\end{algorithm}
\end{center}
%\vspace{-10pt}

\textbf{Algorithm description: P-NGD } Our algorithm follows the same strategy same as the $\beta$-NGD (Algorithm 1) of \citep{SKM21} modulo a projection step (see Line $8$) which we had to incorporate for assuming bounded decision space $\cD$: The main idea is to estimate gradient direction of any point $\w \in \cD$ (normalized gradient estimate) querying the $\sign$ feedback of two symmetrically opposite points $(\w + \gamma\u, \w-\gamma\u)$, $\u \sim \text{Unif}(\cS_d(1))$ being any random unit direction, and simply take a `small-enough' step $(\eta)$ in the opposite direction of the estimated gradient. More formally, at each round $t$ the algorithm maintains a current point $\w_t$, initialized to any random point $\w_1 \in \cD$, and query two symmetrically opposite points $(\w_t + \gamma\u_t, \w_t-\gamma\u_t)$ along a random unit direction $\u_t\sim \text{Unif}(\cS_d(1))$. Following this it finds a normalized gradient estimate at $\w_t$, precisely $g_t = o_t\u_t$ based on \cref{thm:normgrad}, where $o_t = \sign\bign{f(\w_t + \gamma\u_t)-f(\w_t + \gamma\u_t)}$ is the sign feedback of the queried duel $(x_t,y_t)$. Subsequently, we update the running prediction using a (normalized) gradient descent step: $\tilde \w_{t+1} \leftarrow \w_t - \eta \g_t$ followed by a projection $\w_{t+1} = \arg\min_{\w \in \cD}\norm{\w - \tw_{t+1}}$.  The algorithm also maintains a running minimum $\m_t$ which essentially keeps track of $\min\{\w_1,\ldots,\w_t\}$. The complete algorithm is given in \cref{alg:pngd}.
%\vspace{-5pt}

\begin{restatable}[Convergence Analysis of \cref{alg:pngd} for $\beta$-Smooth Functions]{thm}{thmsingle}
\label{thm:single}
Consider $f$ to be $\beta$ smooth, and the desired accuracy level (suboptimality gap) is given to be $\epsilon > 0$.   
Then if Alg. \ref{alg:pngd} is run with $\eta = \frac{\sqrt{\epsilon}}{20\sqrt{ d \beta}}$,  $ \gamma = \frac{ \epsilon^{3/2}}{480 \beta d D^2\ \sqrt{\log 480}}\sqrt{\frac{2}{\beta}}$ and $T = T_\epsilon =  O\Big(\frac{d\beta D}{m \epsilon} \Big)$, $D\geq \|\w_{1} - \x^*\|^2$ being any upper bound on the initial distance from the optimal,  Alg. \ref{alg:pngd} returns an $\epsilon$-optimal point in at most $2T_\epsilon$ pairwise queries; i.e.
\[
\E[f(\m_{T+1})] - f(\x^*) \le \epsilon.
\] 
\end{restatable}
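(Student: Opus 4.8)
The plan is to mirror the argument in the proof sketch of \cref{thm:batch}, specialized to the single-pair case $m=1$, while carefully tracking the extra projection step. First I would set up the basic recursion: since $\w_{t+1}$ is the Euclidean projection of $\tw_{t+1} = \w_t - \eta\g_t$ onto the convex set $\cD$ and $\x^* \in \cD$, projection is nonexpansive, so $\norm{\w_{t+1}-\x^*}^2 \le \norm{\tw_{t+1}-\x^*}^2 = \norm{\w_t - \x^*}^2 - 2\eta\,(\w_t-\x^*)^\top\g_t + \eta^2\norm{\u_t}^2$, and $\norm{\u_t}^2 = 1$ since $\u_t \in \cS_d(1)$. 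This is exactly \cref{eq:thm1prf_init} with $m=1$, so all the cross-term bookkeeping that complicates the batched proof disappears.

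Next I would take conditional expectation over $\u_t$ given the history $\cH_t$, and invoke the normalized-gradient estimate guarantee (\cref{thm:normgrad}, together with the bias bound \cref{lem:biasgrad}): $\E_{\u_t}[\g_t \mid \cH_t] = \E_{\u_t}[o_t\u_t \mid \cH_t]$ is, up to an additive error $\lambda_t$ in every coordinate direction, equal to $\tfrac{c}{\sqrt d}\,\n_t$ where $\n_t = \nabla f(\w_t)/\norm{\nabla f(\w_t)}$ and $c \in [\tfrac1{20},1]$. Hence $(\w_t-\x^*)^\top \E_{\u_t}[\g_t\mid\cH_t] \ge \tfrac{c}{\sqrt d}(\w_t-\x^*)^\top\n_t - 2\lambda_t\norm{\w_t-\x^*}$. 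Then I would use Claim-2 of \cref{lem:cvxgrad} to lower bound $(\w_t-\x^*)^\top\n_t$ by $\tfrac{c\sqrt{2\epsilon}}{\sqrt{\beta}}$ (valid whenever $f(\w_t)-f(\x^*)>\epsilon$), and \cref{lem:gradf_lb} to ensure $\norm{\nabla f(\w_t)} \ge \epsilon/D$, which is what lets me pick $\gamma = \tfrac{\epsilon^{3/2}}{480\beta d D^2\sqrt{\log 480}}\sqrt{2/\beta}$ so that the bias term $8\eta\lambda_t\sqrt d\,\norm{\w_t-\x^*}$ is dominated by half the progress term. Putting these together with $\eta = \tfrac{\sqrt\epsilon}{20\sqrt{d\beta}}$ yields a per-step drift inequality of the form $\E_{\cH_t}[\norm{\w_{t+1}-\x^*}^2] \le \norm{\w_t-\x^*}^2 - c'\tfrac{\epsilon}{d\beta}$ for an absolute constant $c' = (\sqrt2-1)/400$, exactly as in the $m=1$ case of the \cref{thm:batch} computation.

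I would then telescope over $t = 1,\dots,T$ using iterated expectations: either $f(\w_\tau)-f(\x^*)>\epsilon$ held throughout, in which case $\E[\norm{\w_{T+1}-\x^*}^2] \le \norm{\w_1-\x^*}^2 - c'\tfrac{\epsilon T}{d\beta} \le D - c'\tfrac{\epsilon T}{d\beta}$, which is negative (a contradiction) once $T \ge \tfrac{d\beta D}{c'\epsilon} = O(d\beta D/\epsilon)$; or else some iterate $\w_t$ already satisfied $f(\w_t)-f(\x^*)\le\epsilon$. Since the running minimum $\m_t$ tracks $\arg\min$ over $\{\w_1,\dots,\w_t\}$ via the extra comparison query at Line~9, in either case $\E[f(\m_{T+1})]-f(\x^*) \le \epsilon$; the total query count is $2T_\epsilon$ (one perturbed duel plus one bookkeeping comparison per round). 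A small loose end I would close at the end, just as in \cref{thm:batch}, is to verify the side condition $\eta/(\norm{\w_t-\x^*}\sqrt d) \le 1$ used implicitly in bounding $\lambda_t$-terms: by $\beta$-smoothness $\norm{\w_t-\x^*} \ge \sqrt{(f(\w_t)-f(\x^*))}/\sqrt{\beta} \ge \sqrt\epsilon/\sqrt\beta$ on the relevant rounds, so $\eta/(\norm{\w_t-\x^*}\sqrt d) \le 1/(20d) < 1$.

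\textbf{Main obstacle.} The routine part is the quadratic-expansion/telescoping skeleton. The delicate part is the coupled choice of the three parameters $\eta$, $\gamma$, $T$: one must check that with $\gamma$ this small the smoothness-induced bias $\lambda_t$ (which itself depends on $\norm{\nabla f(\w_t)}$, hence on how close $\w_t$ is to optimal) stays uniformly negligible on \emph{all} rounds where $f(\w_t)-f(\x^*)>\epsilon$ — this is where \cref{lem:gradf_lb}'s lower bound $\norm{\nabla f(\w_t)}\ge\epsilon/D$ is essential and where the bounded-domain assumption genuinely helps (it caps $\norm{\w_t-\x^*}$ and hence simplifies the tuning relative to the unbounded analysis of \cite{SKM21}). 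I expect no conceptual difficulty beyond faithfully carrying the constants; the proof is essentially the $m=1$ instantiation of the already-sketched \cref{thm:batch} argument plus the nonexpansiveness of projection.
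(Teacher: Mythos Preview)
Your proposal is correct and follows essentially the same route as the paper's proof: reduce to a per-step drift inequality via projection nonexpansiveness, \cref{thm:normgrad}/\cref{lem:biasgrad}, and Claim~2 of \cref{lem:cvxgrad}, then telescope and argue by contradiction using the running minimum $\m_t$. Two minor cosmetic points: in the single-pair case there are no cross terms ${\g_t^i}^\top\g_t^j$, so the bias term is simply $4\eta\lambda_t\norm{\w_t-\x^*}$ (no extra $\sqrt d$ factor), and the side condition $\eta/(\norm{\w_t-\x^*}\sqrt d)\le 1$ you flag as a ``loose end'' is not actually needed here---it arose in \cref{thm:batch} only to absorb the $\eta^2\lambda_t$ cross-term contribution, which is absent when $m=1$.
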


\begin{proof}[Proof of \cref{thm:single}]
 The proof idea crucially relies on \cref{lem:sign1}, which essentially shows that if we start from an initial point $(\w_1)$, which is more than $\epsilon$-suboptimal, i.e. $f(\w_1) - f(\w^*) > \epsilon$, then we will have $\E[f(\m_{T+1})] - f(\x^*) \le \epsilon$. 

The formal statement is as follows: 
\begin{restatable}[]{lem}{lemsign}
\label{lem:sign1}
Consider $f$ is $\beta$ smooth. Then in Alg. \ref{alg:pngd}, if the initial point $\w_1$ is such that $f(\w_1) - f(\x^*) > \epsilon$ (for $\epsilon > 0$), and the tuning parameters $T$, $ \gamma$ and $\eta$ is as in defined in \cref{thm:single}, we will have $\E[f(\m_{T+1})] - f(\x^*) \le \epsilon$.
\end{restatable}

Given \cref{lem:sign1}, the statement of \cref{thm:single} follows straightforwardly: Note $\m_{t}$ essentially keeps track of $\min_{t \in [T]} \w_t$. Now either $\w_1$ is such that $f(\w_1) - f(\x^*) < \epsilon$, in case the bound of \cref{thm:single} is trivially true as by definition $f(\m_{T+1}) \leq f(\w_1)$. On the other hand, if $f(\w_1) - f(\x^*) > \epsilon$, bound of \cref{thm:single} follows by \cref{lem:sign1}. We discuss the proof of \cref{lem:sign1} below.
\end{proof}

\begin{proof}[Proof of \cref{lem:sign1}]
Our main claim lies in showing that at any round $t$, if the iterate is at least $\epsilon$ away from the optional, i.e. $f(\w_t) - f(\x^*) > \epsilon$, then on expectation $\w_{t+1}$ comes closer to $\x^*$ (in $\ell_2$ distance), compared to $\w_{t}$. The following argument proves it formally:

Consider any $t = 1,2,\ldots T$, such that $f(\w_t) > f(\x^*) + \epsilon$. Let us denote by $n_t = \frac{\nabf(\w_t)}{\|\nabf(\w_t)\|}$, the normalized gradient at point $\w_t$. Now from the update rule, we get that:

\begin{align*}
    \norm{\w_{t+1} - \x^*}^2 \le \norm{\tilde \w_{t+1} - \x^*}^2 \le \norm{\w_{t} - \x^*}^2 - 2 \eta \g_t^\top (\w_t - \x^* ) + \eta^2. 
\end{align*}

where the first inequality holds since projection reduces distance to optimal $\x^*$. Let us denote by $\cH_t$ the history $\{\w_\tau,\u_\tau,\m_\tau\}_{\tau = 1}^{t-1} \cup \w_{t}$ till time $t$. Then conditioning on the history $\cH_t$ till time $t$,  and taking expectation over $\u_t$ we further get:

\[
\E_{\u_t}[\|\w_{t+1} - \x^*\|^2 \mid \cH_t] \le \E_{\u_t}[\|\w_{t} - \x^*\|^2 \mid \cH_t] - 2 \eta \E_{\u_t}[\g_t^\top \mid \cH_t] (\w_t - \x^* ) + \eta^2,
\]
 Further applying \cref{thm:normgrad}, one can get:

\begin{align*}
\E_{\u_t}[\|\w_{t+1} - \x^*\|^2 \mid \cH_t] 
 & 
 \le \|\w_{t} - \x^*\|^2 - 2 \eta \bigg( \E_{\u_t}[\sign(\nabla f(\w_{t}) \dotp \u) \u^\top \frac{(\w_{t} - \x^*)}{\|\w_{t} - \x^*\|}] - 2\lambda_t \bigg) \|\w_{t} - \x^*\| + \eta^2, 
 \\
 & \le \|\w_{t} - \x^*\|^2 - 2\eta\frac{c}{\sqrt d}\n_t^\top(\w_{t} - \x^*) + 4\eta\lambda_t \|\w_{t} - \x^*\|  + \eta^2,
\end{align*}
where recall from \cref{thm:normgrad} and \cref{lem:biasgrad}, $\lambda_t \leq \frac{\beta \gamma \sqrt{d}}{\norm{\nabla f(\x)}}\bigg( 1 + 2\sqrt{\log \frac{\norm{\nabla f(\x)}}{\sqrt d \beta \gamma}} \bigg)$. Further from Claim-2 of \cref{lem:cvxgrad}, we have:
\begin{align*}
\E_{\u_t}[\|\w_{t+1} - \x^*\|^2 \mid \cH_t] 
& \le \|\w_{t} - \x^*\|^2 - 2\eta\frac{c\sqrt{2\epsilon}}{\sqrt {d\beta}} + 4\eta\lambda_t \|\w_{t} - \x^*\|  + \eta^2,
\end{align*}

Now, similar to the derivation followed in \cite{SKM21} (see proof of Lem 6, \cite{SKM21}), choosing $\gamma \le \frac{\|\nabf(\w_{t})\|}{480 \beta d \|\w_{t} - \x^*\| \sqrt{\log 480}}\sqrt{\frac{2 \epsilon}{\beta}}$, we can get:

\begin{align*}
\E_{\cH_t}[\E_{\u_t}[\|\w_{t+1} - \x^*\|^2] \mid \cH_t] &\le \E_{\cH_t}[\|\w_{t} - \x^*\|^2] - \frac{\eta\sqrt{2\epsilon}}{10\sqrt{d \beta}} + \frac{\eta\sqrt{2\epsilon}}{20\sqrt{d \beta}} + \eta^2,
\end{align*}		

so one possible choice of $\gamma$ is $\gamma = \frac{ \epsilon^{3/2}}{480 \beta d D^2\ \sqrt{\log 480}}\sqrt{\frac{2}{\beta}}$ (since $\norm{\nabla f(\x)} \geq \frac{\epsilon}{D}$ for any $\x$ s.t. $f(\x)-f(\x^*) > \epsilon$). Then following from the above equation, we further get: 

\begin{align*}
& \E_{\cH_t}[\E_{\u_t}[\|\w_{t+1} - \x^*\|^2 \mid \cH_t]] \le \E_{\cH_t}[\|\w_{t} - \x^*\|^2] -  \eta \frac{\sqrt{2\epsilon}}{20\sqrt{d \beta}} + \eta^2,\\
		& ~~~~~~~~~~ = \E_{\cH_t}[\|\w_{t} - \x^*\|^2]  - \frac{ (\sqrt 2 -1)\epsilon }{400 d\beta},~~~\Big(\text{since } \eta = \frac{\sqrt{\epsilon}}{20\sqrt{ d \beta}}\Big)\\
		\overset{}{\implies} & \E_{\cH_T}[\|\w_{T+1} - \x^*\|^2] \le \|\w_{1} - \x^*\|^2  - \frac{  (\sqrt 2 -1)\epsilon T}{400 d\beta}, ~\big(\text{summing } t = 1, \ldots T \text{ and laws of iterated expectation}\big)
\end{align*}
	
	Above implies, if indeed $f(\w_{\tau}) - f(\x^*) > \epsilon$ continues to hold for all $\tau = 1,2, \ldots T$, then $\E[\|\w_{T+1} - \x^*\|^2] \le 0$, for $T \ge  \frac{400 d\beta}{(\sqrt 2 -1)\epsilon}(\|\w_{1} - \x^*\|^2)$, which basically implies $\w_{T+1} = \x^*$ (i.e. $f(\w_{T+1}) = f(\x^*)$). Otherwise there must have been a time $t \in [T]$ such that $f(\w_{t}) - f(\x^*) < \epsilon$. This concludes the proof with $T_\epsilon = T$. 
\end{proof}

\section{Some Useful Results:}
\label{app:sign_feedback}

\iffalse%%%%%%%%%%%%%%%%%
\begin{restatable}[]{lem}{tvslogt}
\label{lem:tvslogt}
For any $x > 0$ and $A, B > 1$, then for any $x \ge 4A \log (4 AB)$ it satisfies $x > A\log (B x^2)$.
\end{restatable}

\begin{proof}
l%$t = \frac{16}{\nu^2}\log \frac{16}{\nu^2 \delta}$. 
This follows from the straightforward calculation as shown below:
\begin{align*}
	A\log(B x^2) & = A \log\big(B (4A)^2 (\log (4AB)^2) \big)\\
	& \le A [ \log\big(B^2 (4A)^2\big) + \log \big( (\log (4AB)^2) \big)]\\
	& = 2A \log (4AB) + A \log \big( (\log (4AB)^2) \big)\\
	& \le 2A \log (4AB) + A \log (4AB)^2 = 4A\log 4 AB = x.
\end{align*}
\end{proof}
\fi%%%%%%%%%%%%%%%%%%%%

%\subsection{Normalized Gradient Estimate for Smoothly Convex Functions}
%\label{sec:analysis_smooth}

\iffalse%%%%%%%%%%%%%%%%%%%%%% 
\begin{lem}
	\label{lem:fkm}
Let $g : \R^d \to \R$ be differentiable and let $\u$ be a random unit vector in $\R^d$.
Then
\begin{align*}
    \E[g(\u)\u] 
    = 
    \frac1d \E[\nabla g(u)]
    .
\end{align*}
\end{lem}

\begin{proof}
The above claim follows from Lemma 1 of \cite{Flaxman+04} which shows that for any differentiable function $f : \R^d \to \R$ and any $\x \in \R^d$,
\begin{align*}
    \E[ f(\x+\gamma \u) \u ]
    =
    \frac{\delta}{d} \nabla \E[ f(\x+\gamma \u) ]
    =
    \frac{\delta}{d} \E[ \nabla f(\x+\gamma \u) ]
    .
\end{align*}
Fix $\x=0$ and substitute $f(\z) = g(\tfrac1\gamma \z)$.
Then $\nabla f(\z) = \frac1\gamma \nabla g(\frac1\gamma \z)$, and we obtain:
\begin{align*}
    \E[ g(\u) \u ]
    =
    \frac{1}{d} \E[ \nabla g(\u) ]
    .
\end{align*}
\end{proof}
\fi %%%%%%%%%%%%%%%%%%%%%%%%%%

\begin{restatable}[]{lem}{cvxgrad}
\label{lem:cvxgrad}
Suppose $f: \cD \mapsto \R$ is a convex function such that $f(\y) < f(\x)$. Then $\Big(\frac{\nabla f(\x)}{\|\nabla f(\x)\|_2}\Big)^\top (\y - \x) \le 0$.
 Further if $\z$ is a point such that $f(\z) - f(\x^*) > \epsilon$, then one can show that $-\frac{\nabla f(\z)}{\norm{\nabla f(\z)} }^\top  (z - \x^*) \le - \sqrt{\frac{2\epsilon}{\beta}}$.
\end{restatable}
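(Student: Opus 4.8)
\textbf{Proof proposal for \cref{lem:cvxgrad}.}

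The plan is to establish the two claims separately, both using only first-order convexity and $\beta$-smoothness. For the first claim, I would start from the standard first-order characterization of convexity: for a convex $f$ and any two points $\x,\y \in \cD$, we have $f(\y) \ge f(\x) + \nabla f(\x)^\top(\y - \x)$. Since we are given $f(\y) < f(\x)$, substituting gives $f(\x) > f(\x) + \nabla f(\x)^\top(\y-\x)$, hence $\nabla f(\x)^\top(\y-\x) < 0$. Dividing through by the positive scalar $\|\nabla f(\x)\|_2$ (assuming $\nabla f(\x) \ne 0$; if it is zero then $\x$ is a global minimizer and $f(\y)<f(\x)$ is impossible) yields $\bigl(\nabla f(\x)/\|\nabla f(\x)\|_2\bigr)^\top(\y-\x) \le 0$, which is the first claim.

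For the second claim, the key is to convert the function-value gap $f(\z) - f(\x^*) > \epsilon$ into a lower bound on how far one must travel against the normalized gradient. The idea is: by $\beta$-smoothness, $f$ cannot drop too fast, so a gap of more than $\epsilon$ forces a gap in the inner product $\nabla f(\z)^\top(\z - \x^*)$. Concretely, I would use the smoothness inequality $f(\x^*) \le f(\z) + \nabla f(\z)^\top(\x^* - \z) + \tfrac{\beta}{2}\|\x^* - \z\|^2$, which rearranges to $\nabla f(\z)^\top(\z - \x^*) \ge f(\z) - f(\x^*) - \tfrac{\beta}{2}\|\z-\x^*\|^2 > \epsilon - \tfrac{\beta}{2}\|\z-\x^*\|^2$. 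Combined with convexity, $\nabla f(\z)^\top(\z-\x^*) \ge f(\z)-f(\x^*) > \epsilon$. The normalization step is where one must be slightly careful: I would also invoke $\beta$-smoothness to lower bound $\|\nabla f(\z)\|$ in terms of the suboptimality gap — the standard consequence $f(\z) - f(\x^*) \le \tfrac{1}{2\beta}\|\nabla f(\z)\|^2$ (gradient domination from smoothness) gives $\|\nabla f(\z)\| \ge \sqrt{2\beta(f(\z)-f(\x^*))} > \sqrt{2\beta\epsilon}$. Then
\[
-\frac{\nabla f(\z)^\top(\z-\x^*)}{\|\nabla f(\z)\|} = -\frac{\nabla f(\z)^\top(\z - \x^*)}{\|\nabla f(\z)\|} \le -\frac{f(\z)-f(\x^*)}{\|\nabla f(\z)\|},
\]
and I would bound the right-hand side using $f(\z)-f(\x^*) > \epsilon$ together with an upper bound on $\|\nabla f(\z)\|$; alternatively, and more cleanly, use $f(\z)-f(\x^*) \le \tfrac12 \|\nabla f(\z)\| \cdot \|\z-\x^*\|$ is not tight enough, so the cleanest route is: from $\nabla f(\z)^\top(\z-\x^*) \ge f(\z)-f(\x^*)$ and $\|\nabla f(\z)\|\le \beta\|\z - \x^*\|$ (smoothness, since $\nabla f(\x^*)=0$), we get $\frac{\nabla f(\z)^\top(\z-\x^*)}{\|\nabla f(\z)\|} \ge \frac{f(\z)-f(\x^*)}{\beta \|\z-\x^*\|}$, and then using $f(\z)-f(\x^*) \le \tfrac{\beta}{2}\|\z - \x^*\|^2$ once more (or $\|\z-\x^*\| \le \sqrt{2(f(\z)-f(\x^*))/\alpha}$ if strongly convex — but here we want the smooth-only bound) gives $\frac{f(\z)-f(\x^*)}{\beta\|\z-\x^*\|} \ge \frac{f(\z)-f(\x^*)}{\beta \sqrt{2(f(\z)-f(\x^*))/\beta}} = \sqrt{\tfrac{f(\z)-f(\x^*)}{2\beta}} > \sqrt{\tfrac{\epsilon}{2\beta}}$. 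Wait — this yields $\sqrt{\epsilon/(2\beta)}$ rather than $\sqrt{2\epsilon/\beta}$; I would recheck the constant, and I suspect the intended bound uses $\|\nabla f(\z)\|^2 \ge 2\beta(f(\z)-f(\x^*))$ differently, or simply absorbs a factor-of-2 slack, so I would track constants carefully and present whichever clean bound of the form $-\Theta(\sqrt{\epsilon/\beta})$ the downstream proofs actually need.

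\textbf{Main obstacle.} The first claim is immediate. The real work is in the second claim, and specifically in pinning down the exact constant: one must chain together (i) convexity to get $\nabla f(\z)^\top(\z-\x^*) \ge f(\z)-f(\x^*)$, (ii) smoothness $\|\nabla f(\z)\| \le \beta\|\z-\x^*\|$, and (iii) smoothness again in the form $f(\z)-f(\x^*) \le \tfrac{\beta}{2}\|\z-\x^*\|^2$, and verify these combine to give the claimed $\sqrt{2\epsilon/\beta}$ (up to possibly a harmless constant). The slightly delicate point is that $\|\nabla f(\z)\|$ appears in the denominator, so I need a two-sided handle on it — an upper bound via smoothness and the lower bound $\|\nabla f(\z)\| > \sqrt{2\beta\epsilon}$ — to ensure the normalized quantity is bounded away from zero by the stated amount. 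I would also make sure to handle the degenerate case $\nabla f(\z) = \0$, which cannot occur here since $f(\z) - f(\x^*) > \epsilon > 0$ rules out $\z$ being a minimizer.
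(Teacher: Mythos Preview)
Your argument for the first claim is correct and essentially identical to the paper's.

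For the second claim, however, there is a genuine gap. Your chain reaches
\[
\frac{\nabla f(\z)^\top(\z-\x^*)}{\|\nabla f(\z)\|} \ge \frac{f(\z)-f(\x^*)}{\beta\|\z-\x^*\|},
\]
and then you invoke $f(\z)-f(\x^*) \le \tfrac{\beta}{2}\|\z-\x^*\|^2$ to replace $\|\z-\x^*\|$ in the denominator by $\sqrt{2(f(\z)-f(\x^*))/\beta}$. But this smoothness inequality gives a \emph{lower} bound on $\|\z-\x^*\|$, not an upper bound, so replacing it in the denominator flips the inequality the wrong way: you only get $\frac{f(\z)-f(\x^*)}{\beta\|\z-\x^*\|} \le \sqrt{(f(\z)-f(\x^*))/(2\beta)}$, which is useless for a lower bound. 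So the issue is not just the constant; the whole last step fails. To make your ``direct'' route work you would need a stronger numerator bound, e.g.\ co-coercivity $\nabla f(\z)^\top(\z-\x^*) \ge \tfrac{1}{\beta}\|\nabla f(\z)\|^2$ (valid for convex $\beta$-smooth $f$ with $\nabla f(\x^*)=0$), which together with $\|\nabla f(\z)\|^2 \ge 2\beta(f(\z)-f(\x^*))$ gives exactly $\sqrt{2\epsilon/\beta}$.

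The paper takes a different and slicker route: it constructs the auxiliary point $\tilde\z := \x^* + \sqrt{2\epsilon/\beta}\,\n_\z$ where $\n_\z = \nabla f(\z)/\|\nabla f(\z)\|$. By $\beta$-smoothness at $\x^*$ (where $\nabla f(\x^*)=0$), $f(\tilde\z) \le f(\x^*) + \tfrac{\beta}{2}\cdot\tfrac{2\epsilon}{\beta} = f(\x^*)+\epsilon < f(\z)$, so Claim~1 applied with $\x=\z$, $\y=\tilde\z$ gives $\n_\z^\top(\tilde\z - \z) \le 0$. Expanding $\tilde\z$ yields $\n_\z^\top(\x^*-\z) + \sqrt{2\epsilon/\beta} \le 0$, which is exactly the claim. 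This avoids any need to control $\|\z-\x^*\|$ or $\|\nabla f(\z)\|$ separately.
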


\begin{proof}
\textbf{Proof of Claim-1:}	To show the first part of the claim, note that since $f$ is convex,
	\[
	f(\y) \ge f(\x) + \nabla f(\x)^\top (\y - \x) \implies \nabla f(\x)^\top (\y - \x) \le f(\y) - f(\x) \le 0,
	\]
	which proves the claim by dividing both sides with $\|\nabla f(\x)\|_2$.

\textbf{Proof of Claim-2:}
Assume another point $\tilde \z:= \x^* + \sqrt{\frac{2\epsilon}{\beta}}\n_z$, where we denote by $n_{\z}:= \frac{\nabla f(\z)}{\norm{\nabla f(\z)}}$. 
	Now using $\beta$-smoothness of $f$ we have:
 $f(\tilde \z) \le f(\x^*) + \nabla f(\x^*)(\tilde \z - \x^*) + \frac{\beta}{2}\|\tilde \z - \x^*\|^2 = f(\x^*) + \epsilon$. Thus we have $f(\tilde \z) < f(\x^*) + \epsilon < f(\z)$, and hence from \cref{lem:cvxgrad}, we get $\n_\z^\top (\tilde \z - \z) \le 0$. But note this further implies $\n_\z^\top \bigg(\x^* +\sqrt{\frac{2\epsilon}{\beta}}\n_\z - \z \bigg) \le 0 \implies -\n_\z^\top  (\z - \x^*) \le - \sqrt{\frac{2\epsilon}{\beta}}$.
\end{proof}

\begin{restatable}[]{lem}{gradflb}
	\label{lem:gradf_lb}
	Suppose $f: \cD \mapsto \R$ is a convex function for some convex set $\cD \subseteq \R^d$ such that for any $\x,\y \in \R^d$, $f(\x) - f(\y) > \epsilon$. Then this implies $\| \nabla f(\x)\| > \frac{\epsilon}{\|\x - \y\|}$. 
	Further, assuming $D: = \max_{\x, \y \in \R^d}\|\x-\y\|_2$, we get $\| \nabla f(\x)\| > \frac{\epsilon}{D}$ for any $\x \in \cD$.
\end{restatable}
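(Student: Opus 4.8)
The plan is to derive this directly from the first-order characterization of convexity combined with Cauchy--Schwarz; no iteration or auxiliary construction is needed. First I would write down the gradient inequality for $f$ at the point $\x$, evaluated against $\y$, namely $f(\y) \ge f(\x) + \nabla f(\x)^\top(\y-\x)$, and rearrange it into the form $f(\x) - f(\y) \le \nabla f(\x)^\top(\x - \y)$. Next I would upper-bound the inner product on the right by $\|\nabla f(\x)\|\,\|\x-\y\|$ via Cauchy--Schwarz. Then, plugging in the hypothesis $f(\x)-f(\y) > \epsilon$, I get $\|\nabla f(\x)\|\,\|\x-\y\| > \epsilon$; since the hypothesis forces $\x \neq \y$ (otherwise $f(\x)=f(\y)$, contradicting $f(\x)-f(\y)>\epsilon>0$), dividing by the strictly positive quantity $\|\x-\y\|$ yields the first bound $\|\nabla f(\x)\| > \epsilon/\|\x-\y\|$.

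For the ``further'' part, I would simply observe that $\|\x-\y\| \le D$ for every pair by the definition of the diameter $D = \max_{\x,\y}\|\x-\y\|_2$, so that $\epsilon/\|\x-\y\| \ge \epsilon/D$, and the first bound immediately upgrades to $\|\nabla f(\x)\| > \epsilon/D$ for any $\x \in \cD$ (this is exactly the form in which the lemma is invoked when tuning $\gamma$ in \cref{thm:batch} and \cref{thm:argmin}).

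I do not expect any genuine obstacle here: the whole argument is essentially two lines, and it parallels Claim-1 of \cref{lem:cvxgrad}. The only points worth a sentence of care are (i) ruling out the degenerate case $\|\x-\y\|=0$ before dividing, which is immediate from $\epsilon>0$; and (ii) noting that if one only assumes $f$ convex rather than differentiable, then $\nabla f(\x)$ should be read as an arbitrary subgradient at $\x$, and the identical computation bounds its norm from below. Neither issue affects the use of the lemma in the paper, where $f$ is always taken to be ($\beta$-)smooth.
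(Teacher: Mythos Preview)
Your proposal is correct and follows essentially the same argument as the paper: the paper's proof is the one-line chain $\epsilon < f(\x)-f(\y) \le \nabla f(\x)^\top(\x-\y) \le \|\nabla f(\x)\|\,\|\x-\y\|$ from convexity and Cauchy--Schwarz, then divide. Your additional care about ruling out $\x=\y$ before dividing and the subgradient remark are minor embellishments the paper omits, but the route is identical.
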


\begin{proof}
	The proof simply follows using convexity of $f$ as:
	%\vspace{-10pt}
	\begin{align*}
		f(\x) - f(\y) > \epsilon & \implies \epsilon < f(\x) - f(\y) \le \nabla f(\x)(\x - \y) \le \| \nabla f(\x)\|_2 \|\x - \y\|_2 \\
		& \implies \| \nabla f(\x)\| \ge \frac{\epsilon}{\|\x - \y\|}.
	\end{align*}
	\vspace{-20pt}
\end{proof}

As shown in \cite{SKM21}, using the above result one can obtain the normalized gradient estimate of $f$ at any given point $\x$, as described below: %The complete proof is moved to Appendix. \ref{app:est_ngrad}.

\begin{thm}[Adapted from \cite{SKM21} with Slight Modifications]
	\label{thm:normgrad}	
	If $f$ is $\beta$-smooth, for any $\u \sim \text{Unif}(\cS_{d}(1))$, $\delta \in (0,1)$ and vector $\b \in \cS_d(1)$:
	\begin{align*}
		\E_{\u}[\sign (f(\x+\delta \u)-f(\x-\delta \u)) \u^\top \b] \le \frac{c}{\sqrt{d}} \frac{\nabla f(\x)^\top}{\norm{\nabla f(\x)}}\b  + 	2\lambda
	,
	\end{align*}
	for some universal constant $c \in [\tfrac{1}{20},1]$, and $\lambda \leq \frac{\beta \gamma \sqrt{d}}{\norm{\nabla f(\x)}}\bigg( 1 + 2\sqrt{\log \frac{\norm{\nabla f(\x)}}{\sqrt d \beta \gamma}} \bigg)$. 
\end{thm}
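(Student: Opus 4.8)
The plan is to split $\E_{\u}[\sign(f(\x+\delta\u)-f(\x-\delta\u))\,\u^\top\b]$ into a first‑order part that depends only on $\nabla f(\x)$ and a small curvature‑induced bias, and to bound the two pieces separately using results already established.

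First I would reduce the sign of the symmetric two‑point difference to the sign of the directional derivative. Applying $\beta$‑smoothness at $\x$ along $\pm\delta\u$ and subtracting the two resulting inequalities gives
\[
\bigl| f(\x+\delta\u) - f(\x-\delta\u) - 2\delta\,\u^\top\nabla f(\x) \bigr| \;\le\; \beta\delta^2 ,
\]
so on the event $\{\,|\u^\top\nabla f(\x)| > \tfrac12\beta\delta\,\}$ one has $\sign(f(\x+\delta\u)-f(\x-\delta\u)) = \sign(\u^\top\nabla f(\x))$. To control the complementary event I would write $\u=\v/\norm{\v}$ with $\v\sim\cN(\0_d,\I_d)$, so it is contained in $\{\,|\v^\top\nabla f(\x)| \le \tfrac12\beta\delta\,\norm{\v}\,\}$; I would then split this using a $\chi^2$‑tail bound for $\norm{\v}$ and the Gaussian anti‑concentration estimate $\Pr[|\v^\top\nabla f(\x)|\le a]\le a/\norm{\nabla f(\x)}$, and optimize over the confidence parameter to obtain a failure probability at most $\lambda$ with $\lambda$ exactly as in the statement. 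Since $|\u^\top\b|\le 1$, the integrands $\sign(f(\x+\delta\u)-f(\x-\delta\u))\,\u^\top\b$ and $\sign(\u^\top\nabla f(\x))\,\u^\top\b$ agree off this event and differ by at most $2$ on it, hence their expectations are within $2\lambda$. This is \cref{lem:newbiasgrad} in the degenerate case $\ell_m=1$, where the structured query set collapses to a single symmetric pair, so I would simply invoke it.

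Next I would identify the first‑order term. By rotational invariance of the uniform distribution on $\cS_d(1)$, $\E_{\u}[\sign(\g^\top\u)\,\u]$ is parallel to $\g$ and (taking $\g=\e_1$ without loss of generality) has magnitude $\E[|u_1|]$; the bounds $\E[|u_1|]\le\sqrt{\E[u_1^2]}=1/\sqrt d$ and $\E[|u_1|]\ge 1/(20\sqrt d)$ — the latter via the representation $u_1=v_1/\norm{\v}$, a truncation argument, and Markov's inequality — are exactly \cref{lem:normgrad}. Thus $\E_{\u}[\sign(\u^\top\nabla f(\x))\,\u^\top\b] = \tfrac{c}{\sqrt d}\,\tfrac{\nabla f(\x)^\top}{\norm{\nabla f(\x)}}\b$ for a universal $c\in[\tfrac1{20},1]$. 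Adding the two estimates yields
\[
\E_{\u}[\sign(f(\x+\delta\u)-f(\x-\delta\u))\,\u^\top\b] \;\le\; \frac{c}{\sqrt d}\,\frac{\nabla f(\x)^\top}{\norm{\nabla f(\x)}}\b \;+\; 2\lambda ,
\]
which is the claim; the symmetric argument gives the matching two‑sided bound.

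The one genuinely delicate step is the bias estimate — showing the $O(\beta\delta^2)$ smoothness error is too small to flip the sign of $\u^\top\nabla f(\x)$ except on a set of probability $\lesssim\lambda$. The subtlety is that $\u$ is uniform on the sphere, not Gaussian, so $\u^\top\nabla f(\x)$ has no explicit density; the remedy is the reduction $\u=\v/\norm{\v}$, after which the logarithmic factor in $\lambda$ comes out of balancing the $\chi^2$‑tail for the denominator against the Gaussian density bound for the numerator. Everything else — second‑order Taylor with the smoothness constant, a union bound, and $|\u^\top\b|\le 1$ — is routine bookkeeping.
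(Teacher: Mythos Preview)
Your proposal is correct and follows essentially the same approach as the paper: split the expectation into the first-order part $\E_{\u}[\sign(\nabla f(\x)\cdot\u)\,\u^\top\b]$, handled by \cref{lem:normgrad}, plus a curvature-induced bias of size at most $2\lambda$, handled by the Gaussian reduction $\u=\v/\norm{\v}$, a $\chi^2$-tail bound, and Gaussian anti-concentration. The only cosmetic difference is that you invoke \cref{lem:newbiasgrad} specialized to $\ell_m=1$, whereas the paper states and uses the single-direction version directly as \cref{lem:biasgrad}; the content is identical.
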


\textbf{Proof of \cref{thm:normgrad}.}
The proof mainly lies on the following lemma that shows how to the comparison feedback of two close points, $\x+ \gamma \u$ and $\x- \gamma \u$, can be used to recover a directional information of the gradient of $f$ at point $\x$. 
\begin{restatable}[]{lem}{biasgrad}
	\label{lem:biasgrad}	
	If $f$ is $\beta$-smooth, for any $\u \sim \text{Unif}(\cS_{d}(1))$, and $ \gamma \in (0,1)$, 
	then with probability at least $1-\lambda$ where $\lambda=\frac{\beta \gamma \sqrt{d}}{\norm{\nabla f(\x)}}\bigg( 1 + 2\sqrt{\log \frac{\norm{\nabla f(\x)}}{\sqrt d \beta \gamma}} \bigg)$, we have
	\begin{align*}
		\sign(f(\x+ \gamma \u)-f(\x- \gamma \u))\u =
		\sign(\nabla f(\x) \dotp \u) \u.
	\end{align*}
	
	Consequently, for any vector $\b \in \cS_d(1)$ we have
	$
		\Big|\E_{\u}[\sign  (f(\x+ \gamma \u)-f(\x- \gamma \u)) \u^\top \b] - \E_{\u}[\sign(\nabla f(\x) \dotp \u) \u^\top \b] \Big|
		\leq 
		2\lambda.
	$
\end{restatable}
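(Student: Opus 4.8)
\textbf{Proof plan for \cref{lem:biasgrad}.}
The plan is to first establish the high-probability identity and then read off the expectation bound as an immediate corollary; this is the $\ell_m=1$ specialization of the argument used later for \cref{lem:newbiasgrad}, following \cite{SKM21}. First I would apply $\beta$-smoothness of $f$ at $\x$ along the directions $\pm\gamma\u$: smoothness gives $\abs{f(\x+\gamma\u) - f(\x) - \gamma\,\nabla f(\x)\dotp\u} \le \tfrac12\beta\gamma^2$ together with the analogous bound for $f(\x-\gamma\u)$, and subtracting the two yields
\[
\abs{\,f(\x+\gamma\u) - f(\x-\gamma\u) - 2\gamma\,\nabla f(\x)\dotp\u\,} \le \beta\gamma^2 .
\]
Consequently, on the event $\{\,\abs{\nabla f(\x)\dotp\u} > \beta\gamma\,\}$ the perturbation $\beta\gamma^2$ is dominated by $\abs{2\gamma\,\nabla f(\x)\dotp\u}$, so $\sign\bign{f(\x+\gamma\u)-f(\x-\gamma\u)} = \sign(\nabla f(\x)\dotp\u)$, which already gives $\sign\bign{f(\x+\gamma\u)-f(\x-\gamma\u)}\,\u = \sign(\nabla f(\x)\dotp\u)\,\u$ on that event.

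It then remains to bound $\Pr_\u[\,\abs{\nabla f(\x)\dotp\u} \le \beta\gamma\,]$ by $\lambda$. Here I would use the representation $\u = \v/\norm{\v}$ with $\v\sim\cN(\0_d,\I_d)$, so that the bad event becomes $\abs{\v\dotp\nabla f(\x)} \le \beta\gamma\,\norm{\v}$. Union-bounding over whether $\norm{\v}$ is atypically large, for a free parameter $\gamma'\in(0,1)$ one has
\[
\Pr\left[\,\abs{\v\dotp\nabla f(\x)} \le \beta\gamma\,\norm{\v}\,\right] \le \Pr\left[\,\abs{\v\dotp\nabla f(\x)} \le 2\beta\gamma\sqrt{d\log(1/\gamma')}\,\right] + \Pr\left[\,\norm{\v}^2 > 2d\log(1/\gamma')\,\right],
\]
where the second term is at most $\gamma'$ by the $\chi^2$ tail bound (\cref{lem:chi}). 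For the first term, since $\v\dotp\nabla f(\x)\sim\cN\bign{0,\norm{\nabla f(\x)}^2}$, the elementary Gaussian anti-concentration estimate $\Pr[\,\abs{\cN(0,\sigma^2)} \le a\,] \le a/\sigma$ with $a = 2\beta\gamma\sqrt{d\log(1/\gamma')}$ gives a bound of $2\beta\gamma\sqrt{d\log(1/\gamma')}/\norm{\nabla f(\x)}$. Adding the two contributions and optimizing by the choice $\gamma' = \beta\gamma\sqrt d/\norm{\nabla f(\x)}$ produces the stated value $\lambda = \tfrac{\beta\gamma\sqrt d}{\norm{\nabla f(\x)}}\bigsn{1 + 2\sqrt{\log\tfrac{\norm{\nabla f(\x)}}{\sqrt d\,\beta\gamma}}}$, after rewriting $\log(1/\gamma')$.

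Finally, the expectation claim follows mechanically: the two random vectors coincide except on an event of probability at most $\lambda$, and for any unit vector $\b$ one has $\abs{\sign(\cdot)\,\u^\top\b} \le \norm{\u}\norm{\b} = 1$ by Cauchy--Schwarz, so the expectations $\E_\u[\sign\bign{f(\x+\gamma\u)-f(\x-\gamma\u)}\u^\top\b]$ and $\E_\u[\sign(\nabla f(\x)\dotp\u)\u^\top\b]$ differ by at most $2\lambda$. I expect the only delicate point to be the probabilistic bookkeeping in the middle step — combining the union bound, the $\chi^2$ tail, and the optimizing choice of $\gamma'$ into the clean closed form for $\lambda$ — while the rest is a direct consequence of $\beta$-smoothness and a one-dimensional Gaussian anti-concentration bound.
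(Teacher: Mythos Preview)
Your proposal is correct and matches the paper's own proof essentially step for step: the smoothness-based bound $\abs{f(\x+\gamma\u)-f(\x-\gamma\u)-2\gamma\,\nabla f(\x)\dotp\u}\le\beta\gamma^2$, the reduction to bounding $\Pr_\u[\abs{\nabla f(\x)\dotp\u}\le\beta\gamma]$, the Gaussian representation $\u=\v/\norm{\v}$ with the union bound over $\norm{\v}$ via \cref{lem:chi}, the one-dimensional Gaussian anti-concentration, the optimizing choice $\gamma'=\beta\gamma\sqrt d/\norm{\nabla f(\x)}$, and the final ``differ by at most $2$ on an event of probability $\le\lambda$'' argument are all identical to what the paper does.
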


\begin{rem}[Ensuring $\lambda$ denotes a valid probability]
It is important and assuring to note that when for any $\x \in \cD$ such that $f(\x) - f(\x^*) > \epsilon$ (which in turn implies $\norm{\nabla f(\x)} \geq \frac{\epsilon}{D}$ by \cref{lem:gradf_lb}), $\lambda \in [0,1]$ for any choice of $\gamma \in [0,\frac{\epsilon}{\beta D \sqrt d}]$ (Note we respect this in our choice of $\gamma$ for the algorithm guarantees, e.g. \cref{thm:single}, \cref{thm:batch}, etc).
\end{rem}

The result of Thm. \ref{thm:normgrad} now simply follows by combining the guarantees of Lem. \ref{lem:normgrad} and \ref{lem:biasgrad}.
$\hfill \square$

\begin{proof}[Proof of \cref{lem:biasgrad}]
From smoothness we have
\begin{align*}
     \gamma \u \dotp \nabla f(\x) - \tfrac12 \beta \gamma^2
    &\leq
    f(\x+ \gamma \u) - f(\x) 
    \leq
     \gamma \u \dotp \nabla f(\x) + \tfrac12 \beta \gamma^2
    ;
    \\
% \end{align*}
% Likewise,
% \begin{align*}
    - \gamma \u \dotp \nabla f(\x) - \tfrac12 \beta \gamma^2
    &\leq
    f(\x- \gamma \u) - f(\x) 
    \leq
    - \gamma \u \dotp \nabla f(\x) + \tfrac12 \beta \gamma^2
    .
\end{align*}
Subtracting the inequalities, we get
\begin{align*}
    \abs*{ f(\x+ \gamma \u) - f(\x-  \gamma \u) - 2 \gamma \u \dotp \nabla f(\x) }
    \leq
    \beta \gamma^2
    .
\end{align*}
Therefore, if $\beta \gamma^2 \leq  \gamma \abs{ \u \dotp \nabla f(\x) }$, we will have that $\sign(f(\x+ \gamma \u) - f(\x-  \gamma \u)) = \sign(\u \dotp \nabla f(\x))$. 
Let us analyse $\Pr_{\u}(\beta \gamma \geq \abs{ \u \dotp \nabla f(\x) })$.
We know for $\v \sim \cN(\0_d,\cI_d)$, $\u:= \v/\|\v\|$ is uniformly distributed on the unit sphere. Then can write:
\begin{align*}
    \P_{\u}\brk!{ |\u \dotp \nabla f(\x)| \leq \beta \gamma } 
    &=
    \P_{\v}\brk!{ \abs{\v \dotp \nabla f(\x)} \leq \beta \gamma \norm{\v}}
    \\
    &\leq
    \P_{\v}\brk!{ \abs{\v \dotp \nabla f(\x)} \leq 2\beta \gamma \sqrt{d\log (1/ \gamma')}} + \P_\v \brk{\norm{\v} \geq 2\sqrt{d\log(1/ \gamma')}}
    \\
    &\leq 
    \P_{\v}\brk!{ \abs{\v \dotp \nabla f(\x)} \leq 2\beta \gamma \sqrt{d\log( 1/ \gamma')}} + \gamma'
    ,
\end{align*}
where the final inequality is since $\P_\v \brk{\norm{\v}^2 \leq 2d\log(1/ \gamma')} \geq 1- \gamma'$ for any $ \gamma'$ (see Lemma~\ref{lem:chi}).
On the other hand, since $\v \dotp \nabla f(\x) \sim \cN(0,\|\nabla f(\x)\|^2)$, we have for any $\gamma>0$ that
\begin{align*}
    \Pr\brk*{ |\v \dotp \nabla f(\x)| \leq \gamma }
    \leq
    \frac{2\gamma}{\|\nabla f(\x)\| \sqrt{2\pi}}
    \leq
    \frac{\gamma}{\|\nabla f(\x)\|}
    .
\end{align*}
Combining the inequalities, we have that $\sign(f(\x+ \gamma \u) - f(\x-  \gamma \u)) = \sign(\u \dotp \nabla f(\x))$ except with probability at most
\begin{align*}
    \inf_{ \gamma'>0} \bigg\{ \gamma' + \frac{2\beta \gamma \sqrt{d\log( 1/ \gamma')}}{\|\nabla f(\x)\|}\bigg\} = \lambda \text{(say)}
   ,
\end{align*}
and further choosing $\gamma' = \frac{\beta \gamma \sqrt{d}}{\norm{\nabla f(\w)}}$, we get that $\lambda \leq \frac{\beta \gamma \sqrt{d}}{\norm{\nabla f(\x)}}\bigg( 1 + 2\sqrt{\log \frac{\norm{\nabla f(\x)}}{\sqrt d \beta \gamma}} \bigg)$. 
As for the claim about the expectation, note that for any vector $\b \in \cS_d(1)$,
\begin{align*}
\Big|\E_{\u}&[\sign(f(\x+ \gamma \u)-f(\x- \gamma \u)) \u\tr \b] - \E_{\u}[\sign(\nabla f(\x) \dotp \u) \u\tr \b] \Big|
\leq 
2\lambda
,
\end{align*}
since with probability $1-\lambda$ the two expectations are identical, and otherwise, they  differ by at most $2$.
\end{proof}

\begin{restatable}[]{lem}{chi} 
	\label{lem:chi}
	For $\v \sim \cN(\0_d,\cI_d)$ and any $\lambda > 0$, it holds that $\norm{\v}^2 \leq d+4\log(1/\lambda)$ with probability at least $1-\lambda.$
\end{restatable}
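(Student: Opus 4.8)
The plan is to prove this by the standard Chernoff (exponential-moment) argument for the chi-squared distribution, since $\norm{\v}^2 = \sum_{i=1}^d v_i^2$ with $v_i \overset{\text{iid}}{\sim}\cN(0,1)$ is exactly a $\chi^2_d$ random variable. First I would recall the moment generating function: for $0<t<\tfrac12$, $\E[e^{t\norm{\v}^2}] = \prod_{i=1}^d \E[e^{t v_i^2}] = (1-2t)^{-d/2}$, which is finite precisely on $t\in(0,\tfrac12)$. Then, for a threshold $a>d$, Markov's inequality applied to $e^{t\norm{\v}^2}$ gives $\Pr[\norm{\v}^2\ge a]\le e^{-ta}(1-2t)^{-d/2}$; choosing the minimizing value $t=\tfrac12(1-d/a)$ yields the clean bound $\Pr[\norm{\v}^2\ge a]\le \exp\!\big(-\tfrac12(a-d)+\tfrac{d}{2}\ln(a/d)\big)$.

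The second step is to substitute $a = d + 4\log(1/\lambda)$ and argue the exponent is at most $-\log(1/\lambda)$. Writing $a/d = 1+s$ with $s=\tfrac{4}{d}\log(1/\lambda)$ and using an elementary second-order bound on $\ln(1+s)$ reduces the claim to a one-variable inequality in $s$; equivalently, one can invoke the sharp one-sided chi-squared tail estimate $\Pr[\norm{\v}^2\ge d+2\sqrt{dx}+2x]\le e^{-x}$ with $x=\log(1/\lambda)$ and then absorb the cross term $2\sqrt{dx}$ into $d$ and $2x$. Complementing the event then gives $\Pr[\norm{\v}^2\le d+4\log(1/\lambda)]\ge 1-\lambda$, as claimed. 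No probabilistic ingredient beyond the one-dimensional MGF computation is needed — the whole argument is an exponential-moment calculation followed by bookkeeping.

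The main obstacle is precisely that bookkeeping in the second step: the true deviation of $\chi^2_d$ is of size $d+\Theta\big(\sqrt{d\log(1/\lambda)}+\log(1/\lambda)\big)$, so obtaining the tidy form $d+4\log(1/\lambda)$ requires the $\sqrt{d\log(1/\lambda)}$ cross term to be controlled; this is cleanest when $\log(1/\lambda)$ is not too small relative to $d$ (so that $\sqrt{d\log(1/\lambda)}\lesssim \log(1/\lambda)$), and in the complementary regime one should either retain the $2\sqrt{dx}$ term explicitly or specialize $\lambda$ to the range used in the applications (e.g. in \cref{lem:biasgrad}). Everything else — the MGF identity, the Markov step, and the optimization over $t$ — is entirely routine.
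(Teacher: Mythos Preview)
Your approach is the same Chernoff/exponential-moment argument the paper uses. The paper's version is shorter only because it skips the optimization over the exponent: it writes $\Pr(X\geq t)\leq e^{-zt}(1-2z)^{-d/2}\leq e^{-z(t-d)}$ and then takes $z=\tfrac14$, $t=d+4\log(1/\lambda)$. That last inequality, however, points the wrong way: since $\ln(1-2z)\leq -2z$ for $z\in(0,\tfrac12)$ one has $(1-2z)^{-d/2}\geq e^{zd}$, not $\leq$ (at $z=\tfrac14$ this is $2^{d/2}>e^{d/4}$). So the displayed bound in the paper's proof does not actually follow.

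Your optimized Chernoff bound is the correct route, and the ``main obstacle'' you flag is genuine rather than cosmetic. With $a=d+4\log(1/\lambda)$ the requirement reduces to $\ln(1+s)\leq s/2$ for $s=\tfrac{4}{d}\log(1/\lambda)$, which fails whenever $s$ is small; in fact the lemma as stated for ``any $\lambda>0$'' cannot hold, since for any fixed $\lambda\in(0,\tfrac12)$ the CLT gives $\Pr[\norm{\v}^2>d+4\log(1/\lambda)]\to\tfrac12>\lambda$ as $d\to\infty$. Your two suggested fixes---retain the Laurent--Massart cross term $2\sqrt{dx}$, or restrict to the small-$\lambda$ regime actually used in \cref{lem:biasgrad} and \cref{lem:newbiasgrad}---are exactly right; the downstream uses are unaffected because there the parameter $\gamma'$ is chosen small and those proofs in any case invoke the looser threshold $2d\log(1/\gamma')$ rather than $d+4\log(1/\gamma')$.
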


\begin{proof}
Let $X = \norm{\v}^2$. Then $X$ is distributed Chi-squared with $d$ degrees of freedom, and so its moment generating function is $\E[e^{zX}] = (1-2z)^{-d/2}$ for $z<1/2$.
Using Markov's inequality we have, for all $t>0$ and $0<z<1/2$,
\begin{align*}
    \Pr(X \geq t)
    =
    \Pr(e^{zX} \geq e^{zt})
    \leq
    e^{-zt}\E[e^{zX}]
    =
    e^{-zt} (1-2z)^{-d/2}
    \leq
    e^{-z(t-d)}
    .
\end{align*}
Choosing $z=1/4$ and $t = d+4\log(1/\lambda)$ makes the right-hand side smaller than $\lambda$, as we require.
\end{proof}

}

\end{document}